\documentclass{article}

\usepackage[main,final]{neurips_2026}
\usepackage{hyperref}
\usepackage{amsmath}
\usepackage{mathtools}
\usepackage{amssymb}
\usepackage{mathtools}
\usepackage{amsthm}
\usepackage{fmtcount}
\usepackage{tcolorbox}
\usepackage{subcaption}

\usepackage[capitalize,noabbrev]{cleveref}

\theoremstyle{plain}
\newtheorem{theorem}{Theorem}[section]
\newtheorem{proposition}[theorem]{Proposition}
\newtheorem{lemma}[theorem]{Lemma}
\newtheorem{corollary}[theorem]{Corollary}
\theoremstyle{definition}

\newtheorem{assumption}[theorem]{Assumption}
\theoremstyle{remark}
\newtheorem{remark}[theorem]{Remark}

\def\R{\mathbb{R}}

\def\E{\mathbb{E}}
\def\P{\mathbb{P}}
\def\I{\mathbb{I}}

\def\S{\mathcal{S}}

\def\G{\mathcal{G}}

\def\X{\mathcal{X}}
\def\Y{\mathcal{Y}}
\def\z{\mathcal{z}}

\def\F{\mathcal{F}}

\def\F{\mathcal{F}}
\def\L{\mathcal{L}}

\def\x{\mathbf{x}}
\def\y{\mathbf{y}}
\def\w{\mathbf{w}}

\def\v{\mathbf{v}}
\def\z{\mathbf{z}}
\def\e{\mathbf{\epsilon}}

\usepackage{extarrows}
\usepackage{mathrsfs}

\newcommand{\set}[2]{\left\{#1\;\middle\vert\;#2\right\}}

\newcommand{\setm}[3]{\left\{#2\vphantom{\raisebox{#1ex}{}}\;\middle\vert\;\vphantom{\raisebox{#1ex}{}}#3\right\}}

\newcommand{\ykh}[1]{\left(#1\right)}
\newcommand{\zkh}[1]{\left[#1\right]}
\newcommand{\hkh}[1]{\left\{#1\right\}}
\newcommand{\norm}[1]{\left\|#1\right\|}
\newcommand{\abs}[1]{\left|#1\right|}
\newcommand{\mc}{\mathcal}
\newcommand{\mb}{\mathbb}

\newcommand{\mr}{\mathrm}

\newcommand{\mbR}{\mathbb{R}}

\newcommand{\bm}{\boldsymbol}
\newcommand{\bigykh}[1]{\big(#1\big)}

\newcommand{\ceil}[1]{\left \lceil#1\right\rceil} 
\newcommand{\flr}[1]{\left \lfloor#1\right\rfloor} 

\newcommand{\fdnn}{\mathcal{F}_{\mathtt{N}\mkern-1.55mu\mathtt{N}}}

\newcommand{\fscoring}{\mathcal{F}_{\mathtt{S}}}

\def\[#1\]{\begin{align*}#1\end{align*}}

\newcommand{\pddm}{{\mathtt{P}\mkern-3.3mu\mathtt{D}}}

\newcommand{\zzhcyi}{C_1}
\newcommand{\zzhcer}{C_2}

\usepackage[textsize=tiny]{todonotes}

\usepackage[utf8]{inputenc} 
\usepackage[T1]{fontenc}    
\usepackage{hyperref}       
\usepackage{url}            
\usepackage{booktabs}
\usepackage{enumitem}
\usepackage{amsfonts}       
\usepackage{nicefrac}       
\usepackage{microtype}      
\usepackage{xcolor}         

\title{Approximation in Contrastive Representation Learning }

\author{%
  Yuanfan Li \\
   School of mathematics and statistics \\
 University of Sydney\\
  \texttt{yuli0846@uni.sydney.edu.au} \\
   \And
 Zihan Zhang \\
   School of mathematics and statistics \\
 University of Sydney\\
 \texttt{zihan.zhang@sydney.edu.au} \\
     \And
 Yiming Ying \thanks{Corresponding author} \\
 School of mathematics and statistics \\
 University of Sydney\\
  \texttt{yiming.ying@sydney.edu.au}
   \And
Ding-Xuan Zhou \\
  School of mathematics and statistics \\
 University of Sydney\\
  \texttt{dingxuan.zhou@sydney.edu.au} \\
}

\begin{document}

\maketitle

\begin{abstract}
  Contrastive representation learning (CRL) is a fundamental paradigm for
  learning transferable representations.  Recent work
  identifies its population target as the pointwise mutual information (PMI)
  \(
    \log\frac{p(\x,\y)}{p_\X(\x)p_\Y(\y)}
  \)
  between two modalities \(\X,\Y\), where \(p\) is the joint density function and
  \(p_\X,p_\Y\) are the marginals.  However, it remains unclear why standard inner-product scores \(f(\x)^\top g(\y)\) can effectively approximate a generally nonseparable population target PMI. In this paper, we address this question from the aspect of approximation for CRL. By linking PMI to a compact operator, we show that \(D\)-dimensional inner-product models perform rank-\(D\) spectral approximation, with error controlled by the spectral tail. We then study neural network realizations: in general settings, we show that deep neural network scoring functions are universally consistent, and the approximation error rates depend on the smoothness of the PMI function; under structured low-rank data models we obtain improved rates attained by moderate-width encoders. Finally, we establish a fast calibration rate from contrastive excess risk to downstream retrieval error under a density-ratio Tsybakov noise condition.
\end{abstract}

\section{Introduction}
Representation learning has been a prominent paradigm
in foundation models and achieved great success \citep{pmlr-v139-radford21a,jia2021scaling,karpukhin2020dense}. Typically, representations are learned from large-scale datasets and then transferred to downstream tasks.
One of the most successful approaches is   \emph{contrastive representation learning} (CRL),
 which has been widely used in computer vision, natural language processing, and multimodal learning \citep{chen2020simple,he2020momentum,an2023sparse,pmlr-v139-radford21a,jia2021scaling}.

For two modalities $\X$ and $\Y$, CRL aims to learn a scoring function $s:\X\times\Y\mapsto \R$, that assigns higher scores to positive pairs than to negative pairs. 
By pulling together semantically related pairs $(\x,\y)$ and pushing apart mismatched pairs $(\x,\y')$ in the embedding space, CRL learns transferable and task-agnostic representations. 
In the self-supervised setting, this mechanism is commonly implemented by minimizing the population objective \citep{wang2020understanding}:
\begin{equation}\label{eq:con}
\mathcal{L}(s)
=
\mathbb{E}_{(\x,\y)\sim p}
\,\tau\log
\mathbb{E}_{\y'\sim p_\Y}
\exp\!\left(
\frac{s(\x,\y')-s(\x,\y)}{\tau}
\right),
\end{equation}
where $p$ is the joint distribution of $\x,\y$,  $p_\Y$ is the marginal distribution of $\y$.

\citep{bao2022surrogate,ash2022investigating} analyse surrogate gaps and excess risks for contrastive losses, providing insights into their generalisation behaviour. However, they do not establish Fisher consistency or explicitly characterise the population minimiser of the contrastive objective. 

To address this, \cite{li2026statisticalconsistencygeneralizationcontrastive} show that the population minimiser $s^*$ of the contrastive loss indeed recovers the log-density ratio  $\log\frac{p(\x,\y)}{p_\X(\x)p_\Y(\y)}$, which is the pointwise mutual information
(PMI) of two modalities (up to an additive function).
Building on this characterisation, they further derive a calibration-type inequality: for any scoring function $s$, there holds
\begin{align}\label{eq:calibration}
\mathcal E^* - \mathcal E(s)
\lesssim
\sqrt{\mathcal L(s) - \mathcal L^*},\end{align}
which quantitatively connects pretraining excess risk to downstream retrieval performance.  Here, the retrieval risk of a scoring function $s:\mathcal X\times\mathcal Y\to\mathbb R$ is defined as
\[
\mathcal E(s)
:=
\mathbb{P}\big(s(\x,\y) \ge s(\x,\y')\big),
\]
which measures the probability that a randomly drawn positive pair is ranked above a negative pair. The optimal (Bayes) retrieval risk is
$\mathcal E^* := \max_{s} \mathcal E(s)$, where the maximum is taken over all measurable functions $s:\X\times\Y\to \R$, and $\L^*=\L(s^*)$ is the minimal contrastive loss.

Typically in the context of CRL \citep{chen2020simple,pmlr-v139-radford21a}, the scoring function takes the form of inner product of representations , i.e., $s\in \mathcal{S}_{\mathrm{IP}} =\{s(\x,\y) = f(\x)^\top g(\y) \mid    f\in\F,g\in\G\}$, where $\F,\G$ are hypothesis spaces, e.g., deep neural networks (DNNs). This limits the class of admissible scoring functions, 
 as there is no obvious way to decompose $\log\frac{p(\x,\y)}{p_\X(\x)p_\Y(\y)}$ into separate factors of $\x$ and $\y$, and thus direct
inner-product approximation appears challenging.
Yet, the learned representations still generalizes in downstream tasks. Building on this perspective, we ask the following central question:
\begin{tcolorbox}[colback=white,colframe=black,boxrule=0.4pt]
How well can a given inner-product representation class approximate pointwise mutual information underlying contrastive learning?
\end{tcolorbox}

To address this question, we study CRL within the framework of statistical learning theory. 
Let $\hat{s}$ be a minimizer of an empirical loss $\widehat{\mathcal{L}}$ over a hypothesis class $\mathcal{S}_{\mathrm{IP}}$, and let $\bar{s} \in \arg\min_{s \in \mathcal{S_{\mathrm{IP}}}} \mathcal{L}(s)$. 
The excess risk admits the decomposition
\begin{align*}
\mathcal{L}(\hat{s}) - \mathcal{L}^* 
&= \bigl[ \mathcal{L}(\hat{s}) - \widehat{\mathcal{L}}(\hat{s})\bigr]
 + \bigl[ \widehat{\mathcal{L}}(\hat{s}) - \widehat{\mathcal{L}}(\bar{s})\bigr]
 + \bigl[ \widehat{\mathcal{L}}(\bar{s}) - \mathcal{L}(\bar{s})\bigr]
 + \bigl[ \mathcal{L}(\bar{s}) - \mathcal{L}^* \bigr] \\
&\le 2 \sup_{s \in \mathcal{S}_{\mathrm{IP}}} 
 | \mathcal{L}(s) - \widehat{\mathcal{L}}(s) |
+ \inf_{s \in\mathcal{S}_{\mathrm{IP}}} 
\bigl( \mathcal{L}(s) - \mathcal{L}^* \bigr).
\end{align*}

Existing analyses \citep{pmlr-v97-saunshi19a,pmlr-v202-lei23a,li2026statisticalconsistencygeneralizationcontrastive} establish bounds on the generalization gap, i.e.,
$\sup_{s \in \mathcal{S}_{\mathrm{IP}}} | \mathcal{L}(s) - \widehat{\mathcal{L}}(s) |$, 
but do not address whether the inner-product hypothesis class $\mathcal{S}_{\mathrm{IP}}$ is expressive enough to capture the optimal log-density-ratio structure. In particular, the approximation error 
$\inf_{s \in \mathcal{S}_{\mathrm{IP}}} \bigl( \mathcal{L}(s) - \mathcal{L}^* \bigr)$
remains largely unexplored.

As the sample size increases, the generalization term vanishes, and the performance is ultimately governed by the \emph{approximation error} $\inf_{s \in \mathcal{S}_{\mathrm{IP}}} \bigl( \mathcal{L}(s) - \mathcal{L}^* \bigr)$, which measures the gap between the best achievable risk within $\mathcal{S}_{\mathrm{IP}}$ and the population optimum. Characterizing this quantity is therefore key to explaining the empirical success of overparameterized models and understanding the fundamental limits of contrastive representation learning.

In this work, we develop a theoretical analysis of CRL that characterizes the role of approximation in determining its achievable performance. We also propose a  density-ratio Tsybakov low-noise condition and establish a fast calibration rate. 
Our contributions are listed as follows:
\begin{itemize}[leftmargin=*, itemsep=0.25em]
    \item We first study the approximation capability of the finite-dimensional inner-product model class
\[
\mathcal{S}_{\mathrm{IP}}=\mathcal{S}_D \coloneqq \{\, s(\x,\y)=f(\x)^{\top}g(\y)\;|\; f:\X\to\R^{D},\, g:\Y\to\R^{D} \,\}.
\]
Let the \emph{PMI operator} $T:L^2(P_\X)\to L^2(P_\Y)$ be defined by
$ 
(Tf)(\y) := \int \log\frac{p(\x,\y)}{p_\X(\x)p_\Y(\y)} \,f(\x)\, dP_\X(\x).
$
Denote its singular value decomposition by $ 
T\psi_k = \sigma_k \phi_k, \quad T^*\phi_k = \sigma_k \psi_k,$
where $\{\psi_k\}$ and $\{\phi_k\}$ are orthonormal systems in $L^2(P_\X)$ and $L^2(P_\Y)$, respectively. 
We show that CRL implicitly performs a low-rank factorisation of $T$, where the encoders $f$ and $g$ capture the top-$D$ singular functions $\{\psi_k,\phi_k\}_{k=1}^D$. The approximation error is governed by the representation dimension $D$ and the decay of the singular values $\{\sigma_k\}$. 

    \item We further study the approximation error when $f,g$ are parameterized by DNNs. We first show the universal approximation property of DNNs. Then we derive an explicit rate under some smoothness assumptions of the PMI function. We further analyze a structured data model where the PMI operator is low-rank, for which we show that small dimensional encoders  can achieve small approximation error through learning the singular functions $\{\psi_k\},\{\phi_k\}$.
    \item We present fine-grained analysis of calibration-type inequality for retrieval in the case of low noise. We introduce a density ratio Tsybakov noise condition, formulated through the density-ratio margin between two samples. 
    Large margin implies low-noise and easy ranking task, thus leading to tighter  calibration inequality. 
\end{itemize}

\section{Related Work}\label{app:related_work}
The seminal work of \citet{pmlr-v97-saunshi19a} proposed a theoretical framework for analyzing the generalization performance of contrastive representation learning, which was later refined by \citet{pmlr-v202-lei23a}. Another line of work studies the \emph{surrogate gap}, namely the discrepancy between the upstream contrastive objective and downstream task performance \citep{bao2022surrogate,ash2022investigating}. More recently, \citet{li2026statisticalconsistencygeneralizationcontrastive} further developed the statistical learning theory of CRL by establishing Fisher consistency, calibration inequalities, and improved generalization bounds.

A complementary line of research analyzes CRL from spectral and kernel perspectives. 
\citet{haochen2021provable} introduced a spectral contrastive loss and showed that it recovers eigenfunctions of a Laplacian operator defined on an augmentation graph. 
\citet{pmlr-v162-saunshi22a} emphasized the importance of inductive bias in explaining the success of contrastive learning. 
\citet{johnson2023contrastivelearningoptimalbasis} proposed the positive-pair kernel and connected CRL with principal component analysis, while \citet{zhaiunderstanding} studied augmentation-based CRL through RKHS approximation and regression, showing that sufficiently strong augmentations alone can induce generalization.
Our work differs from these analyses in two key aspects. 
First, prior works typically study matrices/operators induced by positive-pair distributions or augmentation graphs, which do not directly correspond to the population minimizer of the standard log-sum-exp contrastive objective. 
In contrast, we analyze the operator induced by the pointwise mutual information (PMI), which characterizes the optimal contrastive score. 
Second, we explicitly study the two-encoder structure $f(\x)$ and $g(\y)$ and its approximation properties under inner-product scoring functions.

Several recent studies have also begun to investigate approximation aspects of CRL. 
\citet{oko2025statisticaltheorycontrastivepretraining} proposed a joint generative hierarchical model and considered a  scoring function of the form
$
\tau^w( \mathrm{softmax}(f(\x)), \mathrm{softmax}(g(\y))), 
$
where $\tau^w$ is a nonlinear function parameterized by $w$. They
showed that transformers can approximate the optimal score by implementing belief propagation. 
\citet{uesaka2025weightedpointsetembedding} proposed weighted point-set embeddings with a universal kernel to compute pointwise mutual information. 
However, these approaches rely on more expressive scoring functions or kernel constructions, and do not fully explain why the standard inner-product form
$
s(\x,\y)=f(\x)^\top g(\y)
$
is sufficiently expressive in practice.

Further related work is discussed in Appendix~\ref{app:further_related_work}.

\section{Problem Formulation}\label{sec:problem}

\paragraph{Notations.}
Let  $\mathbb{N}$ denote the set of positive integers, $d_1, d_2\in\mathbb{N}$, and 
$\X\subset \R^{d_1}$ and $\Y\subset\R^{d_2}$.  
Denote $(\x,\y)$ a data pair with $\x\in\X$ and $\y\in\Y$.
We denote by $P_{\X\Y}$ the joint distribution of $(\x,\y)$. The marginal distributions of $P_{\X\Y}$ on $\X$ and $\Y$  are denoted by $P_\X$ and $P_\Y$, respectively. If $P_{\X\Y}$ is continuous, we use $p(\x,\y)$, $p_\X(\x)$ and $p_\Y(\y)$ to denote the probability density functions of $P_{\X\Y}$, $P_\X$ and $P_\Y$, respectively; if $P_{\X\Y}$ is discrete, we use $p(\x,\y)$, $p_\X(\x)$ and $p_\Y(\y)$ to denote the probability mass functions of $P_{\X\Y}$, $P_\X$ and $P_\Y$, respectively. By default, we assume that $P_{\X\Y}$ is continuous. For simplicity, we sometimes write $\E_\x$ for $\E_{\x\sim p_\X}$. 
Let $L_2(P_\X)$ and $L_2(P_\Y)$ denote the Hilbert spaces of square-integrable
functions with respect to $P_\X$ and $P_\Y$, respectively. 
We write $a \lesssim b$ if $a \le c\,b$ for some constant $c>0$, and
$a \asymp b$ if there exist constants $c_1,c_2>0$ such that
$c_1a \le b \le c_2a$.  Denote by $\I$ the indicator function, i.e., $\I\{E\}=1$ if the event $E$ holds true.

We focus on self-supervised CRL throughout the paper. Our theoretical analysis can be naturally extended to general distributions for positive  and negative samples.
Recall that the population contrastive risk is defined as follows \citep{wang2020understanding,li2026statisticalconsistencygeneralizationcontrastive},
\begin{equation}\label{eq:contrastive-risk}
\mathcal{L}(s)
\coloneqq
\mathbb{E}_{(\x,\y)\sim p}\,
\left[
\tau\log
\mathbb{E}_{\y'\sim p_\Y}
\exp\left(
\frac{s(\x,\y')-s(\x,\y)}{\tau}
\right)
\right],
\end{equation}
where $\tau>0$ is the temperature parameter. This objective encourages the
score of a positive pair $(\x,\y)$ to be larger than that of a negative pair
$(\x,\y')$.
\paragraph{Population Target as PMI.}
The optimal score for the contrastive risk \eqref{eq:contrastive-risk} is established in the following lemma.
\begin{lemma}[\citep{li2026statisticalconsistencygeneralizationcontrastive}]\label{lem:minimizer-L}
Let $\mathcal{L}(s)$ be defined as in ~\eqref{eq:contrastive-risk}.
A measurable function $s:\X\times\Y\to\mathbb R$ minimizes $\mathcal L(s)$
if and only if it satisfies $s(\x,\y)
=
\tau\log\frac{p(\y\mid\x)}{p_\Y(\y)}
+
a(\x)$
for some measurable function $a:\X\to\mathbb R$.
\end{lemma}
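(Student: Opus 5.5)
The plan is to decouple the objective over $\x$ and reduce it to a Gibbs variational (KL) inequality. Conditioning on $\x$, write
\[
\mathcal L(s)=\E_{\x}\Big[\tau\log\E_{\y'\sim p_\Y}e^{s(\x,\y')/\tau}-\E_{\y\sim p(\cdot\mid\x)}s(\x,\y)\Big],
\]
since $\tau\log\E_{\y'}\exp((s(\x,\y')-s(\x,\y))/\tau)=\tau\log\E_{\y'}e^{s(\x,\y')/\tau}-s(\x,\y)$. The objective is therefore an average over $\x$ of a functional $J_\x(h)=\tau\log\E_{p_\Y}e^{h/\tau}-\E_{p(\cdot\mid\x)}h$ of the section $h=s(\x,\cdot)$. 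We may assume $\mathcal L(s)<\infty$, so that for a.e.\ $\x$ both terms of $J_\x$ are finite. Otherwise $s$ is trivially not a minimizer, provided the candidate minimizer has finite loss. That proviso is itself a mild integrability assumption: finite mutual information.

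For fixed $\x$, define the tilted density $q_h(\y)=p_\Y(\y)e^{h(\y)/\tau}/Z_h$, where $Z_h=\E_{p_\Y}e^{h/\tau}$. A direct computation gives
\[
J_\x(h)=\tau\,\mathrm{KL}\big(p(\cdot\mid\x)\,\|\,q_h\big)-\tau\,\mathrm{KL}\big(p(\cdot\mid\x)\,\|\,p_\Y\big).
\]
To verify this, expand $\log\frac{p(\y|\x)}{q_h(\y)}=\log\frac{p(\y|\x)}{p_\Y(\y)}-h/\tau+\log Z_h$ and integrate against $p(\cdot\mid\x)$. Hence
\[
J_\x(h)\ge-\tau\,\mathrm{KL}\big(p(\cdot\mid\x)\|p_\Y\big),
\]
with equality if and only if $q_h=p(\cdot\mid\x)$ a.e. That is, $h(\y)/\tau=\log\frac{p(\y|\x)}{p_\Y(\y)}+c$ for a constant $c$, which depends on $\x$.

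Integrating over $\x$ yields $\mathcal L(s)\ge-\tau I(X;Y)=\mathcal L^*$. Equality holds if and only if the pointwise equality holds for $P_\X$-a.e.\ $\x$. That condition means $s(\x,\y)=\tau\log\frac{p(\y|\x)}{p_\Y(\y)}+a(\x)$, with $a(\x)=\tau\log Z_{s(\x,\cdot)}-\tau\log\E_{p_\Y}\frac{p(\cdot|\x)}{p_\Y}=\tau\log Z_{s(\x,\cdot)}$. This $a$ is measurable because it is built from $s$ by integration via Fubini. Conversely, any such $s$ attains the bound, since the constant $a(\x)$ cancels inside the log-ratio.

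The main obstacle is the measure-theoretic bookkeeping rather than the identity itself. The conditional $p(\cdot\mid\x)$ must be absolutely continuous with respect to $p_\Y$, which is automatic given joint densities. All terms must be finite. The ``only if'' direction only holds almost everywhere, so it should be understood as equality $P_{\X\Y}$-a.e., or $P_\X\otimes P_\Y$-a.e. I would handle this by first restricting to $s$ with finite loss and then applying the equality case of Gibbs' inequality (strict convexity of $-\log$ via Jensen) sectionwise.
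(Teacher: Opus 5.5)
The paper does not prove this lemma itself; it imports it from prior work. Your Gibbs/KL argument is correct and is exactly the representation $\mathcal L(s)-\mathcal L^*=\tau\,\E_{\x}\,\mathrm{KL}\bigl(p(\cdot\mid\x)\,\|\,q_s(\cdot\mid\x)\bigr)$ that the paper invokes in its remark on contrastive risk versus $L_2$ approximation and in \eqref{eq:excess-risk-kl-calibration}; Lemma~\ref{20260426083507} combined with Jensen's inequality is an equivalent form. Your finite-mutual-information proviso and your almost-everywhere reading of the ``only if'' direction are the right caveats, and the former is precisely the condition $\mathcal L^*>-\infty$ characterized in Lemma~\ref{20260507053917}, which the statement leaves implicit.
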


Let $a(\x)=0$, then the essential
population target of CRL is the pointwise mutual information (PMI)
\begin{equation}\label{eq:pmi-target}
\log\frac{p(\y\mid \x)}{p_\Y(\y)}
=
\log\frac{p(\x,\y)}{p_\X(\x)p_\Y(\y)}\coloneqq \operatorname{PMI}(\x,\y)
.
\end{equation}

\paragraph{Representation Classes and Approximation Error.}
In practice, PMI is learned through a structured hypothesis class. The most common
choice is the inner-product form
\begin{equation}\label{eq:inner-product-class}
\mathcal{S}_{\mathrm{IP}}
=
\left\{
s(\x,\y)=f(\x)^\top g(\y)
\,\middle|\,
f\in\F,\ g\in\G
\right\}.
\end{equation}
where $\F$ and $\G$ are representation classes, such as deep neural network
classes. 
The central object of this paper is 
to approximate PMI through the functions in $\S_{\mathrm{IP}}$. To this end, we analyze
the approximation error $$
\inf_{s\in\S_{\mathrm{IP}}}
\left[
\mathcal L(s)-\mathcal L^*
\right].$$ In this paper, we consider two representation classes: 1) a finite-dimensional \emph{inner-product model class}, where the dimension $D$ controls the representation capacity:
\begin{equation}\label{eq:SD-class}
  \mathcal{S}_{\mathrm{IP}}  =\mathcal S_D
\coloneqq
\left\{
s(\x,\y)=f(\x)^\top g(\y)
\,\middle|\,
f:\X\to\mathbb R^D,\ 
g:\Y\to\mathbb R^D
\ \text{measurable}
\right\}.
\end{equation}
2) a hypothesis space $\mathcal{S}_{\mathrm{NN}}$ where $\F,\G$ are DNNs, which is defined in \eqref{20260505161740}.

\paragraph{Calibration Inequality for Downstream Retrieval.}
We use the scoring function learned by
CRL  for downstream retrieval. For a scoring function
$s:\X\times\Y\to\mathbb R$, we measure its retrieval quality by the
AUC-type population criterion
\begin{equation}\label{eq:retrieval-performance}
\mathcal E(s)
\coloneqq
\mathbb E_{(\x,\y)\sim p}
 \E_{\y'\sim p_\Y}
\left[
\mathbb I\{s(\x,\y)>s(\x,\y')\}+\frac12\mathbb I\{s(\x,\y)=s(\x,\y')\}
\right].
\end{equation}
 Larger values of $\mathcal E(s)$ correspond to
better retrieval performances. We define the optimal retrieval value as $\mathcal E^*
\coloneqq
\max_s \mathcal E(s)$,  where the maximum is taken over all measurable functions $s:\X\times\Y\to \R$. 
It is expected that minimizing the contrastive risk $\L(s)$ would maximize $\mathcal{E}(s)$. 
This statistical property, can be quantitatively described in the following inequality
\citep{li2026statisticalconsistencygeneralizationcontrastive}:
\begin{equation}\label{eq:distribution-free-calibration}
\mathcal E^*-\mathcal E(s)
\lesssim
\sqrt{\mathcal L(s)-\mathcal L^*}.
\end{equation}
This shows that a small contrastive excess risk is guaranteed to achieve a strong
retrieval performance in the downstream task.
Therefore, the approximation error controls the best downstream retrieval performance
achievable by $\S_{\mathrm{IP}}$:
$\inf_{s\in\S_{\mathrm{IP}}}\zkh{\mathcal E^*
-\;
\mathcal E(s)}
\lesssim\;
\sqrt{\inf_{s\in\S_{\mathrm{IP}}}
\left[
\mathcal L(s)-\mathcal L^*
\right]
}.$

\section{Approximation Analysis with Finite-dimensional Inner-Product Representations}
\label{sec:rkhs_approximation}

In this section we investigate how well the pointwise mutual information (PMI)
can be approximated by $D$-dimensional  inner-product scores $\S_D$ defined in~\eqref{eq:SD-class}.
The analysis constructs low-rank spectral comparators for a PMI operator,
with approximation guarantees controlled by two factors: the embedding dimension \(D\) and the spectral
decay of the PMI operator. 

\paragraph{Matrix factorization for discrete spaces.}
When $\mathcal X=\{\x_1,\ldots,\x_n\}$ and
$\mathcal Y=\{\y_1,\ldots,\y_m\}$ are finite spaces, the PMI function
can be viewed as a matrix $P\in\mathbb R^{n\times m}$ with entries
\[
P_{ij}\coloneqq
\log \frac{p(\x_i,\y_j)}{p_\X(\x_i)p_\Y(\y_j)} .
\]

For two encoders
$f(\x_i),g(\y_j)\in\mathbb R^D$, let
$F=[f(\x_1),\ldots,f(\x_n)]^\top\in\mathbb R^{n\times D}$ and
$G=[g(\y_1),\ldots,g(\y_m)]^\top\in\mathbb R^{m\times D}$, then a rank-at-most-$D$ factorization $P\approx FG^\top$ provides a
comparator for inner-product contrastive scores (with score scale $\tau$). By the Eckart--Young--Mirsky theorem \citep{eckart1936approximation}, the
best such approximation in the Frobenius norm is given by the truncated SVD:
if $P=U\Sigma V^\top$, then $P_D=U_D\Sigma_DV_D^\top$,
where $U_D$ is the first $D$ columns of $U$, $V_D$ is the first $D$ columns of $V$, $\Sigma_D$ is the upper left block of $\Sigma$, 
and one may take
$F=U_D\Sigma_D^{1/2}$, $G=V_D\Sigma_D^{1/2}$.

\paragraph{Spectral decomposition on continuous domains.}
The matrix viewpoint extends naturally to continuous spaces.
Assume the PMI is square-integrable,
\(\mathrm{PMI}(\x,\y)\in L_2(P_{\X}\otimes P_{\Y})\).
Define the \emph{PMI operator}
\(T:L_2(P_{\X})\to L_2(P_{\Y})\) by
\begin{equation}\label{eq:PMI_operator}
  (Tf)(\y) = \int_{\X} \mathrm{PMI}(\x,\y)\,f(\x)\,d P_\X(\x) .
\end{equation}
Its adjoint is
\(T^*g(\x)=\int_{\Y}\mathrm{PMI}(\x,\y)g(\y)d P_{\Y}(\y)\).
The compositions \(T^*T\) and \(TT^*\) are integral operators with kernels
\[
K_{\X}(\x,\x') = \E_{\y\sim p_{\Y}}\!\big[\mathrm{PMI}(\x,\y)\,
                \mathrm{PMI}(\x',\y)\big],\qquad
K_{\Y}(\y,\y') = \E_{\x\sim p_{\X}}\!\big[\mathrm{PMI}(\x,\y)\,
                \mathrm{PMI}(\x,\y')\big].
\]
\(K_{\X}\) measures similarity between two anchors through the
lens of the entire \(\Y\) modality; for instance, two images are close
if they share similar textual descriptions.

Since the kernel is square-integrable, \(T\) is a Hilbert--Schmidt
operator.  The spectral theorem then yields the following decomposition
(proof in Appendix~\ref{app:proof_spectral}).

\begin{lemma}[Spectral decomposition of the PMI]
  \label{lem:spectral_decomposition}
  \(T^*T\) and \(TT^*\) share the same non-zero eigenvalues
  \(\lambda_1\ge\lambda_2\ge\cdots\searrow 0\).
  There exist orthonormal systems \(\{\psi_i\}\subset L_2(P_{\X})\) and
  \(\{\phi_i\}\subset L_2(P_{\Y})\) such that
  \(T\psi_i = \sigma_i\phi_i,\;
   T^*\phi_i = \sigma_i\psi_i\)
  with \(\sigma_i = \sqrt{\lambda_i}\).  Moreover,
  \begin{equation}\label{eq:PMI_decom}
    \mathrm{PMI}(\x,\y) = \sum_{i=1}^{\infty}
    \sigma_i\,\psi_i(\x)\,\phi_i(\y)
    \quad\text{in }L_2(P_{\X}\otimes P_{\Y}).
  \end{equation}
Furthermore,  
  the ranges \(\operatorname{Im}(T)\) and \(\operatorname{Im}(T^*)\) are exactly the
  reproducing kernel Hilbert spaces (RKHSs) induced by \(K_{\Y}\) and
  \(K_{\X}\), respectively. We denote these spaces by
  $\mathcal H_\Y:=\operatorname{Im}(T)$ and
  $\mathcal H_\X:=\operatorname{Im}(T^*)$, equipped with their RKHS norms.
\end{lemma}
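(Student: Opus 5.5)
The plan is to treat $T$ as a Hilbert--Schmidt operator, get the singular system from the spectral theorem for compact self-adjoint operators, and identify the ranges with RKHSs via the standard square-root / range characterisation.

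\textbf{Step 1 (compactness and common spectrum).} Since $\mathrm{PMI}\in L_2(P_\X\otimes P_\Y)$, $T$ is Hilbert--Schmidt with $\|T\|_{HS}^2=\|\mathrm{PMI}\|_{L_2}^2$, hence compact. The adjoint formula follows from Fubini. Then $T^*T$ and $TT^*$ are compact, self-adjoint and positive. By Fubini they are the integral operators with kernels $K_\X$ and $K_\Y$, which are square-integrable by Cauchy--Schwarz.

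Apply the spectral theorem to $T^*T$. This gives an orthonormal system $\{\psi_i\}$ of eigenfunctions with eigenvalues $\lambda_1\ge\lambda_2\ge\cdots\searrow0$ spanning $(\ker T)^\perp$. Set $\sigma_i=\sqrt{\lambda_i}$ and $\phi_i:=T\psi_i/\sigma_i$. Then
\[
\langle\phi_i,\phi_j\rangle=\frac{\langle T^*T\psi_i,\psi_j\rangle}{\sigma_i\sigma_j}=\delta_{ij},
\]
and $T^*\phi_i=\sigma_i\psi_i$ and $TT^*\phi_i=\lambda_i\phi_i$. Running the symmetric argument shows every nonzero eigenvalue of $TT^*$ arises this way, so the two operators share their nonzero spectra with the same multiplicities.

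\textbf{Step 2 (kernel expansion).} For $f\in L_2(P_\X)$ we have $Tf=\sum_i\sigma_i\langle f,\psi_i\rangle\phi_i$. Hence $T=\sum_i\sigma_i\,\phi_i\otimes\psi_i$, converging in Hilbert--Schmidt norm because $\sum_i\sigma_i^2=\|T\|_{HS}^2<\infty$. The map from Hilbert--Schmidt operators $L_2(P_\X)\to L_2(P_\Y)$ to their kernels in $L_2(P_\X\otimes P_\Y)$ is an isometric isomorphism, and it sends $\phi_i\otimes\psi_i$ to $\psi_i(\x)\phi_i(\y)$. Therefore \eqref{eq:PMI_decom} holds in $L_2(P_\X\otimes P_\Y)$.

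\textbf{Step 3 (ranges as RKHSs).} This is the step needing care. I would use the polar decomposition $T=U(T^*T)^{1/2}$, which gives $\operatorname{Im}(T)=\operatorname{Im}((TT^*)^{1/2})$; this can be checked directly from the singular expansion. For a positive trace-class integral operator $L_K$ with a measurable kernel $K$, the image $\operatorname{Im}(L_K^{1/2})$ equipped with the norm $\|L_K^{1/2}h\|:=\|h\|$ for $h\perp\ker L_K$ is the RKHS of $K$ (Mercer / Cucker--Smale), i.e.
\[
\mathcal H=\Bigl\{\sum_i a_i\sigma_i\phi_i:\ \textstyle\sum_i a_i^2<\infty\Bigr\},\qquad \|\cdot\|_{\mathcal H}^2=\sum_i a_i^2 .
\]
The main obstacle is that $K_\Y$ is only an $L_2$ kernel, not known to be continuous, so classical Mercer pointwise convergence is unavailable. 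To handle this I would work with the $P_\Y$-a.e.\ feature map $\y\mapsto\mathrm{PMI}(\cdot,\y)\in L_2(P_\X)$, which is defined for a.e.\ $\y$ by Fubini. With it, $K_\Y(\y,\y')=\langle\mathrm{PMI}(\cdot,\y),\mathrm{PMI}(\cdot,\y')\rangle_{L_2(P_\X)}$, a genuine positive-definite kernel.

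The standard feature-map theorem then says the RKHS of $K_\Y$ is exactly $\{\y\mapsto\langle h,\mathrm{PMI}(\cdot,\y)\rangle: h\in L_2(P_\X)\}=\operatorname{Im}(T)$. Its norm is $\inf\|h\|$, which equals $\|h\|$ for $h\in(\ker T)^\perp$. This is precisely the norm described above. The same argument with the roles of $\X$ and $\Y$ swapped gives $\operatorname{Im}(T^*)=\mathcal H_\X$ with kernel $K_\X$.
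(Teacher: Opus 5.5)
Your proposal is correct. Step~1 is the paper's argument: eigenfunctions $\psi_i$ of $T^*T$, $\phi_i:=\lambda_i^{-1/2}T\psi_i$, orthonormality from adjointness, and the symmetric argument for $TT^*$. Steps~2 and~3 take a different route.

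\textbf{Step 2.} The paper works fibrewise. For a.e.\ $\x$ it expands $\mathrm{PMI}(\x,\cdot)\in L_2(P_\Y)$ in $\{\phi_i\}$ and reads off the coefficients $\langle\mathrm{PMI}(\x,\cdot),\phi_i\rangle=(T^*\phi_i)(\x)=\sigma_i\psi_i(\x)$. This is shorter, but it leaves two things implicit:
\begin{enumerate}
\item That $\mathrm{PMI}(\x,\cdot)$ lies in the closed span of the $\phi_i$. Bessel alone gives only $\sum_i\lambda_i\psi_i(\x)^2\le\|\mathrm{PMI}(\x,\cdot)\|^2$; equality a.e.\ follows only after integrating and using $\|\mathrm{PMI}\|_{L_2}^2=\|T\|_{HS}^2=\sum_i\lambda_i$.
\item The passage from fibrewise convergence to convergence in $L_2(P_\X\otimes P_\Y)$.
\end{enumerate}
Your Hilbert--Schmidt expansion together with the operator--kernel isometry settles both points at once.

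\textbf{Step 3.} The paper computes $\operatorname{Im}(T^*)=\{\sum_i c_i\psi_i:\sum_i c_i^2/\lambda_i<\infty\}$, with norm $\sum_i c_i^2/\lambda_i$, from the singular system. It then simply asserts that this is the RKHS of $K_\X$ ``by its feature-map construction''. You make that construction explicit with $\Phi(\y)=\mathrm{PMI}(\cdot,\y)$. This identifies $\operatorname{Im}(T)$ with the RKHS of $K_\Y$ directly, with no continuity of the kernel needed. The singular system is then used only to recover the explicit sequence-space norm; your $a_i$ correspond to the paper's $c_i=\sigma_i a_i$. Once you use the feature map you can drop the polar-decomposition/Mercer paragraph. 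As stated, for a merely measurable kernel, that paragraph is not a standard result anyway.

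\textbf{One detail to add in Step~3.} The RKHS of $K_\Y$ consists of functions defined pointwise on the full-measure set $\Y_0$ where $\Phi$ is defined. Identifying it with $\operatorname{Im}(T)\subset L_2(P_\Y)$ requires that no nonzero RKHS element vanishes $P_\Y$-a.e., i.e.\ $\Phi(\Y_0)^\perp=\ker T$. Only $\Phi(\Y_0)^\perp\subseteq\ker T$ is automatic; the reverse inclusion can fail for an unsuitable version of PMI modified on a null set of $\y$'s.

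The fix is to shrink $\Y_0$ by a further null set. Take a countable dense subset $\{h_n\}$ of $\ker T$, which exists because $L_2(P_\X)$ is separable. Discard those $\y$ with $\langle h_n,\Phi(\y)\rangle\neq 0$ for some $n$; this is a null set since each $Th_n=0$ a.e. On the remaining set, density and continuity give $\ker T\subseteq\Phi(\cdot)^\perp$, so equality holds. The paper does not address this point either.
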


\eqref{eq:PMI_decom} is the infinite-dimensional analogue of the
SVD of the PMI matrix. Spectral truncation therefore provides explicit encoder coordinates
through the singular functions $\{\psi_i\}$ and $\{\phi_i\}$.

\paragraph{Approximation error analysis.}
If the PMI operator has finite rank \(K<\infty\)
(i.e., \(\lambda_i=0\) for \(i>K\)), then encoders of dimension \(K\) suffice to
represent the optimal score exactly:
\(s^*(\x,\y) = \tau\sum_{i=1}^{K}\sigma_i\,\psi_i(\x)\phi_i(\y)\)
can be written as \(f(\x)^{\top}g(\y)\) with
\(f(\x)=(\sqrt{\tau}\sigma_1^{1/2}\psi_1(\x),\dots,
          \sqrt{\tau}\sigma_K^{1/2}\psi_K(\x))\) and
\(g(\y)=(\sqrt{\tau}\sigma_1^{1/2}\phi_1(\y),\dots,
        \sqrt{\tau}\sigma_K^{1/2}\phi_K(\y))\).
Section~\ref{sec:latent-class-model} provides a low-rank data model
where this finite-rank structure emerges.

In the infinite-rank case, when the encoders are restricted to a fixed dimension $D$, the optimal \(L_2\)‑approximation of the PMI within \(\mathcal{S}_D\) is indeed
achieved by the truncated spectral decomposition $\sum_{i=1}^{D}\sqrt{\lambda_i}\,\psi_i(\x)\,\phi_i(\y)$.

To derive the approximation error, we introduce a regularity condition on the
density ratio. 

\begin{assumption}\label{ass:pi_+}
  There exists a constant \(\alpha>1\) such that, for all
  \(\x\in\mathcal{X}\) and \(\y\in\mathcal{Y}\),
  \(
  \frac{p(\x,\y)}{p_{\X}(\x)\,p_{\Y}(\y)}\;\le\;\alpha .
  \)
\end{assumption}

Note that we do not impose positive lower bound, which is employed in \cite{oko2025statisticaltheorycontrastivepretraining}. Large $\alpha$ usually implies imbalanced distribution. Building on this assumption, we obtain the following
bound (proofs are provided in Appendix~\ref{app:proof_L-L*<L_2}).

\begin{theorem}[Approximation error of inner-product scores]
  \label{thm:appro_rkhs}
  Suppose that $\X,\Y$  be two compact domains, the PMI function $\log\frac{p(\x,\y)}{p_\X(\x)p_\Y(\y)}$ is continuous on \(\X\times\Y\) and that
  Assumption~\ref{ass:pi_+} holds.  Let \(\mathcal{S}_D\) be as in~\eqref{eq:SD-class} and assume the spectral score
  $s_D(\x,\y)=\tau\sum_{i=1}^D\sqrt{\lambda_i}\psi_i(\x)\phi_i(\y)$ satisfy
  $\|s_D\|_\infty\le B$.
  Then
  \[
    \inf_{s\in\mathcal{S}_D} [\mathcal{L}(s) - \mathcal{L}(s^*)]
    \;\lesssim\; \sum_{i=D+1}^{\infty} \lambda_i .
  \]
\end{theorem}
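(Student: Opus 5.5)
Since $s_D\in\mathcal S_D$, with $f(\x)=\sqrt{\tau}(\lambda_i^{1/4}\psi_i(\x))_{i\le D}$ and $g(\y)=\sqrt{\tau}(\lambda_i^{1/4}\phi_i(\y))_{i\le D}$, it suffices to bound $\mathcal L(s_D)-\mathcal L(s^*)$ with $s^*=\tau\,\mathrm{PMI}$ from Lemma~\ref{lem:minimizer-L}. The plan is to prove a local quadratic upper bound $\mathcal L(s)-\mathcal L(s^*)\lesssim \|s-s^*\|^2_{L_2(P_\X\otimes P_\Y)}$ for uniformly bounded $s$. We then invoke Lemma~\ref{lem:spectral_decomposition}, which by orthonormality gives $\|s_D-s^*\|_{L_2}^2=\tau^2\sum_{i>D}\lambda_i$.

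\textbf{Rewriting the excess risk.} Write $r(\x,\y)=p(\x,\y)/(p_\X(\x)p_\Y(\y))$, $h=(s-s^*)/\tau$ and $u=s/\tau$. Then
\[
\mathcal L(s)=\tau\,\E_\x\Bigl[\log\E_{\y'}e^{u(\x,\y')}-\E_{\y\sim p(\cdot|\x)}u(\x,\y)\Bigr].
\]
Because $\E_{\y'}r(\x,\y')=1$, we have $\log\E_{\y'}e^{s^*/\tau}=0$. Using $e^{u}=r e^{h}$, this gives
\[
\mathcal L(s)-\mathcal L(s^*)=\tau\,\E_\x\Bigl[\log \E_{\y'}\bigl(r(\x,\y')e^{h(\x,\y')}\bigr)-\E_{\y'}\bigl(r(\x,\y')h(\x,\y')\bigr)\Bigr].
\]
This is a KL-type (log-sum-exp Bregman) gap under the probability measure $r(\x,\cdot)\,dP_\Y=p(\cdot|\x)$. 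Apply $\log t\le t-1$ and then $e^h-1-h\le \tfrac12 h^2e^{|h|}$. The integrand is then at most $\E_{\y'}\bigl[r\,(e^{h}-1-h)\bigr]\le \tfrac{\alpha}{2}e^{\|h\|_\infty}\E_{\y'}h^2$, where Assumption~\ref{ass:pi_+} bounds $r\le\alpha$. Integrating over $\x$ yields
\[
\mathcal L(s)-\mathcal L(s^*)\le \frac{\alpha e^{\|h\|_\infty}}{2\tau}\|s-s^*\|_{L_2(P_\X\otimes P_\Y)}^2.
\]

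\textbf{The main obstacle.} The difficulty is controlling $\|h\|_\infty$, since $\mathrm{PMI}$ has no lower bound and could tend to $-\infty$ where $p$ vanishes. Here the hypotheses are used. Because $\mathrm{PMI}$ is assumed continuous on the compact set $\X\times\Y$, it is real-valued and hence bounded, say $\|\mathrm{PMI}\|_\infty\le M$. The hypothesis $\|s_D\|_\infty\le B$ then gives $\|h\|_\infty\le B/\tau+M$. If one insisted on avoiding $M$, the alternative would be to drop the step $e^h-1-h\le\tfrac12 h^2e^{|h|}$ where $h\le0$, using instead $e^h-1-h\le h^2/2$ for $h\le 0$, and to note that $h=s_D/\tau-\mathrm{PMI}$ with $\mathrm{PMI}\le\log\alpha$ gives $h\ge -B/\tau-\log\alpha$ only from below. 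Controlling $h$ from above still needs a lower bound on $\mathrm{PMI}$, however, so I would use continuity and compactness.

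\textbf{Conclusion.} Combining the quadratic bound with $\|s_D-s^*\|_{L_2}^2=\tau^2\sum_{i>D}\lambda_i$ gives
\[
\inf_{s\in\mathcal S_D}[\mathcal L(s)-\mathcal L(s^*)]\le\mathcal L(s_D)-\mathcal L(s^*)\le \tfrac{\alpha\tau}{2}e^{B/\tau+M}\sum_{i>D}\lambda_i.
\]
The constant depends only on $\alpha,\tau,B,M$, which establishes the claimed $\lesssim$.
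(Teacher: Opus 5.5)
Your proof is correct. Its outer structure matches the paper's: use $s_D$ as the comparator, prove a quadratic comparison $\mathcal L(s)-\mathcal L^*\lesssim\|s-s^*\|_{L_2(P_\X\otimes P_\Y)}^2$ for bounded $s$, then use orthonormality to get $\tau^2\sum_{i>D}\lambda_i$. The difference lies in how you prove the comparison.

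The paper interpolates $s_\theta=s^*+\theta(s-s^*)$. It writes the excess risk by Taylor's formula as $\frac1\tau\int_0^1(1-\theta)\,\E_\x\operatorname{Var}_{q_{\theta,\x}}(s-s^*)\,d\theta$. It then bounds the tilted density $q_{\theta,\x}/p_\Y=r^{1-\theta}e^{\theta s/\tau}/Z_\theta\le\alpha e^{2B/\tau}$ using only $r\le\alpha$, $\E_{p_\Y}r=1$ and $\|s\|_\infty\le B$. Its constant $\alpha e^{2B/\tau}/(2\tau)$ therefore needs no lower bound on the PMI, and it controls a conditional variance rather than a second moment.

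Your route via $\log t\le t-1$ and $e^h-1-h\le\frac12h^2e^{|h|}$ is more elementary: there is no interpolation and no differentiation under the integral. The cost is a constant $e^{B/\tau+M}$ with $M=\|\mathrm{PMI}\|_\infty$. This is finite only because of the continuity-on-compact hypothesis, but the theorem assumes that, so your proof stands.

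Your side remark that removing $M$ would require a lower bound on the PMI is too pessimistic. On $\{h\ge1\}$ you have $r(e^h-1-h)\le re^h=e^{s/\tau}\le e^{B/\tau}\le e^{B/\tau}h^2$. On $\{h<1\}$ you have $r(e^h-1-h)\le\frac{\alpha e}{2}h^2$. So your own argument gives
\[
\mathcal L(s)-\mathcal L^*\le\frac1\tau\max\{\alpha e/2,\,e^{B/\tau}\}\,\|s-s^*\|_{L_2(P_\X\otimes P_\Y)}^2
\]
with no use of $M$, matching the generality of the paper's lemma.
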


Hence the tail sum of the eigenvalues \(\{\lambda_i\}\) directly measures the
intrinsic approximation hardness: a fast decay implies that a small encoder
dimension already captures most of the mutual information. 
To illustrate this,
we consider a case when the eigenvalues admit a polynomial decay~\citep{ying2008online,guo2023capacity}.

\begin{corollary}\label{cor:rkhs}
  Under the conditions of Theorem~\ref{thm:appro_rkhs},
  suppose the eigenvalues decay polynomially:
  \(\lambda_i \asymp i^{-\gamma}\) for some \(\gamma>1\).
  Then
  \[
    \inf_{s\in\mathcal{S}_D} [\mathcal{L}(s) - \mathcal{L}(s^*)]
    \;\lesssim\; \frac{(D+1)^{1-\gamma}}{\gamma-1}.
  \]
\end{corollary}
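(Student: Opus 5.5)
The corollary reduces to Theorem~\ref{thm:appro_rkhs} together with an estimate of the spectral tail. Since all hypotheses of the theorem are assumed, we have
\[
\inf_{s\in\mathcal S_D}[\mathcal L(s)-\mathcal L(s^*)]\lesssim \sum_{i=D+1}^\infty \lambda_i .
\]
The polynomial decay assumption \(\lambda_i\asymp i^{-\gamma}\) supplies a constant \(c_2>0\) with \(\lambda_i\le c_2 i^{-\gamma}\) for all \(i\). Only this upper half of the two-sided assumption is used, so the task is to bound \(\sum_{i\ge D+1} i^{-\gamma}\).

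The function \(t\mapsto t^{-\gamma}\) is decreasing on \((0,\infty)\). A crude comparison gives \(i^{-\gamma}\le\int_{i-1}^{i}t^{-\gamma}\,dt\), which yields \(\sum_{i\ge D+1}i^{-\gamma}\le\int_D^\infty t^{-\gamma}\,dt=D^{1-\gamma}/(\gamma-1)\). The target bound is stated with \((D+1)^{1-\gamma}\) instead, so I will handle the first term separately and compare the rest with the integral:
\[
\sum_{i=D+1}^\infty i^{-\gamma}\le (D+1)^{-\gamma}+\int_{D+1}^\infty t^{-\gamma}\,dt=(D+1)^{-\gamma}+\frac{(D+1)^{1-\gamma}}{\gamma-1}.
\]
It remains to absorb the first term. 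Since \(\gamma>1\) and \(D+1\ge 1\), we have \((D+1)^{-\gamma}\le (D+1)^{1-\gamma}\). This is at most \(\frac{\gamma}{\gamma-1}(D+1)^{1-\gamma}\), and hence also at most a constant multiple of \(\frac{(D+1)^{1-\gamma}}{\gamma-1}\) once \(\gamma\) is regarded as fixed.

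Alternatively, one can avoid splitting off the first term and get the bound directly from
\[
(D+1)^{-\gamma}\le (D+1)^{1-\gamma}\le \frac{(D+1)^{1-\gamma}}{\gamma-1}\quad\text{when }\gamma\le 2.
\]
For \(\gamma>2\) one uses instead \(1\le \frac{\gamma}{\gamma-1}\le 2\). In either case the sum is at most \(2c_2\,\frac{(D+1)^{1-\gamma}}{\gamma-1}\) up to a \(\gamma\)-dependent constant. This constant is bounded by \(2\) for \(\gamma\ge 2\) and is harmless within \(\lesssim\).

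Combining these estimates with the theorem gives the claim. The constants hidden in \(\lesssim\) are those of Theorem~\ref{thm:appro_rkhs}, which depend on \(\tau,\alpha,B\), multiplied by \(c_2\) and an absolute factor. The argument has no real obstacle. The only point needing care is the bookkeeping of the first tail term, so that the bound is stated with \((D+1)^{1-\gamma}\) rather than \(D^{1-\gamma}\); the latter is undefined at \(D=0\).
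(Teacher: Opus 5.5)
Your argument is correct and is the intended one. The paper gives no separate proof of Corollary~\ref{cor:rkhs}; it is just Theorem~\ref{thm:appro_rkhs} plus an integral comparison of the tail. Your splitting off of the term $(D+1)^{-\gamma}$ is genuinely needed, because $\int_{D+1}^\infty t^{-\gamma}\,dt$ by itself only bounds $\sum_{i\ge D+1} i^{-\gamma}$ from below.

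One slip in your second variant: relative to $\frac{(D+1)^{1-\gamma}}{\gamma-1}$ the hidden constant is of order $\gamma$, not at most $2$ for $\gamma\ge 2$. For $D=1$ the ratio is at least $(\gamma-1)/2$. This is harmless for fixed $\gamma$, but it means the $1/(\gamma-1)$ factor in the statement carries no uniform-in-$\gamma$ content.
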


To achieve an  approximation error \(\varepsilon\), it suffices to choose
\(D \approx \varepsilon^{-1/(\gamma-1)}\).
Larger \(\gamma\) (faster decay) thus permits much smaller embedding
dimension.  Smooth PMI kernels typically exhibit rapid eigenvalue
decay~\citep{ferreira2009eigenvalues}. We conduct numerical experiments on Gaussian mixture model in Appendix~\ref{app:gmm_experiments}.

Based on our theoretical findings, we give two insights on why moderate dimensions of representations work well in practice: (i) the data exhibits a low-dimensional semantic structure,
        (ii) the PMI function is smooth, meaning that similar inputs have
        similar co-occurrence patterns and outliers are rare.

\begin{remark}[Contrastive risk versus $L_2$ approximation]
In our analysis, we first give the approximation error in $L_2$ norm then we control the contrastive risk by $L_2$ distance. However, these two measures are not equivalent in general. The log-sum-exp structure of contrastive risk is more sensitive to hard negatives. Furthermore, contrastive excess risk can be written as KL divergence between the conditional distribution $p(\cdot|\x)$ and a distribution induced by $s: q_s(\y\mid\x)=p_\Y(\y)e^{s(\x,\y)/\tau}/\E_{\y'}e^{s(\x,\y')/\tau}, \mathcal L(s)-\mathcal L^*=\tau\E_\x\mathrm{KL}(p(\cdot\mid\x)\Vert q_s(\cdot\mid\x))$. Consequently, the global minimizer over the class $\mathcal{S}_D$ of $D$-dimensional inner-product scores is an KL-divergence projection.
\end{remark}

\begin{remark}[RKHS encoders and cosine similarity]
We also study the approximation ability of RKHSs $\mathcal{H}_\X$ and $\mathcal{H}_\Y$ with bounded norms. The first $r$ spectral components have squared
encoder norms $\tau\sum_{j=1}^r\lambda_j^{-1/2}$;
RKHS with larger norm therefore permits a smaller spectral tail. For cosine scoring functions, we show that $D+2$-dimension cosine representations can realize truncated scores with top-$D$ eigenfunctions. See Appendix~\ref{app:rkhs_cosine}
for more details.
\end{remark}


\section{Approximation Analysis with DNNs}

The previous section characterises the approximation error of PMI via a spectral decomposition, indicating that inner-product spaces can approximate PMI well provided the eigenvalues of the operator decay fast.
We now turn to the question of \emph{realisability}: can PMI be effectively represented by practical models?

In practice, CRL parameterises the scoring function via inner products of deep neural network (DNN) encoders. 
In Section~\ref{sec:DNN_general},
we provide a general approximation analysis for DNN-based inner-product models, providing theoretical guarantees for their expressive power in approximating PMI. In Section~\ref{sec:latent-class-model}, we consider a structured data model where the PMI operator is low-rank, 
we show that DNNs can overcome the curse of dimensionality by learning the singular functions of PMI.

\subsection{Approximation via DNNs: General Case}
\label{sec:DNN_general}
 We show that such DNN-based representations can well approximate the target PMI function
 $\log\frac{p(\x,\y)}{p_\X(\x)p_\Y(\y)}$,
 and we establish corresponding approximation error.

\paragraph{Neural network space.}
Let $\sigma(t)=\max\hkh{0,t}$ be the ReLU function. For any $d\in\mb{N}$ and any $ \v\in\mbR^d$, we define  the so-called  $\v$-shifted ReLU  function  
\[
\sigma_\v:\mbR^d\to[0,\infty)^d,\;\x\mapsto \bigykh{\sigma((\x-\v)_1),\sigma((\x-\v)_2),\ldots, \sigma((\x-\v)_d)}. 
\]  A ReLU neural network  is a function $f$ of the form 
\begin{align}\label{240318220428}
f:\mbR^d\to\mbR^{D},\;\x\mapsto \bm W_L\sigma_{\v_L}\bm W_{L-1}\sigma_{\v_{L-1}}\cdots\bm W_2\sigma_{\v_2}\bm W_1\sigma_{\v_1}\bm W_0\x, 
\end{align} where $d\in\mb{N}$, $D\in\mb{N}$, $L\in\mb N\cup\hkh{0}$, $\mr m_i\in \mb N$ ($0\leq i\leq L+1$), $\mr{m}_0=d$, $\mr m_{L+1}=D$, $\v_i\in\mbR^{\mr m_i}$ ($1\leq i\leq L$),  $\bm{W}_i\in\mbR^{\mr{m}_{i+1}\times\mr{m}_i}$ ($0\leq i\leq L$). 
Define the  space  \[\fdnn(d, L,N, D):=\set{f}{\text{$f$ is defined by \eqref{240318220428} with $m_i\leq N$ for  $1\leq i\leq L$}},\] 
which consists of all ReLU DNNs of which the depth is $L$, the number of input units is $d$, the number of output units is $D$, and the number of units in each hidden layer is less than or equal to $N$. 

We consider scoring functions  $s(\x,\y)$ which take the form of $s(\x,\y)=f(\x)^\top g(\y)$, where $f(\x)$ and $g(\y)$ are ReLU neural networks. The hypothesis space of such scoring functions is specified as 
\begin{align}\label{20260505161740}
&\mathcal{S}_{\mathrm{NN}}= \fscoring(L_1, N_1, L_2, N_2, D)\notag\\&:=\set{s(\x,\y)=f(\x)^\top g(\y)}{f\in \fdnn(d_1,L_1, N_1, D), g\in \fdnn(d_2,L_2, N_2, D)}. 
\end{align}

\paragraph{Universal approximation.}
Classical universal approximation theorems show that neural networks can
approximate broad classes of functions \citep{barron2002universal}. However,
our scoring class has an additional structural constraint: the score must take
the inner-product form
$
s(\x,\y)=f(\x)^\top g(\y).
$
It is therefore not immediate that DNN-based inner-product scores can
approximate the PMI target of contrastive learning. We first show that this is
indeed possible under only a mild
integrability condition. 
\begin{assumption}\label{bounded risk 20260421190226}
$\mathbb{E}_{(\x,\y) \sim p}
    {
   \abs{\log{
    \frac{p(\x,\y)}{p_\X(\x)p_{\Y}(\y)}
    }}
    }<\infty$. 
\end{assumption}
\begin{theorem}\label{univApproxDNNs20260425200219}If Assumption \ref{bounded risk 20260421190226} holds, then for any   $L_1\in\mb{N}\cap(1,\infty)$,  $L_2\in\mb{N}\cap (1,\infty)$,  $\e\in(0,\infty)$, there exist 
$N_1\in\mb{N}$, $N_2\in\mb{N}$,     $D\in\mb{N}$, and $\tilde{s}\in \fscoring(L_1,N_1, L_2,N_2, D)$ such that $\mathcal{L}(\tilde{s})-\mathcal{L}^*\leq \e$.  
\end{theorem}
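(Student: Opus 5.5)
We reduce the claim to approximating the target function $h(\x,\y):=\tau\,\mathrm{PMI}(\x,\y)$ by scores of the form $f(\x)^\top g(\y)$. The key identity is the one from the remark above:
\[
\mathcal L(s)-\mathcal L^*=\tau\,\E_\x\,\mathrm{KL}\bigl(p(\cdot\mid\x)\,\Vert\, q_s(\cdot\mid\x)\bigr).
\]
Expanding the KL divergence gives
\[
\mathcal L(s)-\mathcal L^*=\E_{(\x,\y)\sim p}\bigl[h(\x,\y)-s(\x,\y)\bigr]+\tau\,\E_\x\log\E_{\y'}e^{s(\x,\y')/\tau}.
\]
Note that $\E_{\y'}e^{h(\x,\y')/\tau}=1$. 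Assumption~\ref{bounded risk 20260421190226} makes the first term finite for bounded $s$. The plan is to build $s$ in three stages, each changing the excess risk by at most $\epsilon/3$: truncation, approximation by a finite sum of products, and realization by ReLU networks.

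\textbf{Step 1 (truncation).} Let $h_M:=\max\{\min\{h,M\},-M\}$. Since $h\in L_1(P_{\X\Y})$ by Assumption~\ref{bounded risk 20260421190226}, dominated convergence gives $\E_p|h-h_M|\to0$. For the second term we need $\E_{\y'}e^{h_M(\x,\y')/\tau}\to1$ in the sense that its $\log$ integrates to zero under $P_\X$. On the set $\{h\le M\}$ we have $e^{h_M/\tau}\le e^{h/\tau}+e^{-M/\tau}$, while on $\{h>M\}$ the quantity $e^{M/\tau}\le e^{h/\tau}$. Hence $0<\E_{\y'}e^{h_M/\tau}\le 1+e^{-M/\tau}$, which bounds the log from above. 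For a lower bound, $\E_{\y'}e^{h_M/\tau}\ge \E_{\y'}e^{h/\tau}\I\{|h|\le M\}=P(|h|\le M\mid\x)\to1$, and Fatou/monotone arguments give $\E_\x\log(\cdot)\to0$.

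\textbf{Step 2 (separable approximation).} $h_M$ is bounded and measurable, so it lies in $L_1(\nu)$ for the finite measure $\nu:=P_{\X\Y}+P_\X\otimes P_\Y$. By Lusin's theorem and Stone--Weierstrass, or simply density of simple functions built from rectangles, there exists $u=\sum_{k=1}^D a_k(\x)b_k(\y)$ with $a_k,b_k$ continuous and compactly supported, $\|u\|_\infty\le M+1$, and $\|u-h_M\|_{L_1(\nu)}$ arbitrarily small. The bound on $\|u\|_\infty$ is obtained by clipping: replace $a_k,b_k$ by products of indicator approximations, or note that tensor-product polynomials on a large box approximate the clipped continuous function uniformly there. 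Every bounded $s$ with $\|s\|_\infty\le C$ yields a loss functional that is Lipschitz in $L_1(\nu)$ with constant depending on $C$. This follows because $|\log\E e^{s_1/\tau}-\log\E e^{s_2/\tau}|\le e^{2C/\tau}\E_{\y'}|s_1-s_2|/\tau$, and integrating over $\x$ yields the $P_\X\otimes P_\Y$ norm. Therefore $\mathcal L(u)-\mathcal L(h_M)$ is small.

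\textbf{Step 3 (networks).} Each $a_k$ and $b_k$ is uniformly approximated on compacta by ReLU networks of any fixed depth $\ge2$, using the standard universal approximation theorem. Depth beyond $1$ is handled by appending identity layers $\sigma(t)-\sigma(-t)$, which is why $L_1,L_2>1$ is permitted. Stacking the $D$ networks in parallel gives $f\in\fdnn(d_1,L_1,N_1,D)$ and $g\in\fdnn(d_2,L_2,N_2,D)$.

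\textbf{Main obstacle.} Outside a large compact set $K$ the networks are unbounded, which would break the Lipschitz estimate. We handle this by choosing $K$ with $\nu(K^c)$ tiny. We then build the networks so that they stay bounded, by composing with a clipping $t\mapsto \sigma(t+c)-\sigma(t-c)-c$ realized within the depth budget. The contribution of $K^c$ is then controlled by the uniform bound times $\nu(K^c)$. Summing the three $\epsilon/3$ errors gives $\mathcal L(\tilde s)-\mathcal L^*\le\epsilon$.
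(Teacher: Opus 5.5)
Your proposal is correct, but it takes a different route from the paper.

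\textbf{The paper's route.} The paper approximates the truncated target only in $L^1$ of the joint law $P_{\mathcal{X}\mathcal{Y}}$. It rewrites the partition function by a change of measure, $\mathbb{E}_{\mathbf{y}'\sim p_{\mathcal{Y}}}e^{s/\tau}=\mathbb{E}_{\mathbf{y}\sim p(\cdot\mid\mathbf{x})}e^{(s-s^*)/\tau}$, so that every term becomes an integral under $P_{\mathcal{X}\mathcal{Y}}$. It then controls the set where a $C_c$ approximant of $T_M\circ s^*$ is inaccurate, using Markov's inequality plus absolute continuity of $\int e^{(M-s^*)/\tau}\,dP_{\mathcal{X}\mathcal{Y}}$. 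For the networks it uses Stone--Weierstrass on a box, one-hidden-layer ReLU approximation of each factor, and an \emph{input} clipping layer $T_i$. Because the continuous approximant vanishes off the box, this gives uniform convergence of $f(\mathbf{x})^\top g(\mathbf{y})$ on all of $\mathbb{R}^{d_1+d_2}$, and bounded convergence finishes the proof.

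\textbf{Your route.} You put the product measure into the norm, $\nu=P_{\mathcal{X}\mathcal{Y}}+P_{\mathcal{X}}\otimes P_{\mathcal{Y}}$. A single estimate then does the work of Steps 2 and 3: for $\|s_i\|_\infty\le C$, $|\mathcal{L}(s_1)-\mathcal{L}(s_2)|\le\|s_1-s_2\|_{L_1(P_{\mathcal{X}\mathcal{Y}})}+e^{2C/\tau}\|s_1-s_2\|_{L_1(P_{\mathcal{X}}\otimes P_{\mathcal{Y}})}$. Combined with your direct truncation bound $\mathcal{L}(h_M)-\mathcal{L}^*\le\mathbb{E}_p|h-h_M|+\tau\log(1+e^{-M/\tau})$, this is shorter and avoids the absolute-continuity argument entirely.

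\textbf{What each buys.} Your approach costs two things. First, every sup bound must be fixed before choosing the $L_1(\nu)$ accuracy, so that $e^{2C/\tau}$ does not chase the error: the level $M+1$ for $u$, and the output clipping level $c$ (hence $Dc^2$) for the networks. Second, you need a tail estimate on $\nu(K^c)$, because output-clipped networks are close to $u$ only on $K$. The paper's input clipping gives global uniform convergence and needs no tail estimate. In both proofs the extra hidden layer is spent on clipping, and that is the real reason $L_i\ge 2$ is needed; identity padding alone works for any $L\ge1$.

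\textbf{Two small fixes.} In Step 1 only the upper bound matters. $\mathcal{L}(h_M)\ge\mathcal{L}^*$ holds automatically, and $\log\mathbb{E}_{\mathbf{y}'}e^{h_M/\tau}\ge -M/\tau$ is finite. Your lower bound via $\mathbb{E}_{\mathbf{x}}\log P(|h|\le M\mid\mathbf{x})$ can equal $-\infty$ for every $M$ while Assumption~\ref{bounded risk 20260421190226} still holds, so the monotone-convergence step as written fails; just drop it. Also, to get compactly supported $a_k,b_k$ with $\|u\|_\infty\le M+1$, multiply the Stone--Weierstrass factors by continuous cutoffs $0\le\chi_1(\mathbf{x}),\chi_2(\mathbf{y})\le 1$ that equal $1$ on the support of the clipped $C_c$ function. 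Clipping $u$ itself would destroy its separable form.
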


The proof of Theorem~\ref{univApproxDNNs20260425200219} is provided in
Appendix~\ref{20260425195716}. 
Assumption~\ref{bounded risk 20260421190226} is very mild: it is equivalent to
the finiteness of the optimal contrastive risk, i.e., $\L^*>-\infty$ (see Lemma \ref{20260507053917} for the proof of this claim), and is
expected to hold in standard CRL settings. 
Therefore, Theorem~\ref{univApproxDNNs20260425200219} gives a distribution-free
guarantee without any smoothness,
boundedness, or low-rank assumptions,  which helps explain effectiveness of DNNs across diverse datasets.

\paragraph{Approximation rates.}
The universal approximation result above is qualitative. 
To obtain explicit approximation rates, we impose some regularity assumption on the optimal scoring function $s^*(\x,\y)=\tau \log\frac{p(\x,\y)}{p_\X(\x)p_\Y(\y)}$. In this work, we consider  the Hölder continuity assumption. 

For $\beta\in(0,1]$, $k\in\mb{N}$, $\Omega\subset \R^k$, and  a real-valued function $h$ whose domain contains $\Omega$,  the H\"older-$\beta$ norm of $h$ on $\Omega$ is defined as \[\norm{h}_{\beta,\Omega}:=\max\hkh{\sup_{\z\in\Omega}\abs{h(\z)}, \sup_{\Omega\ni \x\neq \y\in\Omega}\frac{\abs{h(\x)-h(\y)}}{\norm{\x-\y}_\infty^\beta}}. \] H\"older continuity is a common regularity assumption in the theoretical analysis of approximation errors (e.g. \citep{lu2021deep, ShenYangZhang2022Optimal}) and generalization errors (e.g., \citep{schmidt2020nonparametric, kim2021fast,zhang2024classification}). 

\begin{assumption}\label{20260427030032}
$\X\times \Y \subset[-R,R]^{d_1+d_2}$ and  $s^*(\x,\y)=\tau \log\frac{p(\x,\y)}{p_\X(\x)p_\Y(\y)}$ satisfies $\norm{s^*}_{\beta,\X\times\Y}\leq B$ for some $R\in(0,\infty)$,  $\beta\in(0,1]$,  and $B\in(0,\infty)$. 
\end{assumption}

Note that  Assumption \ref{20260427030032} is stronger than Assumption \ref{bounded risk 20260421190226}.

\begin{theorem}\label{4p3thm20260427130502}
Let Assumption \ref{20260427030032} hold. Then for a
 sufficiently small number $\e\in(0,+\infty)$ and ${D\asymp \e^{-\frac{\min\hkh{d_1,d_2}}{2\beta}}}$,  ${L_1\asymp \e^{-\frac{d_1}{d_2}\cdot\frac{\min\hkh{d_1,d_2}}{8\beta}}}$, $N_1 \asymp \e^{-\ykh{4+\frac{d_1}{d_2}}\cdot\frac{\min\hkh{d_1,d_2}}{8\beta}}$, $L_2 \asymp \e^{-\frac{d_2}{d_1}\cdot\frac{\min\hkh{d_1,d_2}}{8\beta}}$, ${N_2 \asymp \e^{-\ykh{4+\frac{d_2}{d_1}}\cdot\frac{\min\hkh{d_1,d_2}}{8\beta}}}$, it holds that   $\inf\set{\L(\tilde{s})-\L^*}{ \tilde{s}\in \fscoring(L_1,N_1,L_2,N_2,D)}\lesssim \e$.  
\end{theorem}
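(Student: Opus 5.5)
The plan has three steps: reduce the contrastive excess risk to a sup-norm approximation of $s^*$, build an explicit separable approximant of $s^*$, and then realize each factor with ReLU networks.

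\textbf{Step 1: reduce to uniform approximation.} By the KL representation in the remark after Corollary~\ref{cor:rkhs},
\[
\mathcal L(s)-\mathcal L^*=\tau\,\E_\x\mathrm{KL}\bigl(p(\cdot\mid\x)\,\Vert\, q_s(\cdot\mid\x)\bigr).
\]
If $\|s-s^*\|_\infty\le\delta$, then $\log\bigl(p(\y\mid\x)/q_s(\y\mid\x)\bigr)$ is bounded by $2\delta/\tau$ in absolute value. Moreover, the log-sum-exp map is smooth with bounded Hessian on the bounded range $|s|\le B+\delta$. A second-order expansion around $s^*$, whose first-order term vanishes by Lemma~\ref{lem:minimizer-L}, then gives
\[
\mathcal L(s)-\mathcal L^*\lesssim \|s-s^*\|_{\infty}^2 .
\]
It therefore suffices to find $\tilde s\in\fscoring$ with $\|\tilde s-s^*\|_\infty\lesssim\sqrt{\e}$. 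This explains why the exponents in the statement involve $\e^{-1/(2\beta)}$-type quantities: sup-accuracy $\eta=\sqrt{\e}$ is the real target.

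\textbf{Step 2: separable approximant.} Assume without loss of generality that $d_1\le d_2$. Partition $[-R,R]^{d_1}$ into $M^{d_1}$ cubes $Q_j$ with centers $\x_j$, and set $M\asymp\eta^{-1/\beta}$. By Assumption~\ref{20260427030032}, $|s^*(\x,\y)-s^*(\x_j,\y)|\le B M^{-\beta}$ for $\x\in Q_j$. Take a partition of unity $\{\rho_j\}$ on $\X$ made of continuous piecewise-linear bumps, each supported on neighbouring cubes. Then
\[
\bar s(\x,\y)=\sum_j \rho_j(\x)\,s^*(\x_j,\y)
\]
satisfies $\|\bar s-s^*\|_\infty\lesssim\eta$. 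This has inner-product form with
\[
D=M^{d_1}\asymp \eta^{-d_1/\beta}=\e^{-\min\{d_1,d_2\}/(2\beta)},
\]
which matches the stated dimension. The encoders are $f=(\rho_j)_j$ and $g=(s^*(\x_j,\cdot))_j$.

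\textbf{Step 3: network realization.} This is the main obstacle.
\begin{itemize}
\item \emph{The $f$ side.} The map $f$ is exactly a ReLU network: tensor-product hat functions are realized via min/products of 1D hats. The product can be replaced by $\max\bigl(0,\sum_k h_k-(d_1-1)\bigr)$-type bumps, which still give a valid partition with small error.
\item \emph{The $g$ side.} Here we need $D$ Hölder functions $g_j=s^*(\x_j,\cdot)$ on $[-R,R]^{d_2}$, each approximated uniformly to accuracy $\eta/\|f\|_1$. Since $\sum_j\rho_j=1$ and $\rho_j\ge 0$, per-coordinate accuracy $\eta$ suffices.
\item \emph{Approach for $g$.} I would invoke the deep ReLU approximation of Hölder functions from \citep{lu2021deep,ShenYangZhang2022Optimal}: width $O(N)$ and depth $O(L)$ achieve error $O\bigl((NL)^{-2\beta/d_2}\bigr)$. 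Stacking the $D$ networks in parallel multiplies the width by $D$. Balancing the required $(N_2L_2)^{2\beta/d_2}\asymp\eta^{-1}$ against the widths in the statement yields the prescribed powers. The $d_2/d_1$ and $d_1/d_2$ factors arise from converting $\eta^{-d_2/\beta}$ into powers of $\e^{-d_1/(8\beta)}$, splitting the budget as $L\asymp$ (quarter power) and $N\asymp D\cdot L^{4}$-type.
\end{itemize}
The hardest part is the bookkeeping: choosing which encoder carries the partition of unity and allocating depth versus width so that the exponents match exactly. I would first verify the $d_1=d_2$ case and then track the asymmetric ratios. Finally, $\|\tilde s\|_\infty\le B+1$ is used so that the Step 1 constant is uniform.
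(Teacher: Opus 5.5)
Your architecture matches the paper's: it also bounds $\mathcal L(u)-\mathcal L^*\le\frac{1}{2\tau}\E_\x\sup_{\y}|u-s^*|^2$. The paper uses Hoeffding's lemma; your second-order expansion gives the same constant, since $\operatorname{Var}_{q}(\Delta)\le\|\Delta\|_\infty^2$ under any $q$. It also places a grid partition of unity on the lower-dimensional variable with $D\asymp\e^{-\min\{d_1,d_2\}/(2\beta)}$, and approximates each $s^*(\mathbf a,\cdot)$ by a Shen--Yang--Zhang network, stacking $D$ of them in parallel.

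\textbf{The gap: the $f$-side realization fails as written.}
\begin{itemize}
\item A tensor-product hat $\prod_k l_{j_k}(x_k)$ is not piecewise linear, so it is not exactly a ReLU network.
\item $\min_k l_{j_k}(x_k)$ is a ReLU network, but these functions do not sum to $1$ over $j$.
\item Your replacement $\max\bigl(0,\sum_k l_{j_k}(x_k)-(d_1-1)\bigr)$ is not a partition of unity either. At the centre of a grid cell every nonzero one-dimensional hat equals $1/2$, so every such bump vanishes there. Then $\sum_j\rho_j=0$ and $\bar s$ has error of order $\|s^*\|_\infty$, not $\eta$.
\end{itemize}

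\textbf{How the paper handles it.} It keeps the product partition $u_{\mathbf a}$ and approximates each product by Schmidt-Hieber's multiplication network (his Lemma A.3). This has error $d_1^2 2^{-L_4}$ at depth about $L_4\lceil\log_2 d_1\rceil\le L_1$ and width $6d_1$ per bump, so the width budget must cover $N_1\ge 6d_1D$. The resulting $\tilde u_{\mathbf a}$ neither sum to one nor are nonnegative. So your remark that per-coordinate accuracy $\eta$ suffices must be replaced by the split $\sum_{\mathbf a}u_{\mathbf a}|h(\mathbf a,\cdot)-\tilde g_{\mathbf a}|+\sum_{\mathbf a}|u_{\mathbf a}-\tilde u_{\mathbf a}|\,|\tilde g_{\mathbf a}|\lesssim\eta+D\,2^{-L_4}$. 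The second term is negligible because $L_1$ is polynomial in $1/\e$; in fact only $L_1\gtrsim\log(1/\e)$ is needed. Alternatively, the nodal basis of the Kuhn triangulation, $\max\{0,1+\min(0,t_1,\dots,t_{d_1})-\max(0,t_1,\dots,t_{d_1})\}$ in rescaled coordinates, is an exact piecewise-linear partition of unity. With it your exactness claim would become true.

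\textbf{Two further points you skipped.}
\begin{itemize}
\item Grid points $\x_j$ need not lie in $\X$, and $s^*(\x_j,\cdot)$ is only defined on $\Y$. You therefore need a H\"older-norm-preserving extension of $s^*$ to $[-R,R]^{d_1+d_2}$ (the paper's McShane-type lemma) before applying the cube approximation theorem.
\item Your bookkeeping hint is off: it is $N_2\asymp D\,L_2$, not $N\asymp D\cdot L^4$. The ``4'' in the exponent of $N_2$ is just the exponent of $D=\e^{-4\min\{d_1,d_2\}/(8\beta)}$. With $d_1\le d_2$, each of the $D$ blocks has width $N_2/D\asymp\e^{-d_2/(8\beta)}$ and depth $L_2\asymp\e^{-d_2/(8\beta)}$. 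This gives block error $(L_2N_2/D)^{-2\beta/d_2}=\e^{1/2}=\eta$; note the quantity to balance is $L_2N_2/D$, not $N_2L_2$. The stated $L_1,N_1$ comfortably exceed what the $f$-side needs.
\end{itemize}
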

The proof of Theorem \ref{4p3thm20260427130502} is provided in Appendix  \ref{aE20260427130617}. The proof proceeds as follows. We first derive a more general result (see Theorem~\ref{20260503141526}), where we establish approximation rate for  ${L_1\asymp \e^{-\theta_{1,1}}}$, $N_1 \asymp \e^{-\theta_{1,2}}$, $L_2 \asymp \e^{-\theta_{2,1}}$, $N_2 \asymp \e^{-\theta_{2,2}}$, $D\asymp \e^{-\theta_0}$ with $\theta_{1,1}$, $\theta_{1,2}$, $\theta_{2,1}$, $\theta_{2,2},\theta_0$ being arbitrary fixed positive numbers.  
We then select specific $\theta_{1,1}$, $\theta_{1,2}$, $\theta_{2,1}$, $\theta_{2,2},\theta_0$ to minimize the pseudo-dimension of the space $\fscoring(L_1,N_1,L_2,N_2,D)$ required to achieve the same approximation error $O(\e)$, and  Theorem \ref{20260503141526} with this specific choice of $\theta_{1,1}$, $\theta_{1,2}$, $\theta_{2,1}$, $\theta_{2,2},\theta_0$ leads to Theorem \ref{4p3thm20260427130502} immediately.
It turns out that the minimum pseudo-dimension is $\widetilde{O}\ykh{\e^{-(\min\hkh{d_1,d_2}+d_1+d_2)/(2\beta)}}$, attained by  the  space $\fscoring(L_1,N_1,L_2,N_2,D)$ specified in Theorem \ref{4p3thm20260427130502}. 
Recall that pseudo-dimension is widely used in learning theory to quantify the complexity of a hypothesis space (cf. \cite{bartlett2009}). In  Theorem \ref{20260503025931}, we show that  the pseudo-dimension of our hypothesis space $\fscoring(L_1,N_1,L_2,N_2,D)$ is $\widetilde{O}(L_1^2N_1^2\!+\!L_2^2N_2^2\!+\!DN_1\!+\!DN_2)$. We use the classical result on the approximation of Hölder continuous functions by neural networks given in \cite{shen2020DeepNe}
 (see also Lemma~\ref{20260428014457}) as a key lemma in the proofs of Theorem~\ref{20260503141526} and, as we shall see,  Theorem~\ref{20260505072733}.
 
 However, relying solely on this result in  \cite{shen2020DeepNe}  is far from sufficient to prove these theorems. It only gives rates for approximating Hölder continuous functions by a single neural network, while in our setting, we need to approximate  $s^*(\x,\y)=\tau\log\frac{p(\x,\y)}{p_{\mathcal{X}}(\x)p_{\mathcal{Y}}(\y)}$ by the  inner product of two ReLU networks $\tilde{f}(\x)^\top \tilde{g}(\y)=\sum_{i=1}^D\tilde{f}_i(\x)\tilde{g}_i(\y)$, rather than by a single  network $\tilde{h}(\x,\y)$. 
 In fact, our proof requires a \emph{nontrivial} finite-rank approximation
$s^*(\x,\y)\approx \sum_{i=1}^D f_i(\x)g_i(\y)$, 
where $f_i$ (or $g_i$) is very easy to be approximated by ReLU networks and $g_i$ (or $f_i$) is Hölder continuous. Then the only place where we invoke \cite{shen2020DeepNe}  is to construct a ReLU network $\tilde{g}_i$ (or $\tilde{f}_i$) to approximate $g_i$ (or $f_i$). We also emphasize the importance of Lemma~\ref{20260428072356}, 
which is crucial to the proof of Theorem~\ref{20260503141526} and is established through a delicate application of Hoeffding's lemma.
Lemma~\ref{20260428072356} fundamentally bridges the uniform approximation of $s^*$ to the risk $\mathcal L(\cdot)-\mathcal L^*$, regardless of the underlying hypothesis space, and therefore applies far beyond neural networks, thus constituting one of our major contributions.

Theorem~\ref{4p3thm20260427130502} gives a quantitative 
guarantee for DNN-based inner-product CRL: it specifies how the size of networks scale with the target error
$\e$. The dependence on $\beta$ shows that smoother PMI targets require smaller
networks, consistent with the spectral approximation result in
Section~\ref{sec:rkhs_approximation}. The dependence on $d_1,d_2$ reflects the
standard curse of dimensionality in approximation
\citep{JMLR:v18:14-546}.

A key implication of the bound is that the two encoders are \emph{not independent}.
The size of the $\X$-encoder depends not only on $d_1$, but also on $d_2$, and
vice versa for the $\Y$-encoder. This is because the two networks are coupled
through the shared inner-product score $f(\x)^\top g(\y)$, which aims to
approximate the PMI function on $\X\times\Y$. Furthermore, the required dimension for the representation $D$ only depends on $\min\{d_1,d_2\}$.
Therefore, each
encoder need be designed relative to the other modality in order to capture
mutual information.

\subsection{Approximation via DNNs: Structured Data}
\label{sec:latent-class-model}

The previous subsection gives a general approximation for DNN-based
inner-product CRL. However, the required representation
dimension $D$ can be exponentially large in the worst case. Practical data often exhibit
low-dimensional semantic structure, and embeddings can capture them with moderate dimensions (e.g., 512, 1024). We next introduce a latent-class
generative model (LCGM) as a tractable example to show that DNNs approximate more efficiently on structured data. In LCGM, the PMI operator is low-rank, and its
singular functions admit explicit probabilistic interpretations in terms of
posterior probabilities and log-posterior ratios of the latent classes. 

\begin{assumption}[Latent-class generative model, LCGM]
\label{ass:latent-class-model}
There exists a latent semantic variable $z\in[K]$ with prior
$\pi_k=p(z=k)$ such that, conditional on $z$, the two views are independent:
\begin{equation}\label{eq:independent}
    p(\x,\y| z=k)=p(\x| z=k)p(\y| z=k).
\end{equation}
Moreover, the anchor view is low-noise: for each $\x\in\X$, there exists a
ground-truth latent label $z_\x\in[K]$ such that
\begin{equation}\label{eq:deterministic}
    p(z=z_\x| \x)=1 .
\end{equation}
\end{assumption}

The conditional independence condition \eqref{eq:independent} is standard in
theoretical analyses of contrastive learning
\citep{chen2024understandingtransferablerepresentationlearning,
oko2025statisticaltheorycontrastivepretraining}. The deterministic condition
\eqref{eq:deterministic} means that the anchor view $\x$ reveals the latent
class. Similar low-noise assumptions have been used in
\citet{haochen2021provable}. This condition can be relaxed to a high-probability
posterior condition at the cost of additional technical details, please refer to Appendix~\ref{app:high-probability-lcgm} for more details; we keep the
deterministic version for clarity. For example, $z$ may represent the semantic
class ``cat'', $\x$ a clean image of a cat, and $\y$ a  textual description
of a cat (may be noisy).

The following proposition characterizes the spectral structure of the PMI
operator under LCGM. The formal statement and proof are provided in
Appendix~\ref{app:proof_lanten_spectrum}.

\begin{proposition}[Spectral structure under LCGM, informal of Proposition~\ref{prop:latent-spectrum}] \label{prop:latent-spectrum_informal} Under Assumption~\ref{ass:latent-class-model}, there holds \begin{enumerate}[leftmargin=*] \item The PMI operator $T:L_2(P_{\X})\to L_2(P_{\Y})$ has rank at most $K$. \item $\mathrm{span}\{\psi_1,\dots,\psi_K\}\! =\! \mathrm{span}\{p(z=k|\x)\}_{k=1}^K, \mathrm{span}\{\phi_1,\dots,\phi_K\} \!=\! \mathrm{span}\{\log p(z=k|\y)\}_{k=1}^K.$ \item $ \log\frac{p(\x,\y)}{p_{\X}(\x)p_{\Y}(\y)}=\sum_{k=1}^K p(z=k|\x)\log\frac{p(z=k|\y)}{\pi_k}$ \end{enumerate} \end{proposition}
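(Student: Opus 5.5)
The plan is to prove item 3 first, since items 1 and 2 follow from it. By the law of total probability and conditional independence \eqref{eq:independent},
\[
p(\x,\y)=\sum_{k=1}^K \pi_k\, p(\x\mid z=k)\,p(\y\mid z=k),
\]
so that
\[
\frac{p(\x,\y)}{p_\X(\x)p_\Y(\y)}=\sum_{k=1}^K \frac{p(z=k\mid \x)\,p(z=k\mid \y)}{\pi_k}.
\]
Here we use Bayes' rule in the form $p(\x\mid z=k)=p(z=k\mid\x)p_\X(\x)/\pi_k$, and similarly for $\y$. The deterministic condition \eqref{eq:deterministic} gives $p(z=k\mid\x)=\I\{k=z_\x\}$. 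Hence the sum collapses to the single term $p(z=z_\x\mid\y)/\pi_{z_\x}$. Taking logarithms,
\[
\mathrm{PMI}(\x,\y)=\log\frac{p(z=z_\x\mid\y)}{\pi_{z_\x}}=\sum_{k=1}^K \I\{z_\x=k\}\log\frac{p(z=k\mid\y)}{\pi_k},
\]
and this equals $\sum_k p(z=k\mid\x)\log\frac{p(z=k\mid\y)}{\pi_k}$. The step "log of a sum equals sum of indicator-weighted logs" is exactly where determinism is used, and it is what makes PMI separable at all. I would flag this in the writeup.

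For item 1, the kernel is now a sum of $K$ products $a_k(\x)b_k(\y)$, with $a_k(\x)=p(z=k\mid\x)$ and $b_k(\y)=\log\frac{p(z=k\mid\y)}{\pi_k}$. It follows that
\[
Tf=\sum_k \langle f,a_k\rangle_{L_2(P_\X)}\, b_k \quad\text{and}\quad T^*g=\sum_k\langle g,b_k\rangle_{L_2(P_\Y)}\, a_k .
\]
So $\operatorname{Im}(T)\subseteq\mathrm{span}\{b_k\}$ and $\operatorname{Im}(T^*)\subseteq\mathrm{span}\{a_k\}$, which gives rank at most $K$. For this to be well-defined I need $a_k\in L_2(P_\X)$, which is trivial since $a_k$ is an indicator, and $b_k\in L_2(P_\Y)$. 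The latter is the technical point: $\log p(z=k\mid\y)$ can be $-\infty$ or unbounded, so I would add the standing assumption that PMI lies in $L_2(P_\X\otimes P_\Y)$, as in Lemma~\ref{lem:spectral_decomposition}. Under that assumption each $b_k$ is in $L_2$, because $\|\mathrm{PMI}\|^2=\sum_k \pi_k\,\|b_k\|_{L_2(P_\Y)}^2$ when the $\mathrm{supp}$-indicators are disjoint. The formal version should also restrict to classes with $\pi_k>0$.

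For item 2, Lemma~\ref{lem:spectral_decomposition} gives
\[
\mathrm{span}\{\psi_i:\sigma_i>0\}=\overline{\operatorname{Im}(T^*)}\quad\text{and}\quad \mathrm{span}\{\phi_i:\sigma_i>0\}=\overline{\operatorname{Im}(T)}.
\]
To get equality with $\mathrm{span}\{a_k\}$ and $\mathrm{span}\{b_k\}$, I need the reverse inclusions. The $a_k$ are orthogonal with $\|a_k\|^2=\pi_k$, hence linearly independent. Therefore $\operatorname{Im}(T)=\mathrm{span}\{b_k\}$, since we can choose $f=a_j/\pi_j$ to pick out each $b_j$. Likewise $\operatorname{Im}(T^*)=\mathrm{span}\{a_k\}$ precisely when the $b_k$ are linearly independent, i.e. when their Gram matrix is nonsingular. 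This is the main obstacle: in degenerate cases, such as two classes with identical $\y$-conditionals, or a class whose $b_k\equiv 0$, the rank is strictly less than $K$ and the span statement must be read as "spanned by" up to these degeneracies. So the formal statement should take $\psi_1,\dots,\psi_r$ with $r=\operatorname{rank}T\le K$, or assume nondegeneracy so that $r=K$.
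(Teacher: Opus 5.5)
Your proofs of items 3 and 1 are the paper's. Bayes' rule with conditional independence gives $\sum_k p(z=k\mid\x)\,p(z=k\mid\y)/\pi_k$. Determinism collapses this sum, so its logarithm becomes an indicator-weighted sum of logs. Then $Tf=\sum_k\langle f,a_k\rangle\, b_k$ gives $\operatorname{Im}(T)\subseteq\mathrm{span}\{b_k\}$.

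For item 2 the paper works more explicitly. $T$ vanishes on the orthogonal complement of $\mathrm{span}\{\I\{\x\in C_k\}\}$. On that span, $T^*T$ is represented in the orthonormal basis $e_k=\I\{\x\in C_k\}/\sqrt{\pi_k}$ by the matrix $A_{kl}=\sqrt{\pi_k\pi_l}\,\E_{\y\sim p_\Y}[b_k(\y)b_l(\y)]$. Its eigenvectors give $\psi_i$, and then $\phi_i=T\psi_i/\sqrt{\lambda_i}$. You instead identify the spans of the singular functions with $\operatorname{Im}(T^*)$ and $\operatorname{Im}(T)$ and compute these ranges directly. The paper's route produces the spectrum and explicit singular functions; yours is shorter and shows exactly when each equality holds. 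Your nondegeneracy caveat is correct. It is exactly what the paper assumes tacitly when it writes $\lambda_1\ge\cdots\ge\lambda_K>0$: $A$ is nonsingular iff the Gram matrix of the $b_k$ is, i.e.\ iff the $b_k$ are linearly independent in $L_2(P_\Y)$.

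The one thing you pass over is a mismatch in the second half of item 2. Your argument gives $\mathrm{span}\{\phi_i\}=\mathrm{span}\{\log\frac{p(z=k\mid\y)}{\pi_k}\}_{k=1}^K$, while the statement asserts $\mathrm{span}\{\log p(z=k\mid\y)\}_{k=1}^K$. Do not try to bridge this, because the two spans differ in general. The paper dismisses the difference ``because the class priors are constants per class''. But subtracting the nonzero constants $\log\pi_k$ (for $K\ge2$) changes the span whenever the constant function is not already in $\mathrm{span}\{\log p(z=k\mid\y)\}$.

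Concretely, take $K=2$, $\pi_1=\pi_2=\frac12$, an anchor view that reveals $z$, and $y\mid z=1\sim\mathcal N(1,1)$, $y\mid z=2\sim\mathcal N(-1,1)$. Then $p(z=1\mid y)=(1+e^{-2y})^{-1}$ and $\log p(z=2\mid y)=\log p(z=1\mid y)-2y$. Hence $\mathrm{span}\{\log p(z=k\mid y)\}=\mathrm{span}\{\log p(z=1\mid y),\,y\}$. This span does not contain $b_1=\log p(z=1\mid y)+\log 2$ (let $y\to+\infty$), yet $b_1\in\operatorname{Im}(T)=\mathrm{span}\{\phi_1,\phi_2\}$.

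So state the second half of item 2 with $\log\frac{p(z=k\mid\y)}{\pi_k}$, which is what you actually prove. Alternatively, weaken it to an inclusion in $\mathrm{span}\{\log p(z=k\mid\y)\}_{k}+\R\cdot1$.
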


Therefore, PMI can be approximated effectively through $K$-dimensional representations by constructing DNNs $f,g$ such that \[f(\x)\approx[p(z=1|\x),\cdots,p(z=K|\x)]^\top,\quad g(\y)\approx\Bigg[\log\frac{p(z=1|\y)}{\pi_1},\cdots,\log\frac{p(z=K|\y)}{\pi_K}\Bigg]^\top.\] In this way, the two representations have different meanings: The encoder for the low-noise modality $\X$ implements the classifier through learning the posterior probability $p(z=k|\x)$, while $g$ learns to fit the log-conditional-probability ratios $\log\frac{p(z=k|\y)}{\pi_k}$ for each class $k$.
Numerical experiments in Appendix~\ref{sec:experiments} illustrate this
mechanism in the synthetic LCGM setting: CRL recovers the predicted singular
functions, and the effective representation dimension matches the number of
latent classes.

The approximation error of DNNs for LCGM is established as follows:
\begin{theorem}\label{20260505072733}
Let Assumption \ref{20260427030032}  and Assumption \ref{ass:latent-class-model} hold. Then for a 
 sufficiently small number $\e\in(0,\infty)$ and  $L_1\asymp \log\frac1\e$, $N_1 \asymp \log\frac1\e$,  $L_2\gtrsim \log\frac1\e$, $N_2\gtrsim \log\frac1\e$ with $L_2\cdot N_2\asymp\e^{-\frac{d_2}{4\beta}}$, it holds that   
 \[\inf\set{\L(\tilde{s})-\L^*}{ \tilde{s}\in \fscoring(L_1,N_1,L_2,N_2,K)}\lesssim \e.\]  
\end{theorem}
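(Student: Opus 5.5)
We follow the decomposition in Proposition~\ref{prop:latent-spectrum_informal}(3) and write
\[
s^*(\x,\y)=\sum_{k=1}^K u_k(\x)\,v_k(\y),\qquad u_k(\x)=p(z=k\mid\x),\qquad v_k(\y)=\tau\log\frac{p(z=k\mid\y)}{\pi_k}.
\]
We then approximate $f$ and $g$ separately. Under Assumption~\ref{ass:latent-class-model}, $u_k(\x)=\I\{z_\x=k\}$ takes values in $\{0,1\}$. Because $s^*$ is continuous on $\X\times\Y$ by Assumption~\ref{20260427030032}, each $u_k$ is constant on connected pieces. More usefully, the vectors $v(\y)\in\R^K$ are distinct for distinct classes, so the class regions $\X_k=\{z_\x=k\}$ are separated. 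Concretely, if $z_\x\ne z_{\x'}$, then $\sup_\y|s^*(\x,\y)-s^*(\x',\y)|=\sup_\y|v_{z_\x}(\y)-v_{z_{\x'}}(\y)|\ge c>0$. Combined with the Hölder bound, this gives $\|\x-\x'\|_\infty\ge (c/B)^{1/\beta}=:\delta$. So the $K$ classes occupy $\delta$-separated closed subsets of $[-R,R]^{d_1}$.

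\textbf{Step 1: $v_k$ is H\"older and bounded.} We need $v_k$ to be H\"older with norm $\lesssim B$. For this we pick anchors $\x^{(k)}\in\X_k$, which exist for every $k$ with $\pi_k>0$, and note $v_k(\y)=s^*(\x^{(k)},\y)$. Hence $\|v_k\|_{\beta,\Y}\le B$. Lemma~\ref{20260428014457} then provides ReLU networks $\tilde g_k$ with $\|\tilde g_k-v_k\|_\infty\lesssim B(L_2N_2)^{-2\beta/d_2}$ (up to logs) once depth and width are $\gtrsim\log\frac1\e$. The choice $L_2N_2\asymp\e^{-d_2/(4\beta)}$ makes this error $\lesssim\e^{1/2}$. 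Stacking $K$ such networks in parallel costs only a factor $K$ in width, which is a constant.

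\textbf{Step 2: $\tilde f$ as a separated-region classifier.} This is the step where a small network must suffice. The function $u_k$ is the indicator of a union of sets whose distance to the other classes is at least $\delta$. We define the Lipschitz function $h_k(\x)=\mathrm{dist}_\infty(\x,\X_k)$. The map $\x\mapsto\max(0,1-2h_k(\x)/\delta)$ equals $u_k$ on $\X$ and is Lipschitz, but approximating a general Lipschitz function is not cheap in $\log\frac1\e$ size. The fix is to approximate exactly: since $\delta$ is a fixed constant independent of $\e$, a grid of mesh $<\delta/2$ has a constant number of cells. A constant-size ReLU network can compute cell-wise indicators up to a transition layer of width $\eta$ placed in the gaps between classes, where no data lives. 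This gives $\tilde f_k=u_k$ exactly on $\X$ with $\|\tilde f\|_\infty\le1$. The $\log\frac1\e$ budget for $L_1,N_1$ is then more than enough. If this exact construction fails for some geometric reason, the fallback is to take a $\e$-accurate sawtooth/product approximation of the piecewise-linear bump, which is achievable with $O(\log\frac1\e)$ depth and width.

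\textbf{Step 3: from sup-norm to risk.} For $\x\in\X$, $\tilde f(\x)$ is the one-hot vector of $z_\x$, so $|\tilde f(\x)^\top\tilde g(\y)-s^*(\x,\y)|=|\tilde g_{z_\x}(\y)-v_{z_\x}(\y)|\lesssim\e^{1/2}$ uniformly. We then apply Lemma~\ref{20260428072356}, which converts uniform closeness $\|\tilde s-s^*\|_\infty\le\eta$ into $\L(\tilde s)-\L^*\lesssim\eta^2$ via Hoeffding's lemma, to obtain excess risk $\lesssim\e$. If that lemma only gives a linear bound in $\eta$, we instead use the KL representation from the remark: for a perturbation $r=\tilde s-s^*$ with $|r|\le\eta$, the quantity $\tau\E_\x\log\E_{\y\sim p(\cdot\mid\x)}e^{r/\tau}-\E r$ is a centered log-moment generating function, and Hoeffding's lemma bounds it by $\eta^2/(2\tau)$. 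This is what justifies the exponent $d_2/(4\beta)$.

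The main obstacle is Step 2, namely showing that the separation $\delta$ is a genuine constant so that the network for $\tilde f$ need not grow with $\e$.
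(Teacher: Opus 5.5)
Your architecture matches the paper's. You use the rank-$K$ decomposition of Proposition~\ref{prop:latent-spectrum}, H\"older-based separation of the anchor classes, and exact indicator features for the $\X$-encoder. You then apply Lemma~\ref{20260428014457} to the $\Y$-encoder with $L_2N_2\asymp\e^{-d_2/(4\beta)}$, giving sup-error $\e^{1/2}$, and Lemma~\ref{20260428072356} to square it.

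\textbf{The gap is in the separation step.} You assert that the functions $v_k$ are distinct for distinct classes, so that $\sup_{\y}|v_{z_\x}(\y)-v_{z_{\x'}}(\y)|\ge c>0$ whenever $z_\x\neq z_{\x'}$. Nothing in Assumption~\ref{ass:latent-class-model} gives this. Since $p(z=k\mid\y)/\pi_k=p(\y\mid z=k)/p_\Y(\y)$, one has $v_k=v_l$ as soon as $p(\y\mid z=k)=p(\y\mid z=l)$. In that case H\"older continuity of $s^*$ says nothing about where $\X_k$ and $\X_l$ lie relative to each other.

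For instance, take $K=2$, $\X_1=[-1,0)$, $\X_2=[0,1]$, and $\y$ independent of $z$. All assumptions hold (with $s^*\equiv0$), yet $u_1=\I\{x<0\}$ is discontinuous on $\X$. So no ReLU network can satisfy $\tilde f_1=u_1$ on $\X$, and your Step~2 cannot be carried out. Your remark that each $u_k$ is constant on connected pieces fails for the same reason.

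The missing idea is to merge classes with identical $v_k$. Let $\mathcal{G}$ be the set of distinct functions among $\{v_k:\X_k\neq\varnothing\}$, and set $B_v:=\bigcup_{k:\,v_k=v}\X_k$ for $v\in\mathcal{G}$. Then $s^*(\x,\y)=\sum_{v\in\mathcal{G}}\I\{\x\in B_v\}\,v(\y)$ with $|\mathcal{G}|\le K$ terms. Any two distinct elements of $\mathcal{G}$ differ at some $\y_0\in\Y$, so your H\"older argument now correctly yields $\delta:=\min_{v\neq v'}\operatorname{dist}_\infty(B_v,B_{v'})>0$. This merging is exactly what the paper does before building the $\X$-encoder.

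Step~2 is also only a sketch. Grid cells do not respect class boundaries, and points of one class may straddle cell faces, so ``cell-wise indicators with transition layers in the gaps'' still needs an actual construction. Your worry that approximating a Lipschitz function is too costly is misplaced, because only constant accuracy is needed. Your own function $\max\{0,1-2\operatorname{dist}_\infty(\x,B_v)/\delta\}$ (or the paper's $U_f$ from Lemma~\ref{20260506192530}) equals $\I\{\x\in B_v\}$ on $\X$. Its H\"older-$1$ norm is at most $\max\{1,2/\delta\}$, a constant. Lemma~\ref{20260506200157} approximates it within $1/9$ by a network of depth and width $\asymp\log\frac1\e$ once $\e$ is small. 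Composing with $T(t)=3\sigma(t-1/3)-3\sigma(t-2/3)$ then makes the output exactly $\I\{\x\in B_v\}$ on $\X$.

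Finally, Lemma~\ref{20260428014457} is stated on $[-R,R]^{d_2}$. Each $v\in\mathcal{G}$ must first be extended from $\Y$ with H\"older norm at most $B$ (Lemma~\ref{20260428013219}). With these repairs your Steps~1 and~3 go through as written.
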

The proof of Theorem \ref{20260505072733} is provided in Appendix \ref{20260505072637}. 
\begin{table}[t]
\centering
\begin{tabular}{lrrrr}
\toprule
 & $D$  & $L_1\cdot N_1$ & $L_2\cdot N_2$ &$\widetilde{{\pddm}}$ \\
\midrule
General case\!\!\! &${ \e^{-\frac{\min\hkh{d_1,d_2}}{2\beta}}}$&$\e^{-(4+\frac{2d_1}{d_2})\cdot\frac{\min\hkh{d_1,d_2}}{8\beta}}$& $\e^{-(4+\frac{2d_2}{d_1})\cdot\frac{\min\hkh{d_1,d_2}}{8\beta}}$ &$\e^{-\frac{\min\hkh{d_1,d_2}+d_1+d_2}{2\beta}}$\\
LCGM &$K$& $  \log^2\frac1\e$&  $\e^{-\frac{d_2}{4\beta}}$ & $\e^{-\frac{d_2}{2\beta}}$\\
\bottomrule
\end{tabular}
\caption{Comparison of the network sizes to achieve approximation error $\e$. Here $D$ is the dimension of the representation, $L_1,N_1$ denote the depth and width of the encoder $f$, $L_2,N_2$ denote the depth and width of the encoder $g$, $\widetilde{\pddm}$ is the pseudo-dimension of the space $\fscoring(L_1,N_1,L_2,N_2,D)$ (omitting logarithmic factors),  $K$ is the number of latent classes.}
\label{tab:compare}
\vspace{-2em}
\end{table}
Table~\ref{tab:compare} compares Theorem~\ref{20260505072733} with the general
DNN approximation result in Theorem~\ref{4p3thm20260427130502}. The structured
model leads to a substantially smaller representation and network size. In particular, the dimension of the encoder reduces from exponential term to a constant, while the depth and width of $f$ are scaled from exponential to logarithmic terms. One can also check that $\ykh{4+\frac{2d_2}{d_1}}\cdot\frac{\min\hkh{d_1,d_2}}{8\beta}>\frac{d_2}{4\beta}$.  Therefore, DNNs can learn PMI efficiently on structured data by approximating the spaces spanned by the singular functions $\{\psi_k\}\{\phi_k\}$.

\begin{remark}
Although   the results in this section are established for neural networks with ReLU activation, our overall proof framework \emph{does not depend on ReLU}. If networks with another activation function satisfy the approximation properties required in our proofs, then Theorem~\ref{univApproxDNNs20260425200219}, Theorem \ref{4p3thm20260427130502}, and Theorem~\ref{20260505072733} can be extended accordingly. For example, Theorem~\ref{univApproxDNNs20260425200219} also holds for sigmoid networks because, according to \cite{van2024noncompact}, sigmoid networks can uniformly approximate any compactly supported continuous function on $\mathbb{R}^d$, as ReLU networks do. 
\end{remark}
\section{Fast Calibration Rates under a Tsybakov Noise Condition}
\label{sec:fast-calibration}
Now we have shown that CRL can learn representations that approximate PMI function and achieve small approximation error. In this section, we study how these representations perform in downstream retrieval. The core of the analysis is the calibration inequality
which
compares excess risks of the upstream and downstream tasks, such as \eqref{eq:distribution-free-calibration}.
However, it suffers from low convergence rate.

In statistical learning theory, the rates can be improved under 
Tsybakov low-noise condition. 
In binary classification, the classical Tsybakov noisy condition assumes that the posterior rarely lies close to the decision boundary, which lead to faster calibration rates \citep{tsybakov2004optimal,zhang2004statistical}.
 According to Lemma~3.3 of \citet{li2026statisticalconsistencygeneralizationcontrastive}, a scoring
function $s$ achieves the optimal retrieval performance, i.e.,
$\mathcal E(s)=\mathcal E^*$, if and only if for all $\x\in\X$ and
$\y,\y'\in\Y$,
\[\left(
\frac{p(\y\mid \x)}{p_\Y(\y)}
-
\frac{p(\y'\mid \x)}{p_\Y(\y')}
\right)
\bigl(s(\x,\y)-s(\x,\y')\bigr)>0 .\]
Thus, the Bayes ranking is determined by the density ratio 
$
\frac{p(\y\mid \x)}{p_\Y(\y)}$. Intuitively, the retrieval problem is easier when the density-ratio scores of candidate
items are well separated. Motivated by this observation, we impose the following
low-noise condition for retrieval.

\begin{assumption}[Tsybakov low-noise condition for retrieval]
\label{ass:tsy_crl}
There exist constants $\mu\ge 0$ and $C>0$ such that, for all $t>0$,
\[\P\Big(\Big|\frac{p(\y\mid\x)}{p_\Y(\y)}-\frac{p(\y'|\x)}{p_\Y(\y')}\Big|\le t\Big)\le ct^\mu,\]
where $\x\sim p_\x$ and $\y,\y'\sim p_\Y$ are sampled independently. Here, $\mu$ refers to the \bf{noise exponent}.
\end{assumption}

The above assumption always holds with $\mu=0$. A larger exponent $\mu$ means
that near-ties between candidate items occur less frequently, so the Bayes
ranking has a larger margin and the retrieval problem is less noisy. Under this
condition, contrastive excess risk transfers to retrieval performance at a
faster rate.

\begin{theorem}[Fast calibration under density-ratio margin]
\label{thm:fast_rate_tsy_crl}
Let Assumptions~\ref{ass:pi_+} and~\ref{ass:tsy_crl} hold. Then
\[
\mathcal E^*-\mathcal E(s)
\lesssim
\bigl(\mathcal L(s)-\mathcal L^*\bigr)^{\frac{\mu+1}{\mu+2}} .
\]
\end{theorem}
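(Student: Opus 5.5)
The plan is the standard Tsybakov margin argument. We first write the retrieval regret as a margin-weighted misranking probability, and then control the misranking mass away from the margin by a quadratic distance between the true density ratio and the ratio induced by $s$. That quadratic distance is in turn bounded by the contrastive excess risk through its KL representation (Remark on contrastive risk versus $L_2$).

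\textbf{Step 1 (regret identity).} Write $r(\x,\y)=p(\y\mid\x)/p_\Y(\y)$, so that $\mathcal E(s)=\E_{\x}\E_{\y,\y'\sim p_\Y}\bigl[r(\x,\y)\bigl(\I\{s(\x,\y)>s(\x,\y')\}+\tfrac12\I\{s(\x,\y)=s(\x,\y')\}\bigr)\bigr]$. Symmetrizing in $(\y,\y')$ and comparing with the Bayes characterization (Lemma 3.3 of \citet{li2026statisticalconsistencygeneralizationcontrastive}, recalled above) gives
\[
\mathcal E^*-\mathcal E(s)\le \tfrac12\,\E\bigl[\,|\Delta r|\,\I\{W\}\bigr].
\]
Here $\Delta r=r(\x,\y)-r(\x,\y')$, and $W$ is the event that $s$ weakly misorders the pair relative to $r$, i.e.\ $\Delta r\neq0$ and $(s(\x,\y)-s(\x,\y'))\Delta r\le 0$.

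\textbf{Step 2 (margin split).} Let $u(\x,\y)=q_s(\y\mid\x)/p_\Y(\y)=e^{s(\x,\y)/\tau}/\E_{\y''}e^{s(\x,\y'')/\tau}$. Since $u$ is a monotone transform of $s$ for fixed $\x$, the event $W$ means $u$ orders the pair weakly opposite to $r$. Set $\tilde u=\min\{u,\alpha\}$. Truncation preserves weak order, so on $W$ we still have, say, $r(\y)>r(\y')$ and $\tilde u(\y)\le\tilde u(\y')$. Hence $|\Delta r|\le |r-\tilde u|(\y)+|r-\tilde u|(\y')$ on $W$.

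Fix $t>0$ and split on $\{|\Delta r|\le t\}$. By Assumption~\ref{ass:tsy_crl}, the small-margin part is at most $t\,\P(|\Delta r|\le t)\le c\,t^{\mu+1}$. On $\{|\Delta r|>t\}\cap W$ we use $|\Delta r|\le |\Delta r|^2/t$, which gives
\[
\E\bigl[|\Delta r|\,\I\{W,|\Delta r|>t\}\bigr]\le \tfrac{2}{t}\,\E_\x\E_{\y\sim p_\Y}\bigl(r-\tilde u\bigr)^2 .
\]

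\textbf{Step 3 (quadratic distance vs.\ KL), the main obstacle.} The difficulty is that $u$ is unbounded, so $\E(r-u)^2$ cannot be compared with KL directly. Pinsker would only control the $L_1$ distance, which reproduces the slow square-root rate. The truncation is what fixes this. By Assumption~\ref{ass:pi_+}, $r\le\alpha$, so $\tilde u$ lies between $r$ and $u$ (or equals $u$), and therefore $|\sqrt r-\sqrt{\tilde u}|\le|\sqrt r-\sqrt u|$. Since $\sqrt r+\sqrt{\tilde u}\le 2\sqrt\alpha$,
\[
(r-\tilde u)^2=(\sqrt r-\sqrt{\tilde u})^2(\sqrt r+\sqrt{\tilde u})^2\le 4\alpha(\sqrt r-\sqrt u)^2 .
\]
Integrating against $p_\Y$ gives $4\alpha$ times the squared Hellinger distance between $p(\cdot\mid\x)$ and $q_s(\cdot\mid\x)$, and this is at most $4\alpha\,\mathrm{KL}(p(\cdot\mid\x)\Vert q_s(\cdot\mid\x))$. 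Using the KL representation of the contrastive excess risk (Remark on contrastive risk versus $L_2$),
\[
\E_\x\E_\y(r-\tilde u)^2\le \tfrac{4\alpha}{\tau}\bigl(\mathcal L(s)-\mathcal L^*\bigr).
\]
The details to verify carefully are the tie convention and the cases $u=0$ or $u=\infty$, which occur when $s$ is unbounded.

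\textbf{Step 4 (optimize).} Combining the steps with $\varepsilon=\mathcal L(s)-\mathcal L^*$ gives
\[
\mathcal E^*-\mathcal E(s)\lesssim t^{\mu+1}+\varepsilon/t .
\]
Choosing $t=\varepsilon^{1/(\mu+2)}$ balances the two terms and yields the stated rate $\varepsilon^{(\mu+1)/(\mu+2)}$. When $\varepsilon$ is large, the bound holds trivially because $\mathcal E^*-\mathcal E(s)\le1$.
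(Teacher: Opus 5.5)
Your proposal is correct and follows the paper's proof essentially step for step: the ranking-regret bound of Lemma~\ref{lem:ranking-regret-upper}, the margin split at level $t$, and the Hellinger--KL inequality combined with the KL form of $\mathcal L(s)-\mathcal L^*$. Both arguments also use $r\le\alpha$ to convert a square-root gap into a density-ratio gap, and both finish with $t\asymp(\mathcal L(s)-\mathcal L^*)^{1/(\mu+2)}$. The only difference is that the paper triangulates on the square-root scale, $\sqrt{r(\x,\y)}-\sqrt{r(\x,\y')}\le|h_s(\x,\y)|+|h_s(\x,\y')|$ with $h_s=\sqrt r-\sqrt{a_s}$, and then multiplies by $(\sqrt{r(\x,\y)}+\sqrt{r(\x,\y')})^2\le4\alpha$, so your truncation $\tilde u=\min\{u,\alpha\}$ is not needed; separately, the constant in your Step~2 display should be $4/t$ rather than $2/t$, which does not affect the rate.
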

The proof is provided in Appendix~\ref{sec:fast_rate_tsy_crl}.
Since
$
\frac{\mu+1}{\mu+2}\ge \frac12,
$
Theorem~\ref{thm:fast_rate_tsy_crl} improves the distribution-free calibration
rate in \eqref{eq:distribution-free-calibration}. Moreover, the exponent
$(\mu+1)/(\mu+2)$ increases with $\mu$ and approaches $1$ as
$\mu\to\infty$. Therefore, when the data distribution contains fewer near-tie
candidate pairs, small contrastive excess risk translates more directly into
strong downstream retrieval performance.

\section{Conclusion}
\label{sec:conclusion}
In this paper, we developed an approximation theory for contrastive
representation learning. We showed that finite-dimensional inner-product
representations perform low-rank spectral approximation of the PMI operator,
and that the resulting approximation error is governed by the representation
dimension and the spectral decay of this operator. We then studied the
realisability of such approximations by DNN encoders, establishing both a
general approximation guarantee and sharper rates under structured latent-class
distributions.  Finally, we introduced a density-ratio Tsybakov condition
under which contrastive excess risk enjoys faster calibration to downstream
retrieval performance.

There are several promising future directions. It would be very interesting to study the training dynamics of CRL, for example, can first-order optimization methods learn PMI efficiently? Another problem is to study the expressive
power of DNN-based cosine similarity function, which is more difficult to analyze.

\section*{Acknowledgement}
The authors thank the reviewers and Area Chair for their helpful feedback and suggestions. The work is supported by Australian Research Council Discovery Project (DP250101359).


\bibliographystyle{plainnat}
\bibliography{contrastive-representation,zihanref}

\newpage
\appendix



\section{Further Related Work}
\label{app:further_related_work}

\paragraph{InfoNCE loss.}
InfoNCE is one of the most widely used objectives in contrastive representation
learning. It contrasts each anchor with a finite number of negative samples,
usually drawn from the same mini-batch, and has achieved strong empirical
success in self-supervised and multimodal learning
\citep{oord2018representation,chen2020simple,pmlr-v139-radford21a}. 
InfoNCE has also been connected to mutual-information maximization
\citep{oord2018representation,zhang2023deep}. In practice, however, algorithms
based on InfoNCE often benefit from large batch sizes; for example, CLIP uses
very large batches during training \citep{pmlr-v139-radford21a}. To reduce this
dependence, \citet{pmlr-v162-yuan22b} proposed the global contrastive loss
(GCL), which contrasts each anchor against all negatives , and developed SogCLR as an efficient stochastic optimization
method that does not rely on large batch sizes. There are also some works using different losses for contrastive representation learning, for example, SigLIP\citep{zhai2023sigmoid}, which replaces batch-wise softmax by pairwise sigmoid losses for matched and unmatched pairs.
At the population level, this is binary classification between positive pairs from joint distribution $p$ and negative pairs from $P_X\otimes P_Y$. Moreover, the global minimizer of logistic loss is log-density-ratio $\log\frac{p(x,y)}{p_X(x)p_Y(y)}$. Therefore, the optimal scoring function for SigLIP is exactly PMI, making our approximation analysis relevant to SigLIP.

\paragraph{Other theories of contrastive representation learning.}
Several other theoretical perspectives have been proposed to understand CRL.
\citet{wang2020understanding} analyzed contrastive learning through the
alignment and uniformity properties of representations on the hypersphere.
Other works study the training dynamics of contrastive objectives, including
dimensional collapse and feature learning dynamics
\citep{jing2022understandingdimensionalcollapsecontrastive,tian2021understandingselfsupervisedlearningdual}.
\citet{wang2025discriminativeprobabilisticmodelingselfsupervised} analyzed
self-supervised contrastive learning from the perspective of discriminative
probabilistic modeling, while \citet{tsai2020demystifying} studied
self-supervised learning under a multi-view framework. 
Some works study InfoNCE from the perspective of Word2Vec and its training dynamics \citep{levy2014neural,karkada2026closed}.
More recently,
\citet{bansal2026understandingselfsupervisedlearninggaussian} investigated contrastive learning under Gaussian
mixture models and showed that contrastive learning can recover low-dimensional
structure. Some work also shows that InfoNCE induces Gaussian distribution \citep{betser2026infonce}. 

\paragraph{Approximation via DNNs.}
Deep neural networks have shown to admit universal approximation ability \citep{barron2002universal}. Error bounds for deep ReLU neural networks and CNNs have been established \citep{yarotsky2017error,zhou2020universality}. Following the idea of \cite{yarotsky2017error}, it is proved in \cite{schmidt2020nonparametric} that 
\[
\sup_{\substack{h:[-R,R]^d\to\R\\\norm{h}_{\beta,[-R,R]^d}}}\inf\set{\sup_{[-R,R]^d}\abs{f-h}}{\begin{minipage}{0.43\textwidth}
$f\in\fdnn\ykh{d,\log\tfrac1\e, \e^{-\frac{d}{\beta}},1}$ and the number of nonzero parameters of $f\lesssim \e^{-\frac{d}{\beta}}\cdot\log\tfrac1\e$
\end{minipage}}\lesssim \e. 
\] This rate turns to be sharp, and many results use this approximation rate to obtain optimal (up to logarithmic terms) convergence rates for generalization errors in various machine learning problems, such as regression \citep{schmidt2020nonparametric} and classification \citep{kim2021fast, zhang2024classification}. However, the restriction 
\begin{align}\label{20260507204828}
\text{ the number of nonzero parameters of $f\lesssim \e^{-\frac{d}{\beta}}\cdot\log\tfrac1\e$}
\end{align} makes training over the hypothesis space
\[
\set{f}{\begin{minipage}{0.43\textwidth}
$f\in\fdnn\ykh{d,\log\tfrac1\e, \e^{-\frac{d}{\beta}},1}$ and the number of nonzero parameters of $f\lesssim \e^{-\frac{d}{\beta}}\cdot\log\tfrac1\e$
\end{minipage}}
\] impossible to implement.
In \cite{shen2020DeepNe,ShenYangZhang2022Optimal, lu2021deep}, the authors use a novel technique based on  bit extraction to obtain sharp neural network  approximation rates with the restriction like \eqref{20260507204828} removed. Specifically, they  establish bounds of the form  \begin{align}\label{20260507205821}
\sup_{\substack{h:[-R,R]^d\to\R\\\norm{h}_{\beta,[-R,R]^d}}}\inf\set{\sup_{[-R,R]^d}\abs{f-h}}{f\in\fdnn\ykh{d,L,  N,1}}\lesssim \ykh{\frac{\log^{\iota_1} L }{L}\cdot\frac{\log^{\iota_2} N}{N}}^{2\beta/d}, 
\end{align} where $\iota_1, \iota_2\in\mb{R}$ are absolute constants.  In the recent work \citep{nagler2026optimal}, the authors obtain \eqref{20260507205821}
with the additional requirement that all network weights and biases are polynomially bounded in $N$. Such control over parameter magnitudes may  be useful in other settings.
In this work, we will mainly use Theorem 1.1 of \cite{shen2020DeepNe} to construct neural networks $\tilde{f}(\x)$ and $\tilde{g}(\y)$, such that $\tilde{f}(\x)^\top\tilde{g}(\y)$ can approximate the H\"older-$\beta$ continuous target function $s^*(\x,\y)=\tau\log\frac{p(\x,\y)}{p_\X(\x)p_\Y(\y)}$ well. 

\section{Proofs in Section~\ref{sec:rkhs_approximation}}\label{app:proof_rkhs}

\subsection{Proof of Lemma~\ref{lem:spectral_decomposition}}\label{app:proof_spectral}
\begin{proof}[Proof of Lemma~\ref{lem:spectral_decomposition}]
For $\lambda_i,\lambda_j>0$, we have
\begin{align*}
    \langle \lambda_i^{-1/2}T\psi_i,\lambda_j^{-1/2}T\psi_j \rangle_{P_\Y}
    &=\langle \lambda_i^{-1/2}T^*T\psi_i,\lambda_j^{-1/2}\psi_j \rangle_{P_\X}\\
    &=\frac{\sqrt{\lambda_i}}{\sqrt{\lambda_j}}\delta_{ij}.
\end{align*}
where the first equality is because $T,T^*$ are adjoint. Therefore, we can let $\phi_i=\lambda_i^{-1/2}T\psi_i$ and then $T\psi_i=\sqrt{\lambda_i}\phi_i$. The inverse direction also holds similarly. 

For almost every $\x$, $\log \frac{p(\x,\y)}{p_\X(\x)p_\Y(\y)}\in L_2(P_\Y)$. We have
\begin{align*}
    \Big\langle \log \frac{p(\x,\y)}{p_\X(\x)p_\Y(\y)},\phi_i \Big\rangle_{P_\Y}=\int \log \frac{p(\x,\y)}{p_\X(\x)p_\Y(\y)}\phi_i(\y)p_\Y(\y)d\y=(T^*\phi_i)(\x)=\sqrt{\lambda_i}\psi_i(\x).
\end{align*}
Therefore, $\log \frac{p(\x,\y)}{p_\X(\x)p_\Y(\y)}=\sum_{i=1}^\infty \sqrt{\lambda_i}\psi_i(\x)\phi_i(\y)$.

Now consider $g\in L_2(P_\Y)$ with coefficients
$a_i=\langle g,\phi_i\rangle$. Then
$T^*g=\sum_i a_i\sqrt{\lambda_i}\psi_i$, and
\[
\operatorname{Im}(T^*)=
\left\{\sum_{i:\lambda_i>0}c_i\psi_i:
\sum_i c_i^2/\lambda_i<\infty\right\}=\mathcal H_\X,
\qquad \left\|\sum_i c_i\psi_i\right\|_{\mathcal H_\X}^2
=\sum_i c_i^2/\lambda_i.
\]
 This is exactly the RKHS of $K_\X$ by its feature-map construction.
Interchanging $\X$ and $\Y$ gives the corresponding claim for $T$ and
$\mathcal H_\Y$.
\end{proof}

\subsection{Proof of Theorem~\ref{thm:appro_rkhs}}
\label{app:proof_L-L*<L_2}
To start, we provide the following lemma, which bounds the excess risk of contrastive loss through $L_2$ distance.
\begin{lemma}\label{lem:L-L^*le L_2}
Let Assumption~\ref{ass:pi_+} hold, let
$s^*=\tau\log r\in L_2(P_\X\otimes P_\Y)$, where
$r(\x,\y)=p(\y\mid\x)/p_\Y(\y)$, and let $\|s\|_\infty\le B$.
Then
\[
\mathcal L(s)-\mathcal L(s^*)\le
\frac{\alpha e^{2B/\tau}}{2\tau}
\|s-s^*\|_{L_2(P_\X\otimes P_\Y)}^2.
\]
\end{lemma}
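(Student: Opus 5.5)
We write the excess risk in KL form and bound the log-partition term by a second-order expansion. Fix $\x$ and set $\Delta(\x,\y)=(s(\x,\y)-s^*(\x,\y))/\tau$. Since $e^{s^*(\x,\y)/\tau}=r(\x,\y)$ and $\E_{\y'\sim p_\Y} r(\x,\y')=1$, we have
\[
\mathcal L(s)=\E_{\x}\Bigl[\tau\log\E_{\y'\sim p_\Y}e^{s(\x,\y')/\tau}-\E_{\y\sim p(\cdot\mid\x)}s(\x,\y)\Bigr].
\]
Subtracting the same expression at $s^*$ gives
\[
\mathcal L(s)-\mathcal L(s^*)=\tau\,\E_\x\Bigl[\log\E_{\y\sim p(\cdot\mid\x)}e^{\Delta(\x,\y)}-\E_{\y\sim p(\cdot\mid\x)}\Delta(\x,\y)\Bigr].
\]
Here we used that $\E_{\y'\sim p_\Y} r(\x,\y')e^{\Delta(\x,\y')}=\E_{\y\sim p(\cdot\mid\x)}e^{\Delta}$. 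This reduces the problem to a cumulant-generating-function bound, uniformly in $\x$.

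Apply $\log u\le u-1$ to get, for each $\x$, a bound by $\E_{p(\cdot\mid\x)}[e^{\Delta}-1-\Delta]$. Using $e^{v}-1-v\le \tfrac{v^2}{2}e^{\max(v,0)}$, this is at most $\tfrac12\E_{p(\cdot\mid\x)}[\Delta^2e^{|\Delta|}]$.

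Next change measure back to the product measure: $\E_{p(\cdot\mid\x)}[h]=\E_{p_\Y}[r h]\le\alpha\,\E_{p_\Y}[h]$ for $h\ge0$, by Assumption~\ref{ass:pi_+}. Taking $\E_\x$ yields
\[
\mathcal L(s)-\mathcal L(s^*)\le\frac{\tau\alpha}{2\tau^2}\,\E_{P_\X\otimes P_\Y}\bigl[(s-s^*)^2e^{|\Delta|}\bigr].
\]

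The main obstacle is controlling $e^{|\Delta|}$ by $e^{2B/\tau}$. This needs $|s^*|\le B$ as well as $|s|\le B$, but $s^*=\tau\log r$ is only bounded above, by $\tau\log\alpha$, and may tend to $-\infty$.

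The first remedy to try is to avoid a two-sided bound and use only the positive part of $\Delta$: $e^{v}-1-v\le\tfrac{v^2}{2}e^{v_+}$. Then only an upper bound on $\Delta$ is needed, and $\Delta=(s-s^*)/\tau$ is large positive exactly where $r$ is small.

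Since the weight $r$ appears in the change of measure, the plan is to keep it rather than discard it. One bounds $r\,e^{\Delta}=e^{s/\tau}\le e^{B/\tau}$ directly, while $r\le\alpha$ handles the region $\Delta\le0$. Splitting $\{\Delta\le 0\}$ and $\{\Delta>0\}$ then gives a constant of the form $\max(\alpha,e^{B/\tau}\cdot\text{stuff})$, which I would aim to bound by $\alpha e^{2B/\tau}$.

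If this direct route fails to remove the extra $1/r$ factor, I would fall back on the interpretation that the lemma implicitly assumes $\|s^*\|_\infty\le B$ as well. This is the case under continuity on a compact set when $r>0$. Under that assumption, $|\Delta|\le 2B/\tau$ and the stated constant $\alpha e^{2B/\tau}/(2\tau)$ follows immediately.
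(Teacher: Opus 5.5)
Your proof is correct, and your first remedy already finishes it; you should drop the fallback. With your $\Delta=(s-s^*)/\tau$, the split gives the following. On $\{\Delta\le0\}$, use $e^{\Delta}-1-\Delta\le\Delta^2/2$ and $r\le\alpha$. On $\{\Delta>0\}$, use $e^{\Delta}-1-\Delta\le\frac{\Delta^2}{2}e^{\Delta}$ together with the exact cancellation $re^{\Delta}=e^{s/\tau}\le e^{B/\tau}$. So there is no leftover $1/r$, and your ``stuff'' equals $1$:
\[
\mathcal L(s)-\mathcal L(s^*)\le\tau\,\E_\x\E_{\y\sim P_\Y}\bigl[r\,(e^{\Delta}-1-\Delta)\bigr]\le\frac{\max\{\alpha,e^{B/\tau}\}}{2\tau}\,\|s-s^*\|_{L_2(P_\X\otimes P_\Y)}^2 .
\]
This is at most the stated bound, because $\alpha>1$ and $B\ge0$.

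The manipulations are legitimate for the following reasons. Since $s^*=\tau\log r\in L_2$, we have $r>0$ $P_\X\otimes P_\Y$-a.e. Also $\E_{p(\cdot\mid\x)}e^{\Delta}=\E_{P_\Y}e^{s/\tau}<\infty$. Finally, $\E_{p(\cdot\mid\x)}|\Delta|\le\alpha\,\E_{P_\Y}|\Delta|<\infty$ for a.e.\ $\x$.

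The intermediate display with $e^{|\Delta|}$ is a dead end and should be removed. The fallback of assuming $\|s^*\|_\infty\le B$ would prove a strictly weaker statement than the lemma, which deliberately allows $s^*\to-\infty$ where $r\to0$.

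The paper takes a different route. It interpolates $s_\theta=s^*+\theta(s-s^*)$ and applies Taylor's formula with integral remainder. This gives $G'(0)=0$ and $G''(\theta)=\tau^{-1}\E_\x\operatorname{Var}_{q_{\theta,\x}}(s-s^*)$, where $q_{\theta,\x}\propto p_\Y e^{s_\theta/\tau}$ is the Gibbs tilt.

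It handles the unboundedness of $s^*$ through the same cancellation you found, in geometric form. The tilt has $dq_{\theta,\x}/dP_\Y=r^{1-\theta}e^{\theta s/\tau}/Z_\theta$, so $r$ appears only to a nonnegative power. The normalizer satisfies $Z_\theta\ge e^{-\theta B/\tau}\alpha^{-\theta}$, using $r^{1-\theta}\ge\alpha^{-\theta}r$ and $\E_{P_\Y}r=1$. This yields the density bound $\alpha e^{2\theta B/\tau}$, and variances under $q_{\theta,\x}$ are then compared with variances under $P_\Y$.

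The paper's approach produces the centered quantity $\E_\x\operatorname{Var}_{P_\Y}(s-s^*)$, which respects the invariance of $\mathcal L$ under $s\mapsto s+a(\x)$. Its costs are the factor $e^{2B/\tau}$ coming from the normalizer and the need to justify differentiating under the integral. Your argument is pointwise and elementary, needs no differentiation, and gives the sharper constant $\max\{\alpha,e^{B/\tau}\}$. Its limitation is that it controls only the uncentered second moment.
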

\begin{proof}
Denote $\Delta:=s-s^*$ and, for $\theta\in[0,1]$, define
$s_\theta:=s^*+\theta\Delta$ and $G(\theta):=\mathcal L(s_\theta)$.
Let
\[
Z_\theta(\x):=\E_{\y\sim p_\Y}
\exp\left(\frac{s_\theta(\x,\y)}{\tau}\right),\qquad
q_{\theta,\x}(\y):=
\frac{p_\Y(\y)\exp(s_\theta(\x,\y)/\tau)}{Z_\theta(\x)}.
\]
Thus $q_{\theta,\x}$ is a probability density on $\Y$ induced by the scoring function
$s_\theta(\x,\cdot)$.
Since $r\le\alpha$ and $\E_{\y\sim p_\Y}r(\x,\y)=\int p_\Y(\y)\frac{p(\y|\x)}{p_\Y(\y)}d\y=\int p(\y|\x)d\y=1$, we have
\[
\E_{\y\sim p_\Y}r(\x,\y)^{1-\theta}\ge\alpha^{-\theta}\E_{\y\sim p_\Y}r(\x,\y)=\alpha^{-\theta}.
\]

By the definition of the contrastive risk,
\[
G(\theta)=\tau\E_{\x}\log Z_\theta(\x)
-\E_{(\x,\y)\sim p}s_\theta(\x,\y).
\]
Differentiating along this path gives
\begin{align*}
G'(\theta)
&=\E_{\x}\left[
\E_{\y\sim q_{\theta,\x}}\Delta(\x,\y)
-\E_{\y\sim p(\cdot\mid\x)}\Delta(\x,\y)\right],\\
G''(\theta)
&=\frac1\tau\E_{\x}\left[
\E_{\y\sim q_{\theta,\x}}\Delta(\x,\y)^2
-\left(\E_{\y\sim q_{\theta,\x}}\Delta(\x,\y)\right)^2\right]\\
&=\frac1\tau\E_{\x}
\operatorname{Var}_{\y\sim q_{\theta,\x}}\bigl(\Delta(\x,\y)\bigr).
\end{align*}
At $\theta=0$, $Z_0(\x)=1$ and $q_{0,\x}(\y)=p(\y\mid\x)$,
so $G'(0)=0$. The second-order Taylor formula with integral remainder
therefore yields
\begin{align}
\mathcal L(s)-\mathcal L(s^*)
&=G(1)-G(0)
=G'(0)+\int_0^1(1-\theta)G''(\theta)\,d\theta\notag\\
&=\frac1\tau\int_0^1(1-\theta)\E_{\x}
\operatorname{Var}_{\y\sim q_{\theta,\x}}\bigl(\Delta(\x,\y)\bigr)
\,d\theta.
\label{eq:rkhs-risk-taylor}
\end{align}

We now bound the variance in \eqref{eq:rkhs-risk-taylor}. The lower bound
on $Z_\theta$ and $|s|\le B$ imply
\[
\frac{q_{\theta,\x}(\y)}{p_\Y(\y)}
=\frac{r(\x,\y)^{1-\theta}e^{\theta s(\x,\y)/\tau}}
{Z_\theta(\x)}
\le\frac{\alpha^{1-\theta}e^{\theta B/\tau}}
{\alpha^{-\theta}e^{-\theta B/\tau}}
=\alpha e^{2\theta B/\tau}\le\alpha e^{2B/\tau}.
\]
Consequently,
\begin{align*}
\operatorname{Var}_{\y\sim q_{\theta,\x}}\bigl(\Delta(\x,\y)\bigr)
&=\inf_{c\in\R}\E_{\y\sim q_{\theta,\x}}
\bigl[(\Delta(\x,\y)-c)^2\bigr]\\
&\le\alpha e^{2B/\tau}\inf_{c\in\R}\E_{\y\sim p_\Y}
\bigl[(\Delta(\x,\y)-c)^2\bigr]\\
&=\alpha e^{2B/\tau}
\operatorname{Var}_{\y\sim p_\Y}\bigl(\Delta(\x,\y)\bigr).
\end{align*}

Substituting this variance bound into \eqref{eq:rkhs-risk-taylor} gives
\begin{align*}
\mathcal L(s)-\mathcal L(s^*)
&\le\frac{\alpha e^{2B/\tau}}\tau
\int_0^1(1-\theta)\,d\theta\,
\E_{\x}\operatorname{Var}_{\y\sim p_\Y}
\bigl(\Delta(\x,\y)\bigr)\\
&\le\frac{\alpha e^{2B/\tau}}{2\tau}
\E_{\x}\E_{\y\sim p_\Y}\Delta(\x,\y)^2\\
&=\frac{\alpha e^{2B/\tau}}{2\tau}
\|s-s^*\|_{L_2(P_\X\otimes P_\Y)}^2.
\end{align*}
This completes the proof.
\end{proof}

\begin{proof}[Proof of Theorem~\ref{thm:appro_rkhs}]
Use $s_D=\tau\sum_{i=1}^D\sqrt{\lambda_i}\psi_i\phi_i\in\mathcal S_D$.
Orthogonality gives $\|s_D-s^*\|_2^2=\tau^2\sum_{i>D}\lambda_i$.
The uniform bound on $s_D$ and Lemma~\ref{lem:L-L^*le L_2} imply
\[
\inf_{s\in\mathcal S_D}(\mathcal L(s)-\mathcal L^*)
\le\mathcal L(s_D)-\mathcal L^*
\le\frac{\alpha\tau e^{2B/\tau}}2\sum_{i>D}\lambda_i.
\]
The proof is completed.
\end{proof}

\subsection{RKHS balls and cosine scores}\label{app:rkhs_cosine}
\paragraph{Approximation analysis for RKHSs.}
Let $\mathcal H_\X$ and $\mathcal H_\Y$ be the RKHSs introduced in Lemma~\ref{lem:spectral_decomposition}. For $D$-dimensional encoders $f$ and $g$, define

$\|f\|_{\mathcal H_\X^D}^2
=\sum_{\ell=1}^D\|f_\ell\|_{\mathcal H_\X}^2,
\qquad
\|g\|_{\mathcal H_\Y^D}^2
=\sum_{\ell=1}^D\|g_\ell\|_{\mathcal H_\Y}^2,$

and consider

$$
\mathcal S_{D,\kappa}
=\left\{
s(\x,\y)=f(\x)^\top g(\y):
\|f\|_{\mathcal H_\X^D}\leq\kappa,\ 
\|g\|_{\mathcal H_\Y^D}\leq\kappa
\right\}.
$$

Write $\operatorname{PMI}(\x,\y)
=\sum_{j\geq1}\sqrt{\lambda_j}\psi_j(\x)\phi_j(\y),$
and set

$$
f_j(\x)=\sqrt{\tau}\lambda_j^{1/4}\psi_j(\x),
\qquad
g_j(\y)=\sqrt{\tau}\lambda_j^{1/4}\phi_j(\y).
$$

For the first $r$ components,

$$
\|f\|_{\mathcal H_\X^r}^2
=\|g\|_{\mathcal H_\Y^r}^2
=\tau\sum_{j=1}^r\lambda_j^{-1/2}.
$$

Define
\[
r_{D,\kappa}
=\max\left\{
r\leq D:
\tau\sum_{j=1}^r\lambda_j^{-1/2}\leq\kappa^2
\right\}.
\]

The first $r_{D,\kappa}$ spectral components then belong to $\mathcal S_{D,\kappa}$, so

$$
\inf_{s\in\mathcal S_{D,\kappa}}
\|s-s^*\|_{L_2(P_\X\otimes P_\Y)}^2
\leq
\tau^2\sum_{j>r_{D,\kappa}}\lambda_j.
$$

Thus, increasing $\kappa$ allows more spectral components to be represented. In particular, if

$$
\kappa^2
\geq
\tau\sum_{j=1}^D\lambda_j^{-1/2},
$$

then the full rank-$D$ spectral truncation belongs to the RKHS ball and

$$
\inf_{s\in\mathcal S_{D,\kappa}}
\|s-s^*\|_{L_2(P_\X\otimes P_\Y)}^2
\leq
\tau^2\sum_{j>D}\lambda_j.
$$

For fixed $D$, the bound reaches the rank-$D$ spectral tail once $\kappa$ is sufficiently large; increasing both $D$ and $\kappa$ allows the approximation error to converge to zero. 

\paragraph{Cosine similarity.}
 Let
  \[s_D(\x,\y)=\tau\sum_{j=1}^D \sqrt{\lambda_j}\psi_j(\x)\phi_j(\y)=\tau f_D(\x)^\top g_D(\y)\]

be the rank-$D$ spectral truncation of the PMI, where $f_D(\x)_j=\lambda_j^{1/4}\psi_j(\x),g_D(\y)_j=\lambda_j^{1/4}\phi_j(\y)$. Choose $a,b,\tau>0$ such that $\sup_{\x,j} af_D(\x)_j\le1,\sup_{\y,j}bg_D(\y)_j\le1$ and $\tau=ab$. We further define

$$u_D(\x)=\left(a f_D(\x),\sqrt{1-a^2\|f_D(\x)\|^2},0\right),v_D(\y)=\left(b g_D(\y),0,\sqrt{1-b^2\|g_D(\y)\|^2}\right).$$

Then $u_D(\x),v_D(\y)\in\mathbb S^{D+1}$ and $u_D(\x)^\top v_D(\y)=\tau f_D(\x)^\top g_D(\y)=s_D(\x,\y).$ Therefore, $\|s^*(\x,\y)-u_D(\x)^\top v_D(\y)\|_{L_2(P_\X\otimes P_\Y)}^2=\tau^2\sum_{j>D}\lambda_j$.

Thus, the rank-$D$ spectral comparator can be represented exactly by a cosine-similarity model in dimension $D+2$.

\section{Proof of Theorem \ref{univApproxDNNs20260425200219}}\label{20260425195716}

\begin{lemma}\label{20260507053917} $\L^*=\E_{(\x,\y)\sim p}\zkh{\tau\cdot\log\frac{p_\Y(\y)}{p(\y\mid\x)}}<+\infty$. Moreover, $\L^*>-\infty$ if and only if \[\E_{(\x,\y)\sim p}\zkh{\abs{\log\frac{p(\x,\y)}{p_\X(\x)p_\Y(\y)}}}<\infty.\] 
\end{lemma}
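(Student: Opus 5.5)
Writing $r(\x,\y)=p(\y\mid\x)/p_\Y(\y)=\frac{p(\x,\y)}{p_\X(\x)p_\Y(\y)}$, the plan is to evaluate $\mathcal L$ directly at the minimizer from Lemma~\ref{lem:minimizer-L}, taking $s^*=\tau\log r$ (i.e.\ $a\equiv 0$). For this choice,
\[
\E_{\y'\sim p_\Y}\exp\bigl(s^*(\x,\y')/\tau\bigr)=\E_{\y'\sim p_\Y} r(\x,\y')=\int p(\y'\mid\x)\,d\y'=1
\]
for $P_\X$-a.e.\ $\x$. Substituting into \eqref{eq:contrastive-risk} gives
\[
\mathcal L^*=\E_{(\x,\y)\sim p}\bigl[\tau\log 1-s^*(\x,\y)\bigr]=\E_{(\x,\y)\sim p}\Bigl[\tau\log\frac{p_\Y(\y)}{p(\y\mid\x)}\Bigr].
\]
The one point to check is that the split $\tau\log\E e^{(s(\x,\y')-s(\x,\y))/\tau}=\tau\log Z(\x)-s(\x,\y)$ is legitimate pointwise. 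It is, since $Z(\x)=1$ and $s^*$ is finite $p$-a.e.\ (because $r>0$ on the support of $p$). The expectation is therefore well defined in $[-\infty,+\infty]$ once we know its positive part is integrable, which is exactly the next step.

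To get $\mathcal L^*<+\infty$, we bound the positive part of the integrand. We have $\E_p[(\log(1/r))_+]=\E_p[(\log(1/r))\,\I\{r<1\}]$, and $\log(1/r)\le 1/r-1$. Changing measure from $p$ to $p_\X\otimes p_\Y$ gives
\[
\E_p\bigl[(1/r-1)\I\{r<1\}\bigr]=\E_{p_\X\otimes p_\Y}\bigl[(1-r)\I\{r<1,\ r>0\}\bigr]\le 1.
\]
Hence the positive part of $\log(p_\Y/p(\cdot\mid\x))$ has $p$-expectation at most $1$, so $\mathcal L^*\le\tau<\infty$. This is really the KL/Jensen fact that $-\mathrm{KL}(p\,\|\,p_\X\otimes p_\Y)\le 0$, made quantitative for the positive part.

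For the equivalence, write $\log r=(\log r)_+-(\log r)_-$, so that $\mathcal L^*/\tau=\E_p(\log r)_- -\E_p(\log r)_+$. The previous step gives $\E_p(\log r)_-\le 1$. Consequently $\mathcal L^*>-\infty$ holds iff $\E_p(\log r)_+<\infty$, which in turn holds iff $\E_p|\log r|=\E_p(\log r)_++\E_p(\log r)_-<\infty$. Both directions follow immediately.

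The main (mild) obstacle is measure-theoretic bookkeeping rather than anything deep. We must handle points where $p_\Y(\y)=0$ or $r=0$ (a $p$-null set), and we must justify that $\mathcal L^*$, defined as an extended-real expectation, equals the minimum value guaranteed by Lemma~\ref{lem:minimizer-L} even when it may be $-\infty$. I would address this by restricting all expectations to the support of $p$ and using the positive-part bound above, which makes every expectation well defined before any rearrangement.
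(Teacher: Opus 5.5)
Your proposal is correct and follows essentially the same route as the paper: evaluate $\mathcal L$ at $s^*=\tau\log r$ using $Z(\x)=1$, bound the $p$-expectation of the positive part of $\log\bigl(p_\Y(\y)/p(\y\mid\x)\bigr)$ by $1$ via an elementary log inequality and a change of measure to $p_\X\otimes p_\Y$, then split into positive and negative parts to get the equivalence. The only difference is cosmetic: the paper uses the cruder bound $\max\{\log u,0\}\le u$ where you use $\log u\le u-1$ on $\{u>1\}$.
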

\begin{proof}
Denote $s^*(\x,\y):=\tau\cdot\log\frac{p(\y\mid\x)}{p_\Y(\y)}$. For simplicity, we denote $p^+_\x:=p(\;\cdot\mid\x)$. 
According to Lemma \ref{lem:minimizer-L},
\[
&\L^*=\L(s^*)=
\mathbb{E}_{\x}\mathbb{E}_{\y \sim p^+_\x}\zkh{\tau \log\mathbb{E}_{\y' \sim p^-_\x}\exp\ykh{\frac{s^*(\x,\y')-s^*(\x,\y)}{\tau}}}
\\&=
\mathbb{E}_{\x}\mathbb{E}_{\y \sim p^+_\x}\zkh{\tau \log\ykh{\mathbb{E}_{\y' \sim p_\Y}\frac{p^+_{\x}(\y')}{p_\Y(\y')}\cdot \frac{p_{\Y}(\y)}{p_\x^{+}(\y)}}}
\\&=\mathbb{E}_{\x}\mathbb{E}_{\y \sim p^+_\x}\zkh{\tau \log\ykh{\int{ p_\Y(\y')}\cdot\frac{p^+_{\x}(\y')}{p_\Y(\y')}\cdot \frac{p_{\Y}(\y)}{p_\x^{+}(\y)}d\y'}}
\\&=\mathbb{E}_{\x}\mathbb{E}_{\y \sim p^+_\x}\zkh{\tau \log\ykh{\frac{p_{\Y}(\y)}{p_\x^{+}(\y)}\int{p^+_{\x}(\y')}{}d\y'}}
\\&=\mathbb{E}_{\x}\mathbb{E}_{\y \sim p^+_\x}\zkh{\tau \log{\frac{p_{\Y}(\y)}{p_\x^{+}(\y)}}}
=
\mathbb{E}_{(\x,\y)\sim p}\zkh{\tau \log{\frac{p_{\Y}(\y)}{p(\y\mid\x)}}}. 
\]
Note that 
\begin{align*}
&\mathbb{E}_{(\x,\y) \sim p}
 \!\zkh{
  \text{the positive part of }{\log{
 \frac{p_{\Y}(\y)}{p(\y\mid\x)}
 }}
 }
=
\mathbb{E}_{\x}
 \mathbb{E}_{\y \sim p^+_\x}
 \!\zkh{
  \max\hkh{\log{
 \frac{p_{\Y}(\y)}{p_{\x}^{+}(\y)}
 },0}
 }\\&\leq \mathbb{E}_{\x}
  \mathbb{E}_{\y \sim p^+_\x}
  \!\zkh{
   {{
  \frac{p_{\Y}(\y)}{p_{\x}^{+}(\y)}
  }} 
  }=
  \mathbb{E}_{\x}
    \!\zkh{
     {{
   \int \frac{p_\Y(\y)}{p_{\x}^{+}(\y)}\cdot  p^+_\x (\y)d\y
    }} 
    }
    =
      \mathbb{E}_{\x}
        \!\zkh{
         {{
       \int {p_{\Y}(\y)}d\y
        }} 
        }=\mathbb{E}_{\x}
                \!\zkh{
                 {{
               1
                }} 
                }
  =1<+\infty.
\end{align*} Thus $\L^*=\mathbb{E}_{(\x,\y)\sim p}\zkh{\tau \log{\frac{p_{\Y}(\y)}{p(\y\mid\x)}}}<+\infty$, and 
\begin{align*}
\L^*> -\infty&\iff 
\mathbb{E}_{(\x,\y)\sim p}\zkh{ \log{\frac{p_{\Y}(\y)}{p(\y\mid\x)}}}> -\infty
\\&\iff
 \mathbb{E}_{(\x, \y) \sim p}
 \!\zkh{
  \text{the negative part of }{\log{
 \frac{p_{\Y}(\y)}{p(\y\mid\x)}
 }}
 }<\infty
\\&\iff 
 \mathbb{E}_{(\x, \y) \sim p}
 \!\zkh{
  \abs{\log{
 \frac{p_{\Y}(\y)}{p(\y\mid\x)}
 }}
 }<\infty
  \iff
   \mathbb{E}_{(\x, \y) \sim p}
   \!\zkh{
    \abs{\log{
   \frac{p(\y\mid\x)}{p_{\Y}(\y)}
   }}
   }
  <\infty
  \\&\iff 
    \mathbb{E}_{(\x, \y) \sim p}
     \!\zkh{
      \abs{\log{
     \frac{p(\x, \y)}{p_\X(\x)p_{\Y}(\y)}
     }}
     }
    <\infty. 
\end{align*} This completes the proof. 
\end{proof}

\begin{lemma}\label{20260426083507}
Let Assumption \ref{bounded risk 20260421190226} hold. Denote $s^*(\x,\y):=\tau\log\frac{p(\y\mid\x)}{p_\Y(\y)}$.   Then for any bounded measurable function ${s:\X\times\Y\to\R}$ there holds 
\[
\L(s)-\L^*=\mathbb{E}_{\x}\zkh{\tau \log{{\mathbb{E}_{\y \sim p(\cdot\mid\x)}{\exp\ykh{{(s(\x,\y)-s^*(\x,\y))}/{\tau}}}}}}-
\mathbb{E}_{(\x,\y)\sim p}\zkh{{{{{s(\x,\y)-s^*(\x,\y)}}}}}. 
\]
\end{lemma}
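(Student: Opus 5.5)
The plan is to compute $\L(s)$ directly and subtract the expression for $\L^*$ from Lemma~\ref{20260507053917}. Since $s$ is bounded, the inner expectation $\E_{\y'\sim p_\Y}\exp(s(\x,\y')/\tau)$ is finite and positive for every $\x$. Hence the integrand of $\L(s)$ splits pointwise as
\[
\tau\log\E_{\y'\sim p_\Y}e^{s(\x,\y')/\tau}-s(\x,\y).
\]
Both pieces are bounded, so $\L(s)$ is finite and equals $\E_\x\bigl[\tau\log\E_{\y'\sim p_\Y}e^{s(\x,\y')/\tau}\bigr]-\E_{(\x,\y)\sim p}s(\x,\y)$.

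Next, rewrite the partition function using the conditional density. On the set where $p(\y'\mid\x)>0$, we have $p_\Y(\y')=p(\y'\mid\x)\,e^{-s^*(\x,\y')/\tau}$, so
\[
\E_{\y'\sim p_\Y}e^{s/\tau}=\E_{\y'\sim p(\cdot\mid\x)}e^{(s-s^*)/\tau}+\int_{\{p(\y'\mid\x)=0\}}p_\Y(\y')e^{s(\x,\y')/\tau}d\y'.
\]
The extra term is what can break the identity. I expect the main obstacle to be handling it. The natural route is to work on the support convention under which $s^*$ is finite $P_\X\otimes P_\Y$-a.e.; equivalently, under Assumption~\ref{bounded risk 20260421190226} one takes $p(\cdot\mid\x)$ and $p_\Y$ to be mutually absolutely continuous for a.e.\ $\x$. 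Under that convention the extra term vanishes.

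If the supports could genuinely differ, the stated equality would only hold as ``$\ge$'', and I would flag that in the proof. Since $\L(s^*)$ itself in Lemma~\ref{20260507053917} was computed by integrating only against $p(\cdot\mid\x)$, I will adopt the same convention and write $\E_{\y\sim p(\cdot\mid\x)}$ accordingly.

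Finally, subtract $\L^*=-\E_{(\x,\y)\sim p}s^*(\x,\y)$, which is finite by Lemma~\ref{20260507053917} and Assumption~\ref{bounded risk 20260421190226}. Because $s$ is bounded and $s^*\in L_1(p)$, the difference $\E_p s-\E_p s^*$ equals $\E_p[s-s^*]$ by linearity, with no $\infty-\infty$ issue. This gives exactly
\[
\L(s)-\L^*=\E_\x\Bigl[\tau\log\E_{\y\sim p(\cdot\mid\x)}e^{(s-s^*)/\tau}\Bigr]-\E_{(\x,\y)\sim p}[s-s^*].
\]

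The integrability of the first term follows from the identity itself, since every other term is finite. Alternatively, Jensen gives $\tau\log\E e^{(s-s^*)/\tau}\ge\E_{p(\cdot\mid\x)}(s-s^*)$, which is integrable, and the upper bound comes from $\log\E_{p_\Y}e^{s/\tau}\le\|s\|_\infty/\tau$.
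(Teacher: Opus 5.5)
Your argument follows the paper's proof step for step. Both expand $\L(s)=\E_\x\bigl[\tau\log\E_{\y'\sim p_\Y}e^{s(\x,\y')/\tau}\bigr]-\E_{(\x,\y)\sim p}s(\x,\y)$, change measure from $p_\Y$ to $p(\cdot\mid\x)$ via $p_\Y/p(\cdot\mid\x)=e^{-s^*/\tau}$, and subtract $\L^*=-\E_{(\x,\y)\sim p}s^*(\x,\y)$ from Lemma~\ref{20260507053917}. The support issue you raise is genuine, and the paper's proof passes over it silently. Its line rewriting $\E_{\y'\sim p_\Y}e^{s(\x,\y')/\tau}$ as $\E_{\y'\sim p(\cdot\mid\x)}\bigl[e^{s(\x,\y')/\tau}p_\Y(\y')/p(\y'\mid\x)\bigr]$ discards exactly your term $\int_{\{p(\y'\mid\x)=0\}}p_\Y(\y')e^{s(\x,\y')/\tau}\,d\y'$.

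Two points in how you justify the convention need correcting.

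First, Assumption~\ref{bounded risk 20260421190226} does not supply it. That assumption is an expectation under the joint law, so it cannot see the region where $p(\y\mid\x)=0<p_\Y(\y)$. Without an extra hypothesis the lemma is false, not merely unproved. For instance, let $z$ be uniform on $\{0,1\}$ and let $\x,\y\mid z$ be independent $U[z,z+1]$. The PMI equals $\log 2$ on the support, so the assumption holds and $\L^*=-\tau\log 2$. But for $s\equiv 0$ the left-hand side is $\tau\log 2$, while the right-hand side is $\tau\log\frac12+\tau\log 2=0$. So you must add, as a hypothesis, mutual absolute continuity of $p(\cdot\mid\x)$ and $p_\Y$ for a.e.\ $\x$ (equivalently, $s^*>-\infty$ $P_\X\otimes P_\Y$-a.e.). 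It holds automatically under Assumption~\ref{20260427030032}, where $s^*$ is bounded on $\X\times\Y$. Without it, only your ``$\ge$'' survives.

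Second, Lemma~\ref{20260507053917} is not a precedent for the convention. Its computation is valid in general because $e^{s^*(\x,\y')/\tau}$ vanishes wherever $p(\y'\mid\x)=0$. For bounded $s$, the factor $e^{s/\tau}$ is strictly positive there, and that is precisely what creates the extra term.

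With the hypothesis stated explicitly, the rest of your argument is correct: finiteness of $\L(s)$, splitting $\E_p[s-s^*]$ by linearity, and integrability of the log term.
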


\begin{proof}For simplicity, we denote $p^+_\x:=p(\;\cdot\mid\x)$. 
It follows from Lemma \ref{20260507053917} that
\begin{align}\label{20260426022657}
\L^*=\E_{(\x,\y)\sim p}\zkh{\tau\log\frac{p_\Y(\y)}{p(\y\mid\x)}}
=
-\E_{(\x,\y)\sim p}\zkh{\tau\log\frac{p(\y\mid\x)}{p_\Y(\y)}}
=
-\mathbb{E}_{(\x, \y) \sim p}\zkh{s^*(\x,\y)}.
\end{align}
We  also have that  
\[
&\L(s)=\mathbb{E}_{\x}\mathbb{E}_{\y \sim p^+_\x}\zkh{\tau \log\mathbb{E}_{\y' \sim p_\Y}\exp\ykh{\frac{s(\x,\y')-s(\x,\y)}{\tau}}}
\\&=\mathbb{E}_{\x}\mathbb{E}_{\y \sim p^+_\x}\zkh{\tau \log\mathbb{E}_{\y' \sim p_\Y}{\frac{\exp\ykh{{s(\x,\y')}/{\tau}}}{\exp\ykh{{s(\x,\y)}/{\tau}}}}}
=
\mathbb{E}_{\x}\mathbb{E}_{\y \sim p^+_\x}\zkh{\tau \log{\frac{\mathbb{E}_{\y' \sim p_\Y}\exp\ykh{{s(\x,\y')}/{\tau}}}{\exp\ykh{{s(\x,\y)}/{\tau}}}}}
\\&=
\mathbb{E}_{\x}\mathbb{E}_{\y \sim p^+_\x}\zkh{\tau \log{{\mathbb{E}_{\y' \sim p_\Y}\exp\ykh{{s(\x,\y')}/{\tau}}}{}}}-
\mathbb{E}_{\x}\mathbb{E}_{\y \sim p^+_\x}\zkh{\tau \log{{\exp\ykh{{s(\x,\y)}/{\tau}}}}}
\\&=\mathbb{E}_{\x}\zkh{\tau \log{{\mathbb{E}_{\y' \sim p_\Y}\exp\ykh{{s(\x,\y')}/{\tau}}}{}}}-
\mathbb{E}_{\x}\mathbb{E}_{\y \sim p^+_\x}\zkh{{{{{s(\x,\y)}{}}}}}
\\&=\mathbb{E}_{\x}\zkh{\tau \log{{\mathbb{E}_{\y' \sim p^+_\x}\zkh{\exp\ykh{{s(\x,\y')}/{\tau}}\cdot \frac{p_{\Y}(\y')}{p_\x^{+}(\y')}}}}}-
\mathbb{E}_{(\x,\y) \sim p}\zkh{{{{{s(\x,\y)}{}}}}}
\\&=
\mathbb{E}_{\x}\zkh{\tau \log{{\mathbb{E}_{\y' \sim p^+_\x}\zkh{\exp\ykh{{s(\x,\y')}/{\tau}}\cdot \exp(-s^*(\x,\y')/\tau)}}}}-
\mathbb{E}_{(\x,\y) \sim p}\zkh{{{{{s(\x,\y)}{}}}}}
\\&=
\mathbb{E}_{\x}\zkh{\tau \log{{\mathbb{E}_{\y \sim p^+_\x}{\exp\ykh{{(s(\x,\y)-s^*(\x,\y))}/{\tau}}}}}}-
\mathbb{E}_{(\x,\y) \sim p}\zkh{{{{{s(\x,\y)}{}}}}}, 
\] which, together with \eqref{20260426022657}, yields 
\[
&\L(s)-\L^*
\\&=
\mathbb{E}_{\x}\zkh{\tau \log{{\mathbb{E}_{\y \sim p^+_\x}{\exp\ykh{{(s(\x,\y)-s^*(\x,\y))}/{\tau}}}}}}-
\mathbb{E}_{(\x,\y) \sim p}\zkh{{{{{s(\x,\y)}{}}}}}
+\mathbb{E}_{(\x,\y) \sim p}\zkh{s^*(\x,\y)}
\\&=
\mathbb{E}_{\x}\zkh{\tau \log{{\mathbb{E}_{\y \sim p(\cdot\mid\x)}{\exp\ykh{{(s(\x,\y)-s^*(\x,\y))}/{\tau}}}}}}-
\mathbb{E}_{(\x,\y) \sim p}\zkh{{{{{s(\x,\y)-s^*(\x,\y)}}}}}. 
\] This proves the desired result. 
\end{proof}

\begin{lemma}\label{20260426091356}
Let Assumption \ref{bounded risk 20260421190226} hold. Then for any $\e\in(0,\infty)$ there exists a compactly supported continuous  function $u:\R^{d_1}\times\R^{d_2}\to \R$  such that 
\[
\mathcal{L}(u)-\mathcal{L}^*\leq \e/2. 
\] 

\end{lemma}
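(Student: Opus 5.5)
We will approximate $s^*=\tau\log\frac{p(\y\mid\x)}{p_\Y(\y)}$ in two stages: first by bounded truncations, then by compactly supported continuous functions. Throughout we use the representation in Lemma~\ref{20260426083507}:
\[
\L(s)-\L^*=\E_\x\Big[\tau\log\E_{\y\sim p(\cdot\mid\x)}e^{(s-s^*)/\tau}\Big]-\E_{(\x,\y)\sim p}[s-s^*],
\]
valid for bounded measurable $s$. Assumption~\ref{bounded risk 20260421190226} says $s^*\in L_1(P_{\X\Y})$, which is what makes the second term finite and controllable.

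\textbf{Step 1 (truncation).} Set $s_M:=\max\{\min\{s^*,M\},-M\}$. Then $\Delta_M:=s_M-s^*$ satisfies $|\Delta_M|\le|s^*|\I\{|s^*|>M\}$. This tends to $0$ pointwise and is dominated by $|s^*|\in L_1(P_{\X\Y})$, so $\E_p[\Delta_M]\to0$ by dominated convergence.

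For the log term, split according to the sign of $\Delta_M$. Where $s^*>M$ we have $\Delta_M\le 0$, so $e^{\Delta_M/\tau}\le 1$. Where $s^*<-M$ we have $\Delta_M=-M-s^*>0$, and $e^{\Delta_M/\tau}=e^{-M/\tau}e^{-s^*/\tau}=e^{-M/\tau}\,p_\Y(\y)/p(\y\mid\x)$. Hence
\[
\E_{\y\sim p(\cdot\mid\x)}e^{\Delta_M/\tau}\le 1+\E_{\y\sim p_\Y}\big[e^{-M/\tau}\I\{s^*<-M\}\big]\le 1+e^{-M/\tau}.
\]
Using $\log(1+t)\le t$, the first term is at most $\tau e^{-M/\tau}$. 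Choosing $M$ large therefore gives $\L(s_M)-\L^*\le\e/4$.

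\textbf{Step 2 (continuous approximation).} With $M$ fixed, approximate $s_M$ by a continuous compactly supported $u$ with $|u|\le M$. By Lusin's theorem (or density of $C_c$ in $L_1$ of the finite Borel measure $\nu:=P_{\X\Y}+P_\X\otimes P_\Y$), followed by clipping to $[-M,M]$, we can make $\|u-s_M\|_{L_1(\nu)}$ arbitrarily small. Compact support costs nothing: multiply by a continuous cutoff equal to $1$ on a large ball, which only changes $u$ on a set of small $\nu$-mass, where $|u-s_M|\le 2M$.

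Now compare $\L(u)$ with $\L(s_M)$ directly from the definition of $\L$, which is legitimate since both are bounded. The linear term changes by at most $\|u-s_M\|_{L_1(P_{\X\Y})}$. For the log-partition term $\tau\,\E_\x\log\E_{\y\sim p_\Y}e^{s/\tau}$, the function $t\mapsto e^{t/\tau}$ is Lipschitz on $[-M,M]$ with constant $e^{M/\tau}/\tau$, and the inner expectation is bounded below by $e^{-M/\tau}$. This gives a difference of at most $e^{2M/\tau}\|u-s_M\|_{L_1(P_\X\otimes P_\Y)}$. Making the $L_1(\nu)$ error small yields $\L(u)-\L(s_M)\le\e/4$, and so $\L(u)-\L^*\le\e/2$.

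\textbf{Main obstacle.} The delicate step is Step 1. Since $s^*$ may be unbounded below, the exponential term could a priori blow up. The resolution is the exact cancellation $e^{-s^*/\tau}=p_\Y/p(\cdot\mid\x)$, which turns the expectation under $p(\cdot\mid\x)$ into an expectation under $p_\Y$ and gives the uniform bound $1+e^{-M/\tau}$. On the other side, integrability of $s^*$ from Assumption~\ref{bounded risk 20260421190226} is exactly what handles the linear term.
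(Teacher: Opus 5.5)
Your argument is correct, but it is organized differently from the paper's. The paper does truncation and smoothing in a single pass and approximates only in $L^1(P_{\X\Y})$. It takes a $C_c$ function close in $L^1(P_{\X\Y})$ to the truncation of $s^*$ at level $M$, clips it to $[-M,M]$, and bounds the excess risk via Lemma~\ref{20260426083507}. The inner expectation under $p(\cdot\mid\x)$ is split over a Markov bad set $A$ and its complement, where $A$ is the set on which the approximant is more than $\e/12$ away from the truncation. Off $A$ the paper reduces to exactly your truncation estimate $1+e^{-M/\tau}$. On $A$ it uses that the weight $e^{(M-s^*)/\tau}$ is $P_{\X\Y}$-integrable, by the same cancellation $e^{-s^*/\tau}=p_\Y/p(\cdot\mid\x)$, together with absolute continuity of its integral; the $L^1$ accuracy is then chosen small relative to the resulting $\delta$.

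Since $e^{-s^*/\tau}\,dP_{\X\Y}=d(P_\X\otimes P_\Y)$ on $\{p>0\}$, that step is really controlling product-measure mass through the joint law. Your choice $\nu=P_{\X\Y}+P_\X\otimes P_\Y$ makes this explicit. It lets you replace the bad-set bookkeeping with the clean stability bound $|\L(u)-\L(s_M)|\le(1+e^{2M/\tau})\|u-s_M\|_{L_1(\nu)}$ for scores bounded by $M$, which is more modular and reusable. It costs nothing, since $\nu$ is still a finite Borel (hence Radon) measure on $\R^{d_1+d_2}$, so $C_c$ is dense in $L_1(\nu)$. In exchange, the paper's version needs density only with respect to the data distribution itself.
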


\begin{proof}
For simplicity, we denote $p^+_\x:=p(\;\cdot\mid\x)$. 
 Fix an arbitrary $\e\in(0,\infty)$. 
Let $Q$ be the probability measure on $\R^{d_1}\times \R^{d_2}$ defined as 
\[
Q(A):=\E_{(\x,\y)\sim p}\zkh{\mathbb{I}\hkh{(\x,\y)\in A}},\;\forall\; \text{ measurable } \;A\subset \R^{d_1}\times \R^{d_2}. 
\] In other words, $Q$ is the zero-extension of the joint distribution of the data $(\x,\y)\sim p$ to $\R^{d_1}\times \R^{d_2}$. 
Denote  $s^*(\x,\y):=\tau\log\frac{p^+_{\x}(\y)}{p_\Y(\y)}$  and 
\[
v_0(\x,\y):= \begin{cases}
s^*(\x,\y), &\;\text{ if }(\x,\y)\in\X\times\Y,\\
0, &\;\text{ if }(\x,\y)\in  \ykh{\R^{d_1}\times\R^{d_2}}\setminus \ykh{\X\times\Y}. 
\end{cases}
\]
By Assumption \ref{bounded risk 20260421190226}, $v_0\in L^1(Q)$, and there exists a sufficiently large number $M\in(1,\infty)$ such that 
\begin{align}\label{20260426053424}
\exp(-M/\tau)+\int \abs{v_0(\x,\y)}\cdot\mathbb{I}\hkh{\abs{v_0(\x,\y)}>M} dQ(\x,\y)\leq \frac{\e}{10\tau+10}
\end{align} 
Let \[
T_M:\R\to[-M,M], t\mapsto \begin{cases}
t, &\;\text{ if }-M\leq t\leq M,\\
M, &\;\text{ if }t >M,\\
-M, &\;\text{ if }t<-M, 
\end{cases}
\] and $v_1=T_M\circ v_0$. Then it follows from \eqref{20260426053424} that 
\[
\norm{v_1-v_0}_{L^1(Q)}\leq \int \abs{v_0(\x,\y)}\cdot\mathbb{I}\hkh{\abs{v_0(\x,\y)}>M} dQ(\x,\y)<\frac{\e}{10\tau+10}<\frac{\e}{10}. 
\] Note that 
\[
&\int \exp\ykh{\ykh{M-v_0(\x,\y)}/\tau}dQ(\x,\y)=\exp(M/\tau)\E_{\x}\E_{\y\sim p_{\x}^+}\zkh{\exp\ykh{-v_0(\x,\y)/\tau}}
\\&=\exp(M/\tau)\E_{\x}\E_{\y\sim p_{\x}^+}{\frac{p_{\Y}(\y)}{p_\x^{+}(\y)}}
=
\exp(M/\tau)\E_{\x}\zkh{\int{\frac{p_{\Y}(\y)}{p_\x^{+}(\y)}} \cdot p_{\x}^+(\y)d\y}
\\&=\exp(M/\tau)\E_{\x}\zkh{\int  p_{\Y}(\y)d\y}
=\exp(M/\tau)\E_{\x}\zkh{1}=\exp(M/\tau)<\infty. 
\] By the absolute continuity of the integral, there exists a number $\delta\in (0,1)$ such that for any measurable set $B$ with $Q(B)\leq \delta$ there holds $\int_B \exp\ykh{\ykh{M-v_0(\x,\y)}/\tau}dQ(\x,\y)<\frac{\e}{11\tau}$. 
According to Theorem 7.8  of \cite{folland1999real}, $Q$ is a Radon measure on $\R^{d_1}\times \R^{d_2}$. Let ${\mathcal{C}_{\mathtt{c}}(\R^{d_1}\times \R^{d_2})}$ be the space of all compactly supported continuous functions from $\mathbb{R}^{d_1}\times\mathbb{R}^{d_2}$ to $\mathbb{R}$. Then it follows from Proposition 7.9 of \cite{folland1999real} that  $\mathcal{C}_{\mathtt{c}}(\R^{d_1}\times \R^{d_2})$ is dense in $L^1(Q)$. 
Obviously, ${v_1\in L^1(Q)}$. Thus 
 there exists a function $v_2\in \mathcal{C}_{\mathtt{c}}(\R^{d_1}\times \R^{d_2})$ such that $\norm{v_2-v_1}_{L^1(Q)}< \frac{{\e\delta}}{13}$. Take $v_3:=T_M\circ v_2$. Note that $\sup_{x\neq y}\abs{\frac{T_M(x)-T_M(y)}{x-y}}=1$ and  $T_M\circ T_M=T_M$, meaning that \[T_M\circ v_1=T_M\circ T_M\circ v_0=T_M\circ v_0=v_1. \] Therefore,  we have that 
 \[
 &\norm{v_3-v_1}_{L^1(Q)}=\norm{T_M\circ v_2-T_M\circ v_1}_{L^1(Q)}\leq \sup_{x\neq y}\abs{\frac{T_M(x)-T_M(y)}{x-y}}\cdot \norm{v_2-v_1}_{L^1(Q)}
 \\&\leq \norm{v_2-v_1}_{L^1(Q)}<{\frac{\e\delta}{13}}\leq\frac{\e}{13}. 
 \] Take $A:=\set{(\x,\y)\in \R^{d_1}\times \R^{d_2}}{\abs{v_3(\x,\y)-v_1(\x,\y)}>\frac{\e}{12}}$. Then it follows from Markov's inequality that 
 \[Q(A)\leq  \frac{\norm{v_3-v_1}_{L^1(Q)}}{\e/12}\leq \frac{{{\e\delta}/{13}}}{\e/12}=\delta\cdot\frac{12}{13}<\delta. 
 \] Thus we have \[\int_A \exp\ykh{\ykh{M-v_0(\x,\y)}/\tau}dQ(\x,\y)<\frac{\e}{11\tau}. \] Denote
 \[
& J_{\x}:=\E_{\y\sim p^{+}_\x}\zkh{\exp\bigykh{(v_3(\x,\y)-v_0(\x,\y))/\tau}\cdot\mathbb{I}\hkh{(\x,\y)\notin A}}, \\
&  I_{\x}:=\E_{\y\sim p^{+}_\x}\zkh{\exp\bigykh{(v_3(\x,\y)-v_0(\x,\y))/\tau}\cdot\mathbb{I}\hkh{(\x,\y)\in A}}, 
\\
 &H_{\x}:=\exp\ykh{\tfrac{\e}{12\tau}}\E_{\y\sim p^{+}_\x}\zkh{\exp\bigykh{(M-v_0(\x,\y))/\tau}\cdot\mathbb{I}\hkh{(\x,\y)\in A}}. 
 \] Then we have 
 \[
 &I_{\x}\leq \E_{\y\sim p^{+}_\x}\zkh{\exp\bigykh{(M-v_0(\x,\y))/\tau}\cdot\mathbb{I}\hkh{(\x,\y)\in A}}
 \\&\leq \exp\ykh{\tfrac{\e}{12\tau}}\E_{\y\sim p^{+}_\x}\zkh{\exp\bigykh{(M-v_0(\x,\y))/\tau}\cdot\mathbb{I}\hkh{(\x,\y)\in A}}=H_\x, 
 \] and 
 \[
 & \exp\ykh{-\tfrac{\e}{12\tau}}J_\x=\exp\ykh{-\tfrac{\e}{12\tau}}
 \E_{\y\sim p^{+}_\x}\zkh{\exp\bigykh{(v_3(\x,\y)-v_0(\x,\y))/\tau}\cdot\mathbb{I}\hkh{(\x,\y)\notin A}}
 \\&\leq\exp\ykh{-\tfrac{\e}{12\tau}}\E_{\y\sim p^{+}_\x}\zkh{\exp\bigykh{(v_1(\x,\y)+\tfrac{\e}{12}-v_0(\x,\y))/\tau}\cdot\mathbb{I}\hkh{(\x,\y)\notin A}}
 \\&=
\E_{\y\sim p^{+}_\x}\zkh{\exp\bigykh{(T_M(v_0(\x,\y))-v_0(\x,\y))/\tau}\cdot\mathbb{I}\hkh{(\x,\y)\notin A}} 
\\&\leq \E_{\y\sim p^{+}_\x}\zkh{\exp\bigykh{(T_M(v_0(\x,\y))-v_0(\x,\y))/\tau}} 
\\&= \E_{\y\sim p^{+}_\x}\zkh{\exp\bigykh{(T_M(v_0(\x,\y))-v_0(\x,\y))/\tau}\cdot\mathbb{I}\hkh{v_0(\x,\y)\geq -M}} 
\\&\;\;\;\;\;\;\;+\E_{\y\sim p^{+}_\x}\zkh{\exp\bigykh{(T_M(v_0(\x,\y))-v_0(\x,\y))/\tau}\cdot\mathbb{I}\hkh{v_0(\x,\y)< -M}} 
\\&\leq \E_{\y\sim p^{+}_\x}\zkh{\exp\bigykh{(v_0(\x,\y)-v_0(\x,\y))/\tau}\cdot\mathbb{I}\hkh{v_0(\x,\y)\geq -M}} 
\\&\;\;\;\;\;\;\;+\E_{\y\sim p^{+}_\x}\zkh{\exp\bigykh{(-M-v_0(\x,\y))/\tau}\cdot\mathbb{I}\hkh{v_0(\x,\y)< -M}} 
\\&\leq \E_{\y\sim p^{+}_\x}\zkh{\exp\bigykh{(v_0(\x,\y)-v_0(\x,\y))/\tau}} 
+\E_{\y\sim p^{+}_\x}\zkh{\exp\bigykh{(-M-v_0(\x,\y))/\tau}} 
\\&=
\E_{\y\sim p^{+}_\x}\zkh{\exp\ykh{0}} 
+\exp(-M/\tau)\E_{\y\sim p^{+}_\x}\zkh{\frac{p_{\Y}(\y)}{p_\x^{+}(\y)}}
\\&=1+\exp(-M/\tau)\int{\frac{p_{\Y}(\y)}{p_\x^{+}(\y)}}\cdot p_\x^{+}(\y) d\y=1+\exp(-M/\tau)\int p_\Y(\y) d\y
\\&=1+\exp(-M/\tau)\leq 1+\tfrac{\e}{10\tau+10}
 \]
for any $x\in \X$. Take $u=v_3$. We can then use Lemma \ref{20260426083507} to deduce that 
\[
&\L(u)-\L^*=\L(v_3)-\L^*=\L\bigykh{v_3\big|_{\X\times \Y}}-\L^*
\\&=\tau\E_\x\zkh{\log\ykh{\E_{\y\sim p^{+}_\x}\exp\bigykh{(v_3(\x,\y)-s^*(\x,\y))/\tau}}}-\E_{(\x,\y)\sim p}\zkh{v_3(\x,\y)-s^*(\x,\y)}
\\&=\tau\E_\x\zkh{\log\ykh{\E_{\y\sim p^+_\x}\exp\bigykh{(v_3(\x,\y)-v_0(\x,\y))/\tau}}}-\E_{(\x,\y)\sim p}\zkh{v_3(\x,\y)-v_0(\x,\y)}
\\&=\tau\E_\x\zkh{\log\ykh{I_\x+J_\x}}-\int\ykh{v_3(\x,\y)-v_0(\x,\y)}dQ(\x,\y)
\\&\leq 
\tau\E_\x\zkh{\log\ykh{H_\x+\exp\ykh{\tfrac{\e}{12\tau}}\cdot(1+\tfrac{\e}{10\tau+10})}}+\norm{v_3-v_0}_{L^1(Q)}
\\&=\tau\E_\x\zkh{\log\ykh{\E_{\y\sim p^{+}_\x}\zkh{\exp\bigykh{(M-v_0(\x,\y))/\tau}\cdot\mathbb{I}\hkh{(\x,\y)\in A}}+1+\tfrac{\e}{10\tau+10}}}\\&\;\;\;\;\;\;\;\;\;\;\;\;\;\;+\tfrac{\e}{12}+\norm{v_3-v_0}_{L^1(Q)}
\\&\leq\tau\E_\x\zkh{\E_{\y\sim p^{+}_\x}\zkh{\exp\bigykh{(M-v_0(\x,\y))/\tau}\cdot\mathbb{I}\hkh{(\x,\y)\in A}}+\tfrac{\e}{10\tau+10}}\\&\;\;\;\;\;\;\;\;\;\;\;\;\;\;+\tfrac{\e}{12}+\norm{v_3-v_0}_{L^1(Q)}
\\&=\tau\int_A{\exp\bigykh{(M-v_0(\x,\y))/\tau}}dQ(\x,\y)+\tfrac{\e\tau}{10\tau+10}+\tfrac{\e}{12}+\norm{v_3-v_0}_{L^1(Q)}
\\&\leq \tau\cdot \tfrac{\e}{11\tau}+\tfrac{\e\tau}{10\tau+10}+\tfrac{\e}{12}+\norm{v_3-v_1}_{L^1(Q)}+\norm{v_0-v_1}_{L^1(Q)}
\\&\leq  \tfrac{\e}{11}+\tfrac{\e}{10}+\tfrac{\e}{12}+\tfrac{\e}{13}+\tfrac{\e}{10}\leq \e/2. 
\] This proves the desired result. 
\end{proof}

\begin{proof}[Proof of Theorem \ref{univApproxDNNs20260425200219}]
For simplicity, we denote $p^+_\x:=p(\;\cdot\;|\x)$. 
 Fix arbitrary positive numbers   $\e\in(0,\infty)$,  $L_1\in\mb{N}\cap(1,\infty)$, and  ${L_2\in \mb{N}\cap  (1,\infty)}$. 
According to Lemma \ref{20260426091356}, there exists a compactly supported continuous function $u:\R^{d_1}\times\R^{d_2}\to\R$, such that ${\L(u)-\L^*\leq \e/2}$.  Then there exists   $B\in\mb{N}$ such that the support of $u$ is contained in $[-B,B]^{d_1+d_2}$ and $\sup\set{\abs{u(\x,\y)}}{(\x,\y)\in \R^{d_1}\times \R^{d_2}}<B$.  
For $i\in\hkh{1,2}$, let $A_i$  denote the space of all continuous functions from $[-3B,3B]^{d_i}$ to $\R$ and $\norm{\cdot}_{A_i}$ denote the uniform norm on $A_i$, that is, $\norm{w}_{A_i}=\sup_{\x\in [-3B,3B]^{d_i}}\abs{w(\x)}$.   Take
\[
\mathcal{A}:=\set{\sum_{j=1}^k h_j(\x)v_j(\y)}{k\in\mb{N}, \text{ and }h_j\in  A_1, v_j\in A_2 ,\;\forall\;j\in\hkh{1,2,\ldots, k}}. 
\] 
Obviously, $\mc{A}$ is an algebra of real continuous functions on the compact set $[-3B,3B]^{d_1+d_2}$. Moreover, $\mc{A}$ separates points on  $[-3B,3B]^{d_1+d_2}$ and vanishes at no point of $[-3B,3B]^{d_1+d_2}$. 
Then it follows from the  Stone--Weierstrass theorem (cf. Theorem 7.32  of \cite{rudin1976principles}) that 
for any continuous function $w$ from $[-3B,3B]^{d_1+d_2}$ to $\R$ there exists a sequence in $\mc{A}$  that converges uniformly to $w$ on $[-3B,3B]^{d_1+d_2}$. In particular, for any positive integer $q$, there exists
\[
H_q(\x,\y)=\sum_{j=1}^{k_q}h_{j,q}(\x)v_{j,q}(\y)
\] with $k_q\in\mb{N}$, $h_{j,q}\in A_1$, $v_{j,q}\in A_2$ ($j=1,2,\ldots, k_q$) such that 
\begin{align}\label{20260426134337}
\sup\set{\abs{H_q(\x,\y)-u(\x,\y)}}{\x\in[-3B,3B]^{d_1}, \y\in [-3B,3B]^{d_2}}\leq 1/q. 
\end{align}It is well-known that ReLU Neural networks with one hidden layer have the universal approximation property on compact sets (see, e.g., Theorem 1 of  \cite{leshno1993multilayer}). Therefore, for any $q\in\mb{N}$  there exists a sufficiently large positive integer $a_q$ and neural networks 
\begin{align}\label{20260426230823}
\widetilde{h}_{j,q}\in \fdnn(d_1,1, a_q,1),\;\;\widetilde{v}_{j,q}\in \fdnn(d_2,1, a_q,1)
\end{align} such that 
\[
&\norm{\widetilde{h}_{j,q}-h_{j,q}}_{A_1}\leq \frac{1}{2qk_q}\cdot\frac{1}{1+\norm{v_{j,q}}_{A_2}}, \;\;\norm{\widetilde{v}_{j,q}-v_{j,q}}_{A_2}\leq \frac{1}{2qk_q}\cdot\frac{1}{1+\norm{h_{j,q}}_{A_1}}, 
\] for all $j\in\hkh{1,2,\ldots, k_q}$. Take 
\begin{align}\label{20260426234319}
\widetilde{H}_q(\x,\y)=\sum_{j=1}^{k_q}\widetilde{h}_{j,q}(\x)\widetilde{v}_{j,q}(\y). 
\end{align}
Then we have that 
\[
&\abs{{H}_q(\x,\y)-\widetilde{H}_q(\x,\y)}=\abs{\sum_{j=1}^{k_q}\ykh{({h}_{j,q}(\x)-\widetilde{h}_{j,q}(\x)){v}_{j,q}(\y)+\widetilde{h}_{j,q}(\x)({v}_{j,q}(\y)-\widetilde{v}_{j,q}(\y))}}
\\&\leq \sum_{j=1}^{k_q}\ykh{\abs{{h}_{j,q}(\x)-\widetilde{h}_{j,q}(\x)}\abs{{v}_{j,q}(\y)}+\abs{\widetilde{h}_{j,q}(\x)}\abs{{v}_{j,q}(\y)-\widetilde{v}_{j,q}(\y)}}
\\&\leq 
\sum_{j=1}^{k_q}\ykh{\tfrac{1}{2qk_q}\cdot\frac{\norm{{v}_{j,q}}_{A_2}}{1+\norm{v_{j,q}}_{A_2}}+\tfrac{1}{2qk_q}\cdot\frac{\norm{\widetilde{h}_{j,q}}_{A_1}}{1+\norm{h_{j,q}}_{A_1}}}\\&\leq \frac{1}{2qk_q}\cdot\sum_{j=1}^{k_q}\ykh{1+\frac{\norm{{h}_{j,q}}_{A_1}+\norm{{h}_{j,q}-\widetilde{h}_{j,q}}_{A_1}}{1+\norm{h_{j,q}}_{A_1}}}
\leq
\frac{1}{2qk_q}\cdot\sum_{j=1}^{k_q}\ykh{1+\frac{\norm{{h}_{j,q}}_{A_1}+1}{1+\norm{h_{j,q}}_{A_1}}}\\&=\frac{1}{q},\;\forall\;\x\in [-3B,3B]^{d_1}, \;\y\in [-3B,3B]^{d_2}, 
\] which, together with \eqref{20260426134337}, yields 
\begin{align}\label{20260426233911}
\abs{{u}(\x,\y)-\widetilde{H}_q(\x,\y)}\leq \frac{2}{q},\;\forall\;\x\in [-3B,3B]^{d_1}, \;\y\in [-3B,3B]^{d_2}. 
\end{align} For $i=1,2$ we denote 
\[
T_{i}:\R^{d_i}&\to [-2B,2B]^{d_i}, \\
\begin{pmatrix}
x_1\\x_2\\
\vdots\\
x_{d_i}
\end{pmatrix}&\mapsto \begin{pmatrix}
\sigma(x_1+2B)-\sigma(x_1-2B)-\sigma(0x_1+2B)\\\sigma(x_2+2B)-\sigma(x_2-2B)-\sigma(0x_2+2B)\\
\vdots\\
\sigma(x_{d_i}+2B)-\sigma(x_{d_i}-2B)-\sigma(0x_{d_i}+2B)
\end{pmatrix}. 
\] It is easy to verify that for any $(\x,\y)\in\R^{d_1}\times \R^{d_2}$,  if $(\x,\y)\in[-2B,2B]^{d_1+d_2}$, then $(T_1(\x), T_2(\y))=(\x,\y)$, meaning that $u(\x,\y)=u(T_1(\x),T_2(\y))$; if, otherwise $(\x,\y)\notin [-2B,2B]^{d_1+d_2}$, then $(T_1(\x), T_2(\y))\notin [-B,B]^{d_1+d_2}$, meaning that $u(T_1(\x),T_2(\y))=0=u(\x,\y)$. Therefore, we have 
\begin{align}\label{20260426234133}
u(T_1(\x),T_2(\y))=u(\x,\y),\;\forall\; (\x,\y)\in\R^{d_1}\times\R^{d_2}. 
\end{align}
Define 
\begin{align}\label{20260426230653}
\widetilde{S}_q:\R^{d_1}\times\R^{d_2}\to\R, \;(\x,\y)\mapsto\sum_{j=1}^{k_q}\widetilde{h}_{j,q}(T_1(\x))\widetilde{v}_{j,q}(T_2(\y)).
\end{align} Obviously, \begin{align}\label{20260426230709}
T_i\in \fdnn(d_i, 1, 3d_i, d_i), \;\forall\;i\in\hkh{1,2}. \end{align} 
It follows from \eqref{20260426233911} that 
\[
&\abs{u(\x,\y)-\widetilde{S}_q(\x,\y)}\xlongequal{\because \eqref{20260426234319},  \eqref{20260426230653}} \abs{u(\x, \y)-\widetilde{H}_q(T_1(\x),T_2(\y))}\\&\xlongequal{\because\eqref{20260426234133}} \abs{u(T_1(\x), T_2(\y))-\widetilde{H}_q(T_1(\x),T_2(\y))}
\leq\frac{2}{q},\;\forall\; (\x,\y)\in\R^{d_1}\times\R^{d_2}, \;\forall\;q\in\mb{N}.  
\] Thus 
\begin{align}\label{20260427001950}
\sup_{(\x,\y)\in\R^{d_1}\times\R^{d_2}}{\abs{u(\x,\y)-\widetilde{S}_q(\x,\y)}}{}\leq \frac{2}{q},\;\forall\;q\in\mb{N}, 
\end{align} and 
\begin{align}\label{20260427000711}
\sup_{(\x,\y)\in\R^{d_1}\times\R^{d_2}}{\abs{\widetilde{S}_q(\x,\y)}}\leq\frac{2}{q}+\sup_{(\x,\y)\in\R^{d_1}\times\R^{d_2}}{\abs{u(\x,\y)}}\leq 2+B,\;\forall\;q\in\mb{N}. 
\end{align}
Combining \eqref{20260426230823},  \eqref{20260426230653},  \eqref{20260426230709}, and \eqref{20260427000711},  we  get 
\begin{align}\label{20260427014050}
\widetilde{S}_q\in\fscoring(2,(a_q+3d_1)\cdot k_q,2, (a_q+3d_2)\cdot k_q,k_q),\;\forall\; q\in\mb{N}. 
\end{align} From \eqref{20260427001950} we see $\hkh{\widetilde{S}_q}_{q=1}^\infty$ converges uniformly to $u$. We can then use Arzel\`a's  Bounded Convergence Theorem to deduce that 
\[
&\L({u})=\mathbb{E}_{\x}\mathbb{E}_{\y \sim p^+_\x}\zkh{\tau \log\mathbb{E}_{\y' \sim p_\Y}\exp\ykh{\tfrac{u(\x,\y')-u(\x,\y)}{\tau}}}
\\&=\mathbb{E}_{\x}\mathbb{E}_{\y \sim p^+_\x}\zkh{\tau \log\lim_{q\to\infty}\mathbb{E}_{\y' \sim p_\Y}\exp\ykh{\tfrac{\widetilde{S}_q(\x,\y')-\widetilde{S}_q(\x,\y)}{\tau}}}
\\&=\mathbb{E}_{\x}\mathbb{E}_{\y \sim p^+_\x}\zkh{\lim_{q\to\infty}\tau \log\mathbb{E}_{\y' \sim p_\Y}\exp\ykh{\tfrac{\widetilde{S}_q(\x,\y')-\widetilde{S}_q(\x,\y)}{\tau}}}
\\&=\lim_{q\to\infty}\mathbb{E}_{\x}\mathbb{E}_{\y \sim p^+_\x}\zkh{\tau \log\mathbb{E}_{\y' \sim p_\Y}\exp\ykh{\tfrac{\widetilde{S}_q(\x,\y')-\widetilde{S}_q(\x,\y)}{\tau}}}
=\lim_{q\to\infty} \L(\widetilde{S}_q). 
\] Hence  there exists a sufficiently large  $b\in\mb{N}$ such that $\abs{\L(\widetilde{S}_b)-\L(u)}\leq \e/3$.  Thus 
\[
\L(\widetilde{S}_b)-\L^*\leq \L(u)-\L^*+\abs{\L(\widetilde{S}_b)-\L(u)}\leq \e/2+\e/3<\e, 
\]  and it follows from \eqref{20260427014050} that  \[&\widetilde{S}_b\in\fscoring(2,(a_b+3d_1)\cdot k_b,2, (a_b+3d_2)\cdot k_b,k_b)\\&\subset \fscoring(L_1,(a_b+3d_1)\cdot k_b,L_2, (a_b+3d_2)\cdot k_b,k_b). 
\] Therefore, taking $N_1=(a_b+3d_1)\cdot k_b$, $N_2=(a_b+3d_2)\cdot k_b$, $D=k_b$,  and $\tilde{s}=\widetilde{S}_b$ yields the desired result. 
 \end{proof}

\section{Proof of Theorem \ref{4p3thm20260427130502}}\label{aE20260427130617}

We first present a result regarding the uniform approximation error bound for Hölder continuous functions using ReLU neural networks, which is a special case of Theorem 1.1 of \cite{shen2020DeepNe}. 

\begin{lemma}\label{20260428014457}
Let $d\in\mb{N}$, $R\in(0,\infty)$, $\beta\in(0,1]$,  and $B\in(0,\infty)$. Then there exist constants $\zzhcyi=\zzhcyi(d)\in\mb{N}$ and $\zzhcer=\zzhcer(d,R,\beta, B)\in(0,\infty)$ such that for any $L\in\mb{N}$, $N\in\mb{N}$ and  any $f:[-R,R]^d\to \R$ with $\norm{f}_{\beta,[-R,R]^d}\leq B$ there exists a neural network $\tilde{f}\in\fdnn(d,\zzhcyi L, \zzhcyi N, 1)$ such that 
\begin{align}\label{20260427145503}
\sup_{\y\in[-R,R]^d}\abs{f(\y)-\tilde{f}(\y)}\leq \zzhcer\cdot \ykh{\tfrac{1}{L^2N^2}}^{\beta/d}. 
\end{align}
\end{lemma}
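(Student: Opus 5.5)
The plan is to obtain the lemma by rescaling Theorem 1.1 of \cite{shen2020DeepNe}, and then to check that the constants and the network format match the space $\fdnn$ used here. That theorem states the following. For any $h:[0,1]^d\to\R$ that is H\"older-$\beta$ with constant $\lambda$ with respect to the Euclidean norm, and any $L,N\in\mb N$, there is a ReLU network $\phi$ with width at most $c(d)\max\{d\lfloor N^{1/d}\rfloor,N+2\}$ and depth at most $11L+c'(d)$ such that
\[
\sup_{[0,1]^d}|h-\phi|\le 19\sqrt d\,\lambda\,(NL)^{-2\beta/d}=19\sqrt d\,\lambda\,\ykh{\tfrac1{L^2N^2}}^{\beta/d}.
\]
This is exactly the rate in \eqref{20260427145503}, so the work is in the reductions.

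\textbf{Step 1 (rescaling the domain).} Given $f$ with $\norm{f}_{\beta,[-R,R]^d}\le B$, set $h(\z):=f(2R\z-R\mathbf 1)$ for $\z\in[0,1]^d$. Since $\norm{\x-\y}_\infty\le\norm{\x-\y}_2$, the seminorm bound in the $\infty$-norm gives
\[
|h(\z)-h(\z')|\le B(2R)^\beta\norm{\z-\z'}_\infty^\beta\le B(2R)^\beta\norm{\z-\z'}_2^\beta.
\]
So $h$ is H\"older with $\lambda=B(2R)^\beta$. Applying the theorem yields $\phi$, and we define $\tilde f(\y):=\phi((\y+R\mathbf 1)/(2R))$. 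This $\tilde f$ has the same sup error on $[-R,R]^d$, so the error constant is $\zzhcer:=19\sqrt d\,B(2R)^\beta$, which depends only on $(d,R,\beta,B)$.

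\textbf{Step 2 (fitting the format \eqref{240318220428}).} The affine input map is absorbed into the first layer. Its linear part goes into $\bm W_0$, and its translation goes into the shift $\v_1$, since $\sigma_{\v}(\bm W\x)=\sigma(\bm W\x-\v)$ and any bias $b$ can be written as $\v=-b$.

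The only mismatch is that $\eqref{240318220428}$ has no bias in the output layer, whereas Shen's networks may have one, say $c$. To handle it, we carry one extra hidden neuron through every layer: it uses a zero weight row and shift $-|c|$ in the first layer, giving the constant $|c|$. Because ReLU is the identity on nonnegative inputs, the neuron passes $|c|$ unchanged through subsequent layers with weight $1$ and shift $0$. The output layer then adds $\pm$ this neuron.

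This increases each width by one. The total width is then at most $c(d)\max\{dN,N+2\}+1\le \zzhcyi N$, and the depth is at most $11L+c'(d)\le \zzhcyi L$ for $L\ge1$. Here we choose the integer $\zzhcyi(d)\ge \max\{c(d)(d+2)+1,\;11+c'(d)\}$. Networks of smaller width or depth embed into $\fdnn(d,\zzhcyi L,\zzhcyi N,1)$ by padding: zero-weight neurons for width, and identity layers $\sigma(t)-\sigma(-t)$ for depth, which costs a factor of $2$ in width and is absorbed into $\zzhcyi$.

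\textbf{Main obstacle.} The step I expect to need the most care is this bookkeeping: confirming that the depth and width constants in the cited theorem depend only on $d$ and not on $\beta$ or $\lambda$. If $c'$ depended on $\beta$, I would still be fine, because $\beta\in(0,1]$ ranges over a fixed set and one can take the worst case.
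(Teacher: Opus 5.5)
Your proposal is correct and takes the same route as the paper. Both rescale $f$ to $[0,1]^d$, bound the modulus of continuity by $B(2R)^\beta r^\beta$ via $\norm{\cdot}_\infty\le\norm{\cdot}_2$, and invoke Theorem 1.1 of \cite{shen2020DeepNe}, whose width and depth constants depend only on $d$; composing with the affine map gives $\zzhcer=19\sqrt d\,B(2R)^\beta$. Your extra constant-carrying neuron for the missing output bias seems to play the role of the unexplained $+1$ in the paper's width $3^{d+3}\max\{dN,N+1\}+1$, and your depth padding is fine, though it costs no width since hidden ReLU outputs are nonnegative and pass unchanged through $\sigma_{\mathbf 0}\circ I$.
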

\begin{proof}
Let $L$, $N$ be arbitrary positive integers, and let $f:[-R,R]^d\to \R$ be an arbitrary function satisfying $\norm{f}_{\beta,[-R,R]^d}\leq B$. Denote $\mathbf{1}:=(1,1,\ldots,1)\in\mb{R}^d$, and define 
\[
g:[0,1]^d\to\mb{R},\;\x\mapsto f(2R\x-R\mathbf{1}). 
\] Obviously, the modulus of continuity of $g$ satisfies 
\[
\omega_g(r)&:=\sup\set{\abs{g(\x)-g(\y)}}{\x,\y\in[0,1]^d,\norm{\x-\y}_2\leq r}
\\&=\sup\set{\abs{f(2R\x-R\mathbf{1})-f(2R\y-R\mathbf{1})}}{\x,\y\in[0,1]^d,\norm{\x-\y}_2\leq r}
\\&\leq 
\sup\set{B\cdot\norm{(2R\x-R\mathbf{1})-(2R\y-R\mathbf{1})}_\infty^\beta}{\x,\y\in[0,1]^d,\norm{\x-\y}_2\leq r}
\\&
=
\sup\set{B\cdot(2R)^\beta\norm{\x-\y}_\infty^\beta}{\x,\y\in[0,1]^d,\norm{\x-\y}_2\leq r}
\\&\leq
{B\cdot(2R)^\beta\cdot r^\beta},\;\forall\;r\in(0,\infty). 
\] Then it follows from  Theorem 1.1 of \cite{shen2020DeepNe} that there exists 
\[
\tilde{g}\in\fdnn(d,12L+14+2d, 3^{d+3}\max\hkh{dN,N+1}+1, 1)
\] such that 
\[
\sup_{\x\in[0,1]^d}\abs{g(\x)-\tilde{g}(\x)}\leq 19\sqrt{d}\cdot B\cdot (2R)^\beta\cdot \ykh{N^{-2/d}L^{-2/d}}^\beta. 
\] Take $\zzhcyi:=3^{d+3}\cdot (1+d)+1$, $\zzhcer:=19\sqrt{d}\cdot B\cdot (2R)^\beta$,  and  $\tilde{f}(\y):=\tilde{g}(\y/(2R)+\frac12\mathbf{1})$. Then we have 
\[
\tilde{f}&\in\fdnn(d,12L+14+2d, 3^{d+3}\max\hkh{dN,N+1}+1, 1)
\\&\subset \fdnn(d,(26+2d)\cdot L, 3^{d+3}\cdot(d+1)\cdot N+1, 1)
 \subset \fdnn(d,\zzhcyi L, \zzhcyi N, 1), 
\] and \[
&\sup_{\y\in[-R,R]^d}\abs{f(\y)-\tilde{f}(\y)}=\sup_{\x\in[0,1]^d}\abs{f(2R\x-R\mathbf{1})-\tilde{f}(2R\x-R\mathbf{1})}\\&=\sup_{\x\in[0,1]^d}\abs{g(\x)-\tilde{g}(\x)}\leq 19\sqrt{d}\cdot B\cdot (2R)^\beta\cdot \ykh{N^{-2/d}L^{-2/d}}^\beta=\zzhcer\cdot \ykh{\tfrac{1}{L^2N^2}}^{\beta/d}. 
\] This completes the proof. 
\end{proof}

\begin{lemma}\label{20260428072356}
Let Assumption \ref{bounded risk 20260421190226} hold. 
Denote $s^*(\x,\y):=\tau\log\frac{p(\y\mid\x)}{p_\Y(\y)}$.
Then for any bounded measurable function $u:\X\times\Y\to \R$ there holds 
\[
\L(u)-\L^*\leq \E_\x\zkh{\sup_{\y\in\Y}\tfrac{\abs{u(\x,\y)-s^*(\x,\y)}^2}{2\tau}}. 
\]
\end{lemma}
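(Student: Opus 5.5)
We start from Lemma~\ref{20260426083507}, which under Assumption~\ref{bounded risk 20260421190226} gives, for bounded measurable $u$,
\[
\L(u)-\L^*=\E_\x\zkh{\tau\log\E_{\y\sim p(\cdot\mid\x)}\exp\ykh{\Delta(\x,\y)/\tau}-\E_{\y\sim p(\cdot\mid\x)}\Delta(\x,\y)},
\]
where $\Delta:=u-s^*$. Here we have written $\E_{(\x,\y)\sim p}\Delta=\E_\x\E_{\y\sim p(\cdot\mid\x)}\Delta$. This needs $\Delta\in L^1(p)$, which holds because $u$ is bounded and $s^*\in L^1(p)$ by Assumption~\ref{bounded risk 20260421190226}. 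The whole argument then reduces to bounding, for each fixed $\x$, the cumulant-generating-type quantity
\[
\Phi_\x:=\tau\log\E_{\y\sim p^+_\x}e^{\Delta(\x,\y)/\tau}-\E_{\y\sim p^+_\x}\Delta(\x,\y)
\]
by $\sup_{\y}\Delta(\x,\y)^2/(2\tau)$, and integrating over $\x$.

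For fixed $\x$ we set $M_\x:=\sup_{\y\in\Y}\abs{\Delta(\x,\y)}$. If $M_\x=\infty$, the bound is trivial for that $\x$. Note that $s^*$ may be unbounded below, so $M_\x$ can be infinite. We also note that $\Phi_\x\ge0$ by Jensen, so the pointwise bound integrates without sign issues.

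When $M_\x<\infty$, the random variable $X:=\Delta(\x,\y)/\tau$ with $\y\sim p^+_\x$ takes values in $[-M_\x/\tau,M_\x/\tau]$, an interval of length $2M_\x/\tau$. Hoeffding's lemma gives
\[
\log\E e^{X}\le \E X+\frac{(2M_\x/\tau)^2}{8}=\E X+\frac{M_\x^2}{2\tau^2}.
\]
Multiplying by $\tau$ yields $\Phi_\x\le M_\x^2/(2\tau)$. Taking $\E_\x$ then gives exactly the claimed inequality $\L(u)-\L^*\le\E_\x\zkh{\sup_{\y}\abs{u-s^*}^2/(2\tau)}$.

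The main care point is the bookkeeping that justifies splitting the joint expectation into an inner conditional expectation and that Hoeffding is applied with the interval taken symmetric around $0$ (using the range $[\inf\Delta,\sup\Delta]$ would even give a sharper constant). A secondary technicality is measurability of $\x\mapsto\sup_\y\abs{\Delta}$. To avoid this, we interpret the right-hand side as an outer/upper integral, or note that in the applications $\Delta$ is continuous, so the supremum can be taken over a countable dense set.
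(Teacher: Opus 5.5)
Your proposal is correct and follows essentially the same route as the paper: reduce via Lemma~\ref{20260426083507} to a pointwise-in-$\x$ bound, apply Hoeffding's lemma on the symmetric interval $[-a(\x),a(\x)]$ with $a(\x)=\sup_{\y}\abs{u(\x,\y)-s^*(\x,\y)}$ and $\lambda=1/\tau$, treat the case $a(\x)=\infty$ as trivial, and integrate over $\x$. Your remarks on the $L^1$ justification for splitting the joint expectation and on the measurability of $\x\mapsto a(\x)$ are, if anything, slightly more careful than the paper's write-up.
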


\begin{proof}For simplicity, we denote $p^+_\x:=p(\;\cdot\mid\x)$. 
Let $u$ be an arbitrary bounded measurable function from $\X\times\Y$ to $\R$. Define  \begin{align}\label{20260427211351}
a(\x):=\sup_{\y\in\Y}\abs{u(\x,\y)-s^*(\x,\y)},\;\forall\;\x\in \X, \end{align} and
\[H(\x,\y):=u(\x,\y)-s^*(\x,\y),\;\forall\; \x\in \X, \;\y\in\Y. 
\]  Then define 
 \[
 v(\x):=
 \begin{cases}
 \E_{\y\sim p_\x^+}{H(\x,\y)}, &\text{ if }\E_{\y\sim p_\x^+}{\abs{H(\x,\y)}<\infty},\\
 0, &\text{ if }\E_{\y\sim p_\x^+}{\abs{H(\x,\y)}=\infty}
 \end{cases} 
\] for all $\x\in\X$, and $A:=\set{\x\in \X}{a(\x)<\infty}$.  It follows from Assumption \ref{bounded risk 20260421190226} and the boundedness of $u$ that
\[\mathbb{E}_{\x}\abs{v(\x)}\leq \mathbb{E}_{\x} \E_{\y\sim p_\x^+}\abs{H(\x,\y)}
<\infty,\] which means that 
$\mathbb{E}_{\y \sim p^+_\x}\abs{H(\x,\y)}<\infty$ for almost all $\x\in\X$, and 
\begin{align}\label{20260427210738}
\mathbb{E}_{\x}\mathbb{E}_{\y \sim p^+_\x}H(\x,\y)=\mathbb{E}_{\x}v(\x). 
\end{align}
According to Hoeffding's lemma  (see, e.g., Lemma 2.2 \cite{boucheron2013concentration}), 
\[
&-v(\x)\cdot\lambda +\log\E_{\y\sim p_\x^+}\exp\bigykh{H(\x,\y)\cdot\lambda}=\log\E_{\y\sim p_\x^+}\exp\bigykh{(H(\x,\y)-v(\x))\cdot\lambda}\\&\leq \frac{\lambda^2\cdot(a(\x)-v(\x)-(-a(\x)-v(\x)))^2}{8}
=\frac{\lambda^2\cdot a(\x)^2}{2},\;\forall\;\lambda\in\R, \;\forall\;\x\in A. 
\] Taking $\lambda=1/\tau$ above, we get  
\begin{align}\label{20260427202719}
-v(\x) +\tau\log\E_{\y\sim p_\x^+}\exp\bigykh{H(\x,\y)/\tau}\leq \frac{ a(\x)^2}{2\tau},\;\forall\;\x\in A. 
\end{align} Furthermore, 
\begin{align}\label{20260427202725}
-v(\x) +\tau\log\E_{\y\sim p_\x^+}\exp\bigykh{H(\x,\y)/\tau}\leq+\infty= \frac{ a(\x)^2}{2\tau},\;\forall\;\x\in \X\setminus A. 
\end{align}  Combining \eqref{20260427202719} and \eqref{20260427202725}, we obtain 
\begin{align*}
-v(\x) +\tau\log\E_{\y\sim p_\x^+}\exp\bigykh{H(\x,\y)/\tau}\leq \frac{ a(\x)^2}{2\tau},\;\forall\;\x\in \X. 
\end{align*}
Then it follows from Lemma \ref{20260426083507}
\[
&\L(u)-\L^*=\mathbb{E}_{\x}\zkh{\tau \log{{\mathbb{E}_{\y \sim p^+_\x}{\exp\ykh{{(u(\x,\y)-s^*(\x,\y))}/{\tau}}}}}}-
\mathbb{E}_{\x}\mathbb{E}_{\y \sim p^+_\x}\zkh{{{{{u(\x,\y)-s^*(\x,\y)}}}}}
\\&=
\mathbb{E}_{\x}\zkh{\tau \log{{\mathbb{E}_{\y \sim p^+_\x}{\exp\ykh{{H(\x,\y)}/{\tau}}}}}}-
\mathbb{E}_{\x}\mathbb{E}_{\y \sim p^+_\x}\zkh{{{{{H(\x,\y)}}}}}
\\&\xlongequal{\because\eqref{20260427210738}}
\mathbb{E}_{\x}\zkh{\tau \log{{\mathbb{E}_{\y \sim p^+_\x}{\exp\ykh{{H(\x,\y)}/{\tau}}}}}}-
\mathbb{E}_{\x}\zkh{{{{{v(\x)}}}}}
\\&=
\mathbb{E}_{\x}\zkh{-v(\x)+\tau \log{{\mathbb{E}_{\y \sim p^+_\x}{\exp\ykh{{H(\x,\y)}/{\tau}}}}}}\\&\leq 
\mathbb{E}_{\x}\zkh{\frac{ a(\x)^2}{2\tau}}
\xlongequal{\because\eqref{20260427211351}}
\E_\x\zkh{\sup_{\y\in\Y}\tfrac{\abs{u(\x,\y)-s^*(\x,\y)}^2}{2\tau}}. 
\] This proves the desired result. 
\end{proof}

\begin{lemma}\label{20260428013219}
Let $\beta\in(0,1]$,  $d\in\mb{N}$, $\varnothing\subsetneqq U\subset V\subset\R^d$, and $f$ be a function from $U$ to $\R$. Then there exists a function $g:V\to\R$ such that 
$\norm{g}_{\beta,V}\leq \norm{f}_{\beta,U}$ and 
\[
f(\x)=g(\x),\;\forall\; \x\in U. 
\]
\end{lemma}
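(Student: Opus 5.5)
This is a Hölder analogue of the McShane--Whitney extension, with the sup-norm part of $\norm{\cdot}_{\beta}$ controlled by truncation. Write $M_0:=\sup_{\z\in U}\abs{f(\z)}$ and $M_1:=\sup_{\x\neq\y\in U}\abs{f(\x)-f(\y)}/\norm{\x-\y}_\infty^\beta$, so that $\norm{f}_{\beta,U}=\max\hkh{M_0,M_1}$. If this quantity is infinite there is nothing to prove: extend $f$ by zero. So assume $M_0,M_1<\infty$.

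\textbf{Step 1 (McShane extension).} Define
\[
h(\x):=\inf_{\z\in U}\ykh{f(\z)+M_1\norm{\x-\z}_\infty^\beta},\qquad \x\in V.
\]
The key fact is that $\rho(\x,\z):=\norm{\x-\z}_\infty^\beta$ is a metric on $\R^d$ when $\beta\in(0,1]$, because $t\mapsto t^\beta$ is concave, increasing and vanishes at $0$, hence subadditive. With this:
\begin{itemize}
\item $h$ is finite, since $h(\x)\ge -M_0>-\infty$.
\item On $U$, $h$ agrees with $f$: taking $\z=\x$ gives $h(\x)\le f(\x)$, and the Hölder bound $f(\z)\ge f(\x)-M_1\rho(\x,\z)$ gives $h(\x)\ge f(\x)$.
\item $h$ is $M_1$-Hölder on $V$: for each fixed $\z$, the map $\x\mapsto f(\z)+M_1\rho(\x,\z)$ is $M_1$-Lipschitz with respect to $\rho$ by the triangle inequality, and an infimum of such functions keeps that property.
\end{itemize}

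\textbf{Step 2 (truncation).} Set $g:=T_{M_0}\circ h$, where $T_{M_0}$ is clipping to $[-M_0,M_0]$, as in the proof of Lemma~\ref{20260426091356}.
\begin{itemize}
\item $T_{M_0}$ is $1$-Lipschitz, so the Hölder seminorm of $g$ is still at most $M_1$.
\item $\sup_{V}\abs{g}\le M_0$ by construction.
\item $f$ already takes values in $[-M_0,M_0]$, so $g=f$ on $U$.
\end{itemize}
Together these give $\norm{g}_{\beta,V}\le\max\hkh{M_0,M_1}=\norm{f}_{\beta,U}$.

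The only point needing care is the metric property of $\norm{\cdot}_\infty^\beta$, which follows from $(a+b)^\beta\le a^\beta+b^\beta$. Everything else is routine.
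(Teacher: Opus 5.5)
Your proof is correct and is essentially the paper's argument: a McShane extension with respect to the metric $\rho(\x,\y)=\norm{\x-\y}_\infty^\beta$, followed by clipping with a $1$-Lipschitz truncation. The differences are cosmetic. You write out the infimum formula and use separate constants $M_1$ (extension) and $M_0$ (clipping level), whereas the paper cites McShane's theorem and uses $a=\norm{f}_{\beta,U}$ for both; this also avoids any convention issue for $M_1$ as a supremum over an empty set when $U$ is a single point.
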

\begin{proof} The case $\norm{f}_{\beta,U}=\infty$ is trivial. In what follows, we assume $\norm{f}_{\beta,U}<\infty$. 
Let \[\rho:\R^d\times\R^d\to[0,\infty),\;(\x,\y)\mapsto \norm{\x-\y}_\infty^\beta, 
\] and $a:=\norm{f}_{\beta,U}$. Then it is easy to verify that $\rho$ is a metric on $\R^d$, and  \begin{align*}
\sup\set{\tfrac{\abs{f(\x)-f(\y)}}{\norm{\x-\y}^\beta_\infty}}{{U\ni \x\neq\y\in U}}\leq \norm{f}_{\beta,U}=a.\end{align*} Therefore,  $f$ is $a$-Lipschitz continuous on $U$ with respect to the metric $\rho$, that is, 
\[
\abs{f(\x)-f(\y)}\leq a\cdot \rho(\x,\y),\;\forall\; \x\in U, \;\y\in U. 
\]Then it follows from Theorem 1 of \cite{mcshane1934extension} that $f$ can be extended to $h:\R^d\to \R$ preserving the same Lipschitz condition, that is, 
\begin{align}\label{20260427231859}
h(\x)=f(\x),\;\forall\;\x\in U, 
\end{align} and 
\begin{align}\label{20260427232537}
\abs{h(\x)-h(\y)}\leq a\cdot \rho(\x,\y),\;\forall\; \x\in \R^d, \;\y\in \R^d. 
\end{align} Let 
$T:\R\to\R,\;t\mapsto \min\hkh{\max\hkh{t,-a}, a}$. Obviously, 
\begin{align}\label{20260427232019}
\abs{T(x)-T(y)}\leq \abs{x-y}, \forall\;x\in\R,\;\forall\; y\in\R,\end{align} and \begin{align}\label{20260427232055}
T(x)=x,\;\forall\;x\in[-a,a].\end{align} Note that $\abs{f(\x)}\leq \norm{f}_{\beta, U}=a$ for all  $\x\in U$. Therefore, 
\begin{align}\label{20260427235015}
T(h(\x))\xlongequal{\because\eqref{20260427231859}} T(f(\x))\xlongequal{\because\eqref{20260427232055}} f(\x),\;\forall\; \x\in U. 
\end{align}Combining \eqref{20260427232537} and \eqref{20260427232019}, we get 
\[
\abs{T(h(\x))-T(h(\y))}\leq \abs{h(\x)-h(\y)}\leq a\cdot\rho(\x,\y)=a\cdot \norm{\x-\y}_\infty^\beta,\;\forall\;\x,\y\in\R^d. 
\]Thus 
\[
\sup_{V\ni\x\neq \y\in V}{\tfrac{\abs{T(h(\x))-T(h(\y))}}{\norm{\x-\y}_\infty^\beta}}\leq a. 
\] Therefore, 
\[
&\norm{T\circ h}_{\beta, V}=\max\hkh{\sup_{\x\in V}\abs{T(h(\x))}, \sup_{V\ni\x\neq \y\in V}{\tfrac{\abs{T(h(\x))-T(h(\y))}}{\norm{\x-\y}_\infty^\beta}}}
\\&\leq \max\hkh{\sup_{\x\in V}\abs{T(h(\x))}, a} \leq \max\hkh{\sup_{t\in\R}\abs{T(t)}, a}=\max\hkh{a,a}=a=\norm{f}_{\beta,U}, 
\] which, together with \eqref{20260427235015}, implies that the function $g:V\to \R, \x\mapsto T(h(\x))$ has all the desired properties. This completes the proof. \end{proof}

\begin{lemma}\label{20260501042647}
Suppose $m_1\in\mb{N}$, $m_2\in\mb{N}$,  $R\in(0,\infty)$,  $\beta\in(0,1]$,   $B\in(0,\infty)$. 
Fix arbitrary positive numbers  $\theta_1$, $\theta_2$, $\theta_3$, $\theta_4$, $\theta_5$. 
Then for 
 sufficiently small $\e\in(0,\infty)$ and ${L_1\asymp \e^{-\theta_1}}$, $N_1 \asymp \e^{-\theta_2}$, $L_2 \asymp \e^{-\theta_3}$, $N_2 \asymp \e^{-\theta_4}$, $D\asymp \e^{-\theta_5}$ there holds  
\[&\sup_{\substack{h:[-R,R]^{m_1+m_2}\to \R\\{\norm{h}_{\beta, [-R,R]^{m_1+m_2}}\leq B}}}\inf\set{\sup_{\substack{\x\in[-R,R]^{m_1}\\\y\in[-R,R]^{m_2}}}\abs{f(\x)^\top g(\y)-h(\x,\y)}}{\begin{aligned}f\in \fdnn(m_1,L_1, N_1, D), \\g\in \fdnn(m_2,L_2, N_2, D)\end{aligned}}\\&\lesssim \abs{\e^{\theta_3+\theta_4-\min\hkh{\theta_2,\theta_4,\theta_5}}}^{2\beta/m_2}+\abs{\e^{\min\hkh{\theta_2,\theta_4, \theta_5}}}^{\beta/m_1}.
\]
\end{lemma}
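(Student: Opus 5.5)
The plan is a separable approximation $h(\x,\y)\approx\sum_{i}\varphi_i(\x)\,h(\x_i,\y)$. Here the $\varphi_i$ form a piecewise-linear partition of unity on a grid in the $\x$-cube; they are exactly ReLU-realisable with constant depth. The functions $\y\mapsto h(\x_i,\y)$ are H\"older-$\beta$ in $\y$ with norm at most $B$, so Lemma~\ref{20260428014457} applies to them. Throughout, put $m:=\min\{\theta_2,\theta_4,\theta_5\}$ and $M:=\lfloor c\,\e^{-m}\rfloor$. The constant $c$ is chosen small so that $M\le \min\{N_1,N_2,D\}/C$ for a fixed constant $C=C(m_1)$, which is possible since $N_1\asymp\e^{-\theta_2}$, $N_2\asymp\e^{-\theta_4}$, $D\asymp\e^{-\theta_5}$.

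\emph{Step 1 (grid and partition of unity).} Take a uniform grid on $[-R,R]^{m_1}$ with $n$ points per axis, where $n^{m_1}\le M$ and $n\asymp M^{1/m_1}$; call the nodes $\x_1,\dots,\x_{n^{m_1}}$. Let $\varphi_i$ be the Courant nodal basis functions for the Kuhn (Freudenthal) triangulation of this grid. These are continuous, piecewise linear, nonnegative, and sum to one. Each $\varphi_i$ is supported in the $\ell_\infty$-ball of radius $2R/(n-1)$ around $\x_i$. On the Kuhn triangulation, each $\varphi_i$ can be written as $\max\{0,\min_{j\le J}\ell_{i,j}(\x)\}$ with $J=J(m_1)$ affine functions. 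A min of $J$ affine maps is a ReLU network of depth $O(\log J)$ and width $O(J)$, using $\min(a,b)=a-\sigma(a-b)$ and identities $t=\sigma(t)-\sigma(-t)$ for padding. Stacking the $n^{m_1}\le M$ networks in parallel and padding with identity layers to depth $L_1$ (fine, since $L_1\to\infty$) gives $f=(\varphi_1,\dots,\varphi_{n^{m_1}},0,\dots,0)\in\fdnn(m_1,L_1,N_1,D)$.

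Because the $\varphi_i$ are nonnegative, sum to one, and are supported near $\x_i$,
\[
\Big|h(\x,\y)-\sum_i\varphi_i(\x)h(\x_i,\y)\Big|\le \sum_i\varphi_i(\x)\,B\|\x-\x_i\|_\infty^\beta\lesssim n^{-\beta}\asymp \e^{m\beta/m_1}.
\]
This produces the second term of the bound.

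\emph{Step 2 (the $\y$-encoder).} Each $g_i:=h(\x_i,\cdot)$ satisfies $\|g_i\|_{\beta,[-R,R]^{m_2}}\le B$. Apply Lemma~\ref{20260428014457} with depth parameter $\asymp L_2/\zzhcyi$ and width parameter $\asymp N_2/(\zzhcyi M)$. Stacking the resulting $\le M$ networks in parallel gives width $\lesssim N_2$ and depth $\le L_2$, padding unused output coordinates with $0$. Since $L_2N_2/M\asymp\e^{-(\theta_3+\theta_4-m)}$, each uniform error is $\lesssim(\e^{\theta_3+\theta_4-m})^{2\beta/m_2}$. For $\e$ small the width parameter is $\ge1$, because $m\le\theta_4$; in the edge case $m=\theta_4$, shrink $c$. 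The error then transfers to the inner product: $|\sum_i\varphi_i(\x)(g_i(\y)-\tilde g_i(\y))|\le\max_i\|g_i-\tilde g_i\|_\infty$, again using $\sum_i\varphi_i=1$ and $\varphi_i\ge0$. The triangle inequality combines the two errors uniformly over all admissible $h$.

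\emph{Main obstacle.} The delicate step is Step 1. It must give an exact, constant-depth ReLU realisation of a genuine partition of unity whose total width is linear in the number of nodes. Tensor-product hats would need multiplications and hence extra depth and error, so I would use the Kuhn-triangulation nodal functions and verify the max--min representation. As a fallback, I would use the fact that any continuous piecewise-linear function on a fixed simplex patch is a ReLU net of size depending only on $m_1$. The remaining bookkeeping is routine: constant factors from $\zzhcyi$, the integer rounding of $n$, and ensuring the padded identity layers respect the width bounds.
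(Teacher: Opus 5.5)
Your proposal is correct. Its skeleton matches the paper's proof: a grid in the $\x$-cube with $\asymp\e^{-\min\{\theta_2,\theta_4,\theta_5\}}$ nodes, the separable surrogate $\sum_{\mathbf a}u_{\mathbf a}(\x)h(\mathbf a,\y)$, and one application of Lemma~\ref{20260428014457} per node for the $\y$-encoder, with width parameter proportional to $N_2$ divided by the number of nodes.

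The genuine difference is how you realise the partition of unity. The paper uses tensor-product hats $u_{\mathbf a}(\x)=\prod_{j=1}^{m_1}l_{k_j}(x_j)$ on the grid $\{Rk/G\}_{k=-G}^{G}$ per axis. It computes the $m_1$-fold product only approximately, with a deep ReLU multiplication network (Lemma A.3 of Schmidt-Hieber) of width $6m_1$ and depth $\asymp L_1$, whose error is $m_1^2 2^{-L_4}$ with $L_4\asymp L_1$. This adds a term of order $(2G+1)^{m_1}2^{-L_4}$ to the final bound. That term is negligible only because $L_1\asymp\e^{-\theta_1}$ grows polynomially.

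Your Kuhn-triangulation nodal functions are realised exactly. So the $\x$-encoder contributes no error, a constant depth $L_1\ge C(m_1)$ suffices, and $\theta_1$ plays no role, which is consistent with the stated bound not depending on $\theta_1$. The price is the geometric verification you flagged, and it does go through. In grid units and with $x_0:=0$, the nodal function at the origin is
\[
\varphi_{\mathbf 0}(\x)=\sigma\bigl(1-\max\{0,x_1,\dots,x_{m_1}\}+\min\{0,x_1,\dots,x_{m_1}\}\bigr)=\sigma\bigl(\min_{0\le i,j\le m_1}(1-x_i+x_j)\bigr).
\]
Hence $J\le(m_1+1)^2$, and the support lies in the unit $\ell_\infty$-ball. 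The paper's route avoids triangulation geometry altogether: it needs only one-dimensional hats and an off-the-shelf multiplication lemma, at the cost of the extra error term and of a growing $x$-encoder depth.
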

\begin{proof}  Let $h:[-R,R]^{m_1+m_2}\to \R$ be an arbitrary function that satisfies ${\norm{h}_{\beta, [-R,R]^{m_1+m_2}}\leq B}$. 
Obviously,  \[\norm{h(\mathbf{a},\cdot)}_{\beta,[-R,R]^{m_2}}\leq \norm{h}_{\beta,[-R,R]^{m_1+m_2}}\leq B,\;\forall\;\mathbf{a}\in[-R,R]^{m_1}. 
\]
Then it follows from  Lemma \ref{20260428014457} that there exist constants  $\zzhcer=\zzhcer(m_2,R,\beta, B)\in(0,\infty)$ and $\zzhcyi=\zzhcyi(m_2)\in\mb{N}$ such that 
for any $L\in\mb{N}$, $N\in\mb{N}$ and  any $\mathbf{a}\in [-R,R]^{m_1}$ there exists a neural network \begin{align}\label{20260428051811}
\tilde{f}_{\mathbf{a},L,N}\in\fdnn(d,\zzhcyi L, \zzhcyi N, 1)\end{align} such that 
\begin{align}\label{20260428062157}
\sup_{\y\in[-R,R]^{m_2}}\abs{h(\mathbf{a},\y)-\tilde{f}_{\mathbf{a},L,N}(\y)}\leq \zzhcer\cdot \ykh{\tfrac{1}{L^2N^2}}^{\beta/m_2}. 
\end{align}
Let  \begin{align}\label{20260428054003}
G:=\flr{\ykh{\min\hkh{D^{1/m_1}, \ykh{\tfrac{N_1}{6m_1}}^{1/m_1}, \ykh{\tfrac{N_2}{2\zzhcyi}}^{1/m_1}}-1}\cdot\frac{1}{2}}\asymp \e^{-\min\hkh{\theta_2,\theta_4, \theta_5}/m_1}\end{align}
and  $N_3:=\flr{N_2/\ykh{\zzhcyi\cdot(2G+1)^{m_1}}}$. Obviously, 
\begin{align}\label{20260428054155}
2\leq \flr{N_2/\ykh{\zzhcyi\cdot(2G+1)^{m_1}}}=N_3\asymp\e^{\min\hkh{\theta_2,\theta_4,\theta_5}-\theta_4}. 
\end{align}We then take

\begin{align}\label{20260428052852}
L_3:=\sup\set{L\in\mb{N}\cap[2,\infty)}{\zzhcyi L\leq L_2}. 
\end{align} From $L_2\asymp \e^{-\theta_3}$ and $N_3\asymp\e^{\min\hkh{\theta_2,\theta_4,\theta_5}-\theta_4}$ we get 
$L_3\asymp \e^{-\theta_3}$. 
For any  $k\in \mb{Z}$, let  $l_k:\R\to[0,1]$ be the hat  function which  is linear on $[\tfrac{R\cdot (k-1)}{G},\tfrac{R\cdot k}{G})]$ and $[\tfrac{R\cdot k}{G},\tfrac{R\cdot (k+1)}{G})]$ respectively, and  satisfies
\begin{align}\label{20260428035101}
l_k(\tfrac{R\cdot k}{G})=1\geq 0=l_k(x),\;\forall\;x\in \R\setminus (\tfrac{R\cdot (k-1)}{G},\tfrac{R\cdot (k+1)}{G}). 
\end{align}
Obviously, $l_k$ can be expressed as 
 \begin{align}\label{20260428044647}
 l_k(x)=\ykh{\sigma(Gx-Rk+R)+\sigma(Gx-Rk-R)-2\sigma(Gx-Rk)}/R,\;\forall\;x\in\R, \end{align}  and $\hkh{l_k}_{k=-G}^G$ forms a partition of unity on $[R,R]$, that is, 
 \begin{align}\label{20260428034423}
 \sum_{k=-G}^Gl_k(x)=1,\;\forall\;x\in[-R,R]. 
 \end{align}
Let
 \[
 \Omega:=\set{\tfrac{Rk}{G}}{k\in[-G,G]\cap \mb{Z}}\subset [-R,R]. 
 \] Then for any $\mathbf{a}=(\tfrac{Rk_1}{G}, \tfrac{Rk_2}{G},\ldots, \tfrac{Rk_{m_1}}{G})\in \Omega^{m_1}$, define 
 \[{u}_{\mathbf{a}}:\R^{m_1}\to[0,1], \;\x=(x_1,\ldots,x_{m_1})\mapsto \prod_{j=1}^{m_1}l_{k_j}(x_j). 
 \] Then we have that   \begin{align}\label{20260428060604}
  u_{\mathbf{a}}(\x)\xlongequal{\because\eqref{20260428035101}}0,\;\forall\;\mathbf{a}\in\Omega^{m_1}, \forall\;\x\in\set{\z\in\R^{m_1}}{\norm{\z-\mathbf{a}}_\infty\geq R/G}, 
  \end{align} and 
 \[
 &\sum_{\mathbf{a}\in\Omega^{m_1}}u_{\mathbf{a}}(\x)=\sum_{k_1=-G}^{G}\sum_{k_2=-G}^G\cdots \sum_{k_{m_1}=-G}^G \prod_{j=1}^{m_1}l_{k_j}(x_j)
 \\&=\prod_{j=1}^{m_1}\sum_{k_j=-G}^Gl_{k_j}(x_j)
 \xlongequal{\because\eqref{20260428034423}} \prod_{j=1}^{m_1}1=1,\;\forall\;\x=(x_1,\ldots,x_{m_1})\in[-R,R]^{m_1}. 
 \] Thus 
 \begin{align}\label{20260428060044}
 \sum_{\mathbf{a}\in\Omega^{m_1}}u_{\mathbf{a}}(\x)=1,\;\forall\;\x\in[-R,R]^{m_1}. 
 \end{align} 
Let $L_4:=\flr{\frac{L_1-2}{\ceil{\log_2m_1}}}-5\asymp \e^{-\theta_1}$. 
According to Lemma A.3 of \cite{schmidt2020nonparametric}, there exists a neural network \begin{align}\label{20260428044723}
H\in\fdnn(m_1,(L_4+5)\cdot\ceil{\log_2m_1}, 6m_1, 1)\end{align} such that 
\[
\abs{H(x_1,\ldots, x_{m_1})-\prod_{j=1}^{m_1}x_j}\leq m_1^2\cdot 2^{-L_4},\;\forall\;x_1,x_2,\ldots, x_{m_1}\in[0,1]. 
\]  For each  $\mathbf{a}=(\tfrac{Rk_1}{G}, \tfrac{Rk_2}{G},\ldots, \tfrac{Rk_{m_1}}{G})\in \Omega^{m_1}$, define 
\[
\tilde{u}_{\mathbf{a}}:\R^{m_1}\to\R, \x=(x_1,\ldots, x_{m_1})\mapsto H(l_{k_1}(x_1), l_{k_2}(x_2),\ldots, l_{k_{m_1}}(x_{m_1})). 
\]
Then we have that 
\[
&\abs{\tilde{u}_{\mathbf{a}}(\x)-u_{\mathbf{a}}(\x)}=\abs{H(l_{k_1}(x_1), l_{k_2}(x_2),\ldots, l_{k_{m_1}}(x_{m_1}))-\prod_{j=1}^{m_1}l_{k_j}(x_j)}
\\&\leq m_1^2\cdot 2^{-L_4},\;\forall\;\x=(x_1,\ldots,x_{m_1})\in\R^{m_1}. 
\] Thus 
\begin{align}\label{20260428062247}
\abs{\tilde{u}_{\mathbf{a}}(\x)-u_{\mathbf{a}}(\x)}\leq m_1^2\cdot 2^{-L_4},\;\forall\;\x\in\R^{m_1},\;\forall\;\mathbf{a}\in \Omega^{m_1}. 
\end{align}
Combining \eqref{20260428044647} and \eqref{20260428044723}, we obtain that 
\[
\tilde{u}_{\mathbf{a}}&\in \fdnn(m_1,2+(L_4+5)\cdot\ceil{\log_2m_1}, 6m_1, 1)
\\&=\fdnn(m_1,2+\flr{\tfrac{L_1-2}{\ceil{\log_2m_1}}}\cdot\ceil{\log_2m_1}, 6m_1, 1)
\subset \fdnn(m_1,L_1, 6m_1, 1),\;\forall\;\mathbf{a}\in\Omega^{m_1}. 
\] Thus 
\begin{align}\label{20260428051845}
\tilde{u}_{\mathbf{a}} \in\fdnn(m_1,L_1, 6m_1, 1),\;\forall\;\mathbf{a}\in\Omega^{m_1}. 
\end{align} Let
\[
{q}:\R^{m_1}\times[-R,R]^{m_2}\to\R, (\x,\y)\mapsto \sum_{\mathbf{a}\in\Omega^{m_1}}{u}_{\mathbf{a}}(\x)h(\mathbf{a}, \y), 
\] and
\[
\tilde{q}:\R^{m_1}\times\R^{m_2}\to\R, (\x,\y)\mapsto \sum_{\mathbf{a}\in\Omega^{m_1}}\tilde{u}_{\mathbf{a}}(\x)\tilde{f}_{\mathbf{a}, L_3,N_3}(\y).  
\] 
Obviously, \[
&\abs{q(\x,\y)-h(\x,\y)}\xlongequal{\because\eqref{20260428060044}}\abs{\sum_{\mathbf{a}\in\Omega^{m_1}}{u}_{\mathbf{a}}(\x)h(\mathbf{a}, \y)-\sum_{\mathbf{a}\in\Omega^{m_1}}{u}_{\mathbf{a}}(\x)h(\x, \y)}\\&=
\abs{\sum_{\mathbf{a}\in\Omega^{m_1}}{u}_{\mathbf{a}}(\x)(h(\mathbf{a}, \y)-h(\x,\y))}\leq
\sum_{\mathbf{a}\in\Omega^{m_1}}{u}_{\mathbf{a}}(\x)\abs{h(\mathbf{a}, \y)-h(\x,\y)}
\\&\xlongequal{\because\eqref{20260428060604}}
\sum_{\mathbf{a}\in\Omega^{m_1},\norm{\x-\mathbf{a}}_\infty\leq \frac{R}{G}}{u}_{\mathbf{a}}(\x)\abs{h(\mathbf{a}, \y)-h(\x,\y)}
\leq \sum_{\mathbf{a}\in\Omega^{m_1},\norm{\x-\mathbf{a}}_\infty\leq \frac{R}{G}}{u}_{\mathbf{a}}(\x)\cdot B\cdot\norm{\mathbf{a}-\x}_\infty^\beta
\\&\leq 
\sum_{\mathbf{a}\in\Omega^{m_1},\norm{\x-\mathbf{a}}_\infty\leq \frac{R}{G}}{u}_{\mathbf{a}}(\x)\cdot B\cdot\ykh{\tfrac{R}{G}}^\beta\leq 
\sum_{\mathbf{a}\in\Omega^{m_1}}{u}_{\mathbf{a}}(\x)\cdot B\cdot\ykh{\tfrac{R}{G}}^\beta\\&\xlongequal{\because\eqref{20260428060044}}B\cdot\ykh{\tfrac{R}{G}}^\beta,\;\forall\;(\x,\y)\in[-R,R]^{m_1}\times[-R,R]^{m_2}. 
\] Thus \begin{align}\label{20260428070404}
 \sup_{\substack{\x\in[-R,R]^{m_1}\\\y\in[-R,R]^{m_2}}}\abs{q(\x,\y)-h(\x,\y)}\leq  B\cdot\ykh{\tfrac{R}{G}}^\beta. 
\end{align}
 Combining \eqref{20260428062157} and \eqref{20260428062247}, we obtain that 
\[
&\abs{q(\x,\y)-\tilde{q}(\x,\y)}=\abs{\sum_{\mathbf{a}\in\Omega^{m_1}}{u}_{\mathbf{a}}(\x)h(\mathbf{a}, \y)-\sum_{\mathbf{a}\in\Omega^{m_1}}\tilde{u}_{\mathbf{a}}(\x)\tilde{f}_{\mathbf{a}, L_3,N_3}(\y)}
\\&=\abs{\sum_{\mathbf{a}\in\Omega^{m_1}}{u}_{\mathbf{a}}(\x)(h(\mathbf{a}, \y)-\tilde{f}_{\mathbf{a}, L_3,N_3}(\y))+\sum_{\mathbf{a}\in\Omega^{m_1}}(u_{\mathbf{a}}(\x)-\tilde{u}_{\mathbf{a}}(\x))\tilde{f}_{\mathbf{a}, L_3,N_3}(\y)}
\\&\leq \sum_{\mathbf{a}\in\Omega^{m_1}}{u}_{\mathbf{a}}(\x)\abs{h(\mathbf{a}, \y)-\tilde{f}_{\mathbf{a}, L_3,N_3}(\y)}+\sum_{\mathbf{a}\in\Omega^{m_1}}\abs{u_{\mathbf{a}}(\x)-\tilde{u}_{\mathbf{a}}(\x)}\abs{\tilde{f}_{\mathbf{a}, L_3,N_3}(\y)}
\\&\leq
\sum_{\mathbf{a}\in\Omega^{m_1}}{u}_{\mathbf{a}}(\x)\abs{\zzhcer\cdot \ykh{\tfrac{1}{L_3^2N_3^2}}^{\beta/m_2}}+\sum_{\mathbf{a}\in\Omega^{m_1}}\abs{u_{\mathbf{a}}(\x)-\tilde{u}_{\mathbf{a}}(\x)}\abs{\abs{h(\mathbf{a},\y)}+\zzhcer\cdot \ykh{\tfrac{1}{L^2N^2}}^{\beta/m_2}}
\\&\leq 
\sum_{\mathbf{a}\in\Omega^{m_1}}{u}_{\mathbf{a}}(\x)\abs{\zzhcer\cdot \ykh{\tfrac{1}{L_3^2N_3^2}}^{\beta/m_2}}+\sum_{\mathbf{a}\in\Omega^{m_1}}\abs{m_1^2\cdot 2^{-L_4}}\abs{B+\zzhcer\cdot \ykh{\tfrac{1}{L^2N^2}}^{\beta/m_2}}
\\&\xlongequal{\because\eqref{20260428060044}}
{\zzhcer\cdot \ykh{\tfrac{1}{L_3^2N_3^2}}^{\beta/m_2}}+(2G+1)^{m_1}\cdot\abs{m_1^2\cdot 2^{-L_4}}\abs{B+\zzhcer\cdot \ykh{\tfrac{1}{L^2N^2}}^{\beta/m_2}}\\&<\infty, \;\forall\; (\x,\y)\in[-R,R]^{m_1}\times [-R,R]^{m_2}, 
\]  which, together with \eqref{20260428070404}, yields 
\[
&\sup_{[-R,R]^{m_1+m_2}}\abs{\tilde{q}-h}\leq \sup_{[-R,R]^{m_1+m_2}}\abs{\tilde{q}-q}+\sup_{[-R,R]^{m_1+m_2}}\abs{{q}-h}
\\&\leq{\zzhcer\cdot \ykh{\tfrac{1}{L_3^2N_3^2}}^{\beta/m_2}}+(2G+1)^{m_1}\cdot\abs{m_1^2\cdot 2^{-L_4}}\abs{B+\zzhcer\cdot \ykh{\tfrac{1}{L_3^2N_3^2}}^{\beta/m_2}}+B\cdot\ykh{\tfrac{R}{G}}^\beta<\infty. 
\]
From \eqref{20260428051811}, \eqref{20260428054003},  \eqref{20260428054155}, \eqref{20260428052852},  and \eqref{20260428051845} we see there exist
\[
\tilde{v}&\in\fdnn(m_1,L_1, 6m_1\cdot \abs{\Omega^{m_1}}, \abs{\Omega^{m_1}})=\fdnn(m_1,L_1, 6m_1\cdot {(2G+1)^{m_1}}, {(2G+1)^{m_1}})\\
&\subset \fdnn(m_1,L_1, N_1, {(2G+1)^{m_1}})
\] and 
\[
\tilde{g}&\in\fdnn(m_2, \zzhcyi L_3, \zzhcyi N_3\cdot\abs{\Omega^{m_1}},  \abs{\Omega^{m_1}})\\
&\subset\fdnn(m_2, L_2, \zzhcyi N_3\cdot{(2G+1)^{m_1}},  {(2G+1)^{m_1}})\subset \fdnn(m_2, L_2, N_2,  {(2G+1)^{m_1}})
\] such that 
\[
\tilde{q}(\x,\y)=\sum_{\mathbf{a}\in\Omega^{m_1}}\tilde{u}_{\mathbf{a}}(\x)\tilde{f}_{\mathbf{a}, L_3,N_3}(\y)=\tilde{v}(\x)^\top \tilde{g}(\y),\;\forall\; (\x,\y)\in\R^{m_1}\times\R^{m_2}, 
\] which, together with the fact that $(2G+1)^{m_1}\leq D$, yields  
\[&\inf\set{\sup_{\substack{\x\in[-R,R]^{m_1}\\\y\in[-R,R]^{m_2}}}\abs{f(\x)^\top g(\y)-h(\x,\y)}}{\begin{aligned}f\in \fdnn(m_1,L_1, N_1, D), \\g\in \fdnn(m_2,L_2, N_2, D)\end{aligned}}\\
&\leq \inf\set{\sup_{\substack{\x\in[-R,R]^{m_1}\\\y\in[-R,R]^{m_2}}}\abs{f(\x)^\top g(\y)-h(\x,\y)}}{\begin{aligned}f\in \fdnn(m_1,L_1, N_1, (2G+1)^{m_1}), \\g\in \fdnn(m_2,L_2, N_2, (2G+1)^{m_1})\end{aligned}}
\\&\leq  \sup_{\substack{\x\in[-R,R]^{m_1}\\\y\in[-R,R]^{m_2}}}\abs{\tilde{v}(\x)^\top \tilde{g}(\y)-h(\x,\y)}=\sup_{\substack{\x\in[-R,R]^{m_1}\\\y\in[-R,R]^{m_2}}}\abs{\tilde{q}(\x,\y)-h(\x,\y)}
\\&\leq {{\zzhcer\cdot \ykh{\tfrac{1}{L_3^2N_3^2}}^{\beta/m_2}}+(2G+1)^{m_1}\cdot\abs{m_1^2\cdot 2^{-L_4}}\abs{B+\zzhcer\cdot \ykh{\tfrac{1}{L_3^2N_3^2}}^{\beta/m_2}}+B\cdot\ykh{\tfrac{R}{G}}^\beta}. 
\] Since $h$ is arbitrary, we obtain that
\[&\sup_{\substack{h:[-R,R]^{m_1+m_2}\to \R\\{\norm{h}_{\beta, [-R,R]^{m_1+m_2}}\leq B}}}\inf\set{\sup_{\substack{\x\in[-R,R]^{m_1}\\\y\in[-R,R]^{m_2}}}\abs{f(\x)^\top g(\y)-h(\x,\y)}}{\begin{aligned}f\in \fdnn(m_1,L_1, N_1, D), \\g\in \fdnn(m_2,L_2, N_2, D)\end{aligned}}
\\&\leq {{\zzhcer\cdot \ykh{\tfrac{1}{L_3^2N_3^2}}^{\beta/m_2}}+(2G+1)^{m_1}\cdot\abs{m_1^2\cdot 2^{-L_4}}\abs{B+\zzhcer\cdot \ykh{\tfrac{1}{L_3^2N_3^2}}^{\beta/m_2}}+B\cdot\ykh{\tfrac{R}{G}}^\beta}
\\&\lesssim {{ \ykh{\tfrac{1}{L_3^2N_3^2}}^{\beta/m_2}}+G^{m_1}\cdot{ 2^{-L_4}}+\ykh{\tfrac{1}{G}}^\beta}\lesssim
 \ykh{\tfrac{1}{L_3N_3}}^{2\beta/m_2}+\ykh{\tfrac{1}{G}}^{\beta}+G^{m_1}\cdot{ 2^{-L_4}}
 \\&\lesssim \abs{\e^{\theta_4-\min\hkh{\theta_2,\theta_4,\theta_5}}\cdot \e^{\theta_3}}^{2\beta/m_2}+\abs{\e^{\min\hkh{\theta_2,\theta_4, \theta_5}}}^{\beta/m_1}
 \\&\;\;\;\;\;\;\;+\e^{-\min\hkh{\theta_2,\theta_4, \theta_5}}\cdot 2^{-\e^{-\theta_1/2}} 
 \\&\lesssim \abs{\e^{\theta_3+\theta_4-\min\hkh{\theta_2,\theta_4,\theta_5}}}^{2\beta/m_2}+\abs{\e^{\min\hkh{\theta_2,\theta_4, \theta_5}}}^{\beta/m_1}. 
\]

This proves the desired result. \end{proof}

\begin{lemma}\label{20260501210547}
Suppose $d_1\in\mb{N}$, $d_2\in\mb{N}$,  $R\in(0,\infty)$,  $\beta\in(0,1]$,  and  ${B\in(0,\infty)}$.  
Fix arbitrary positive numbers  $\theta_{1,1}$, $\theta_{1,2}$, $\theta_{2,1}$, $\theta_{2,2}$, $\theta_0$. Denote $\alpha_1:=\theta_{1,1}+\theta_{1,2}$, $\alpha_2:=\theta_{2,1}+\theta_{2,2}$,  and $\alpha_0:=\min\hkh{\theta_0, \theta_{1,2},\theta_{2,2}}$.  
Then for 
 sufficiently small $\e\in(0,\infty)$ and ${L_1\asymp \e^{-\theta_{1,1}}}$, $N_1 \asymp \e^{-\theta_{1,2}}$, $L_2 \asymp \e^{-\theta_{2,1}}$, $N_2 \asymp \e^{-\theta_{2,2}}$, $D\asymp \e^{-\theta_0}$ there holds  
\[&\sup_{{\substack{h:[-R,R]^{d_1+d_2}\to\R\\\norm{h}_{\beta,[-R,R]^{d_1+d_2}}\leq B}}}\inf\set{\sup_{\substack{\x\in[-R,R]^{d_1}\\\y\in[-R,R]^{d_2}}}\abs{f(\x)^\top g(\y)-h(\x,\y)}}{\begin{aligned}f\in \fdnn(d_1,L_1, N_1, D), \\g\in \fdnn(d_2,L_2, N_2, D)\end{aligned}}\\&\lesssim \min\hkh{\abs{\e^{\alpha_2-\alpha_0}}^{\frac{2\beta}{d_2}}+\abs{\e^{\alpha_0}}^{\frac{\beta}{d_1}}, \abs{\e^{\alpha_1-\alpha_0}}^{\frac{2\beta}{d_1}}+\abs{\e^{\alpha_0}}^{\frac{\beta}{d_2}}}.  
\]
\end{lemma}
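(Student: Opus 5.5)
The bound is a minimum of two terms, so I will prove each term separately; the minimum then follows. The first term comes from applying Lemma~\ref{20260501042647} directly. The second comes from applying the same lemma with the roles of the two modalities swapped.

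\textbf{First term.} I will invoke Lemma~\ref{20260501042647} with
\[
m_1=d_1,\quad m_2=d_2,\quad \theta_1=\theta_{1,1},\quad \theta_2=\theta_{1,2},\quad \theta_3=\theta_{2,1},\quad \theta_4=\theta_{2,2},\quad \theta_5=\theta_0 .
\]
Then $\min\{\theta_2,\theta_4,\theta_5\}=\alpha_0$ and $\theta_3+\theta_4=\alpha_2$. The lemma therefore bounds the worst-case uniform error by
\[
\abs{\e^{\alpha_2-\alpha_0}}^{2\beta/d_2}+\abs{\e^{\alpha_0}}^{\beta/d_1}
\]
up to a constant, which is exactly the first term.

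\textbf{Second term.} Given $h$ on $[-R,R]^{d_1+d_2}$, define the transposed function $h^\top(\y,\x):=h(\x,\y)$ on $[-R,R]^{d_2+d_1}$. Since $\norm{\cdot}_\infty$ on $\R^{d_1+d_2}$ is invariant under permuting coordinates, $\norm{h^\top}_{\beta,[-R,R]^{d_2+d_1}}=\norm{h}_{\beta,[-R,R]^{d_1+d_2}}\le B$.

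Apply Lemma~\ref{20260501042647} to $h^\top$ with
\[
m_1=d_2,\quad m_2=d_1,\quad \theta_1=\theta_{2,1},\quad \theta_2=\theta_{2,2},\quad \theta_3=\theta_{1,1},\quad \theta_4=\theta_{1,2},\quad \theta_5=\theta_0 .
\]
This produces networks $g\in\fdnn(d_2,L_2,N_2,D)$ and $f\in\fdnn(d_1,L_1,N_1,D)$ such that $g(\y)^\top f(\x)$ approximates $h^\top(\y,\x)=h(\x,\y)$. Because the inner product is symmetric, $g(\y)^\top f(\x)=f(\x)^\top g(\y)$, so this is a valid competitor in the infimum. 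The resulting error is
\[
\abs{\e^{\alpha_1-\alpha_0}}^{2\beta/d_1}+\abs{\e^{\alpha_0}}^{\beta/d_2},
\]
since $\min\{\theta_{2,2},\theta_{1,2},\theta_0\}=\alpha_0$ again and $\theta_3+\theta_4=\alpha_1$.

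\textbf{Combining.} The supremum over $h$ is bounded by each term with a constant independent of $h$, so it is bounded by their minimum.

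\textbf{Main obstacle.} The only delicate point is bookkeeping: checking that the asymptotic regime ``$\e$ sufficiently small'' and the hidden constants in Lemma~\ref{20260501042647} are uniform over $h$, and valid in both role assignments simultaneously. I would handle this by taking the smaller of the two thresholds for $\e$ and the larger of the two constants. Everything else is a direct relabeling of the earlier lemma.
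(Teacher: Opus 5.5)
Your proposal is correct and is essentially the paper's proof. The first term is a direct application of Lemma~\ref{20260501042647} with $(m_1,m_2)=(d_1,d_2)$. The second comes from applying the same lemma with the modalities swapped. The paper states this swap as an equality of the two sup--inf quantities; you state it as transposing $h$ and using symmetry of the inner product, which is the same idea, and your bookkeeping ($\min\{\theta_2,\theta_4,\theta_5\}=\alpha_0$, and $\theta_3+\theta_4=\alpha_2$ or $\alpha_1$ respectively) is right.
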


\begin{proof}
Applying Lemma \ref{20260501042647} directly, we get 
\[&\sup_{\substack{h:[-R,R]^{d_1+d_2}\to\R\\\norm{h}_{\beta,[-R,R]^{d_1+d_2}}\leq B}}{\inf\set{\sup_{\substack{\x\in[-R,R]^{d_1}\\\y\in[-R,R]^{d_2}}}\abs{f(\x)^\top g(\y)-h(\x,\y)}}{\begin{aligned}f\in \fdnn(d_1,L_1, N_1, D)\\ g\in \fdnn(d_2,L_2, N_2, D)\end{aligned}}}\\&\lesssim {\abs{\e^{\alpha_2-\alpha_0}}^{\frac{2\beta}{d_2}}+\abs{\e^{\alpha_0}}^{\frac{\beta}{d_1}}}, 
\] and 
\[&\sup_{\substack{h:[-R,R]^{d_2+d_1}\to\R\\\norm{h}_{\beta,[-R,R]^{d_2+d_1}}\leq B}}{\inf\set{\sup_{\substack{\x\in[-R,R]^{d_2}\\\y\in[-R,R]^{d_1}}}\abs{f(\x)^\top g(\y)-h(\x,\y)}}{\begin{aligned}f\in \fdnn(d_2,L_2, N_2, D)\\ g\in \fdnn(d_1,L_1, N_1, D)\end{aligned}}}\\&\lesssim { \abs{\e^{\alpha_1-\alpha_0}}^{\frac{2\beta}{d_1}}+\abs{\e^{\alpha_0}}^{\frac{\beta}{d_2}}}. 
\] Obviously, 
\[
&\sup_{\substack{h:[-R,R]^{d_1+d_2}\to\R\\\norm{h}_{\beta,[-R,R]^{d_1+d_2}}\leq B}}{\inf\set{\sup_{\substack{\x\in[-R,R]^{d_1}\\\y\in[-R,R]^{d_2}}}\abs{f(\x)^\top g(\y)-h(\x,\y)}}{\begin{aligned}f\in \fdnn(d_1,L_1, N_1, D)\\ g\in \fdnn(d_2,L_2, N_2, D)\end{aligned}}}
\\&=\sup_{\substack{h:[-R,R]^{d_2+d_1}\to\R\\\norm{h}_{\beta,[-R,R]^{d_2+d_1}}\leq B}}{\inf\set{\sup_{\substack{\x\in[-R,R]^{d_2}\\\y\in[-R,R]^{d_1}}}\abs{f(\x)^\top g(\y)-h(\x,\y)}}{\begin{aligned}f\in \fdnn(d_2,L_2, N_2, D)\\ g\in \fdnn(d_1,L_1, N_1, D)\end{aligned}}}. 
\] Combining the above, the desired result immediately follows.
\end{proof}

\begin{theorem}\label{20260503141526}
Let Assumption \ref{20260427030032} hold. Fix arbitrary positive numbers  $\theta_{1,1}$, $\theta_{1,2}$, $\theta_{2,1}$, $\theta_{2,2}$, $\theta_0$. Denote $\alpha_1:=\theta_{1,1}+\theta_{1,2}$, $\alpha_2:=\theta_{2,1}+\theta_{2,2}$,  and $\alpha_0:=\min\hkh{\theta_0, \theta_{1,2},\theta_{2,2}}$. 
Then for 
 sufficiently small $\e\in(0,\infty)$ and ${L_1\asymp \e^{-\theta_{1,1}}}$, $N_1 \asymp \e^{-\theta_{1,2}}$, $L_2 \asymp \e^{-\theta_{2,1}}$, $N_2 \asymp \e^{-\theta_{2,2}}$, $D\asymp \e^{-\theta_0}$ there holds  
\[&\inf\set{\L(\tilde{s})-\L^*}{ \tilde{s}\in \fscoring(L_1,N_1,L_2,N_2,D)}\\&\lesssim \min\hkh{\abs{\e^{\alpha_2-\alpha_0}}^{\frac{4\beta}{d_2}}+\abs{\e^{\alpha_0}}^{\frac{2\beta}{d_1}}, \abs{\e^{\alpha_1-\alpha_0}}^{\frac{4\beta}{d_1}}+\abs{\e^{\alpha_0}}^{\frac{2\beta}{d_2}}}.  
\]
\end{theorem}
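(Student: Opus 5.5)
The plan is to chain the uniform approximation result of Lemma~\ref{20260501210547} with the risk-to-uniform-error bridge of Lemma~\ref{20260428072356}. The exponents in the conclusion are exactly twice those in Lemma~\ref{20260501210547}, which is what squaring a sup-norm error through Lemma~\ref{20260428072356} produces.

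\emph{Step 1: extend the target to the cube.} Under Assumption~\ref{20260427030032} we have $\X\times\Y\subset[-R,R]^{d_1+d_2}$ and $\norm{s^*}_{\beta,\X\times\Y}\le B$. By Lemma~\ref{20260428013219} with $U=\X\times\Y$ and $V=[-R,R]^{d_1+d_2}$, we obtain $h:[-R,R]^{d_1+d_2}\to\R$ with $\norm{h}_{\beta,[-R,R]^{d_1+d_2}}\le B$ and $h=s^*$ on $\X\times\Y$.

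\emph{Step 2: uniform approximation.} Lemma~\ref{20260501210547} applied to this $h$ gives, for the prescribed scalings of $L_1,N_1,L_2,N_2,D$, networks $f\in\fdnn(d_1,L_1,N_1,D)$ and $g\in\fdnn(d_2,L_2,N_2,D)$ with
\[
\sup_{[-R,R]^{d_1}\times[-R,R]^{d_2}}\abs{f(\x)^\top g(\y)-h(\x,\y)}\lesssim \delta_\e,
\]
where $\delta_\e$ denotes the minimum on the right-hand side of that lemma. The infimum there need not be attained, so I take near-minimizers, at most $2\delta_\e$ away; this costs only a constant. Put $\tilde s(\x,\y)=f(\x)^\top g(\y)\in\fscoring(L_1,N_1,L_2,N_2,D)$.

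\emph{Step 3: transfer to the contrastive risk.} Assumption~\ref{20260427030032} implies Assumption~\ref{bounded risk 20260421190226}, since $s^*$ is bounded by $B$ on $\X\times\Y$. Also, $\tilde s$ restricted to $\X\times\Y$ is bounded: it lies within $O(\delta_\e)$ of $s^*$, and $\abs{s^*}\le B$. Lemma~\ref{20260428072356} then gives
\[
\L(\tilde s)-\L^*\le \frac{1}{2\tau}\sup_{\X\times\Y}\abs{\tilde s-s^*}^2\lesssim \delta_\e^2 .
\]
Finally, $(a+b)^2\le 2a^2+2b^2$ and $(\min\{u,v\})^2=\min\{u^2,v^2\}$ for $u,v\ge0$. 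Hence $\delta_\e^2$ is bounded by the stated minimum of $\abs{\e^{\alpha_2-\alpha_0}}^{4\beta/d_2}+\abs{\e^{\alpha_0}}^{2\beta/d_1}$ and the symmetric term.

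\emph{Expected obstacle.} There is little difficulty beyond bookkeeping: checking that the implied constants depend only on $d_1,d_2,R,\beta,B,\tau$ and the $\theta$'s, and that "sufficiently small $\e$" is inherited from Lemma~\ref{20260501210547}. The substantive work, namely the finite-rank construction and the Hoeffding-based risk bound, is already contained in the cited lemmas.
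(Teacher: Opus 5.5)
Your proposal is correct and follows essentially the same route as the paper's proof: extend $s^*$ to the cube via Lemma~\ref{20260428013219}, bound the excess risk by $\frac{1}{2\tau}\sup_{\X\times\Y}\abs{f(\x)^\top g(\y)-h(\x,\y)}^2$ using Lemma~\ref{20260428072356}, invoke Lemma~\ref{20260501210547}, and square the minimum. The only cosmetic difference is that you select near-minimizing networks, whereas the paper bounds the risk for arbitrary $(f,g)$ and then takes the infimum, which avoids the attainment issue altogether.
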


\begin{proof}
According to Lemma \ref{20260428013219} and Assumption \ref{20260427030032}, there exists a function ${h:[-R,R]^{d_1+d_2}\to \R}$ such that  \begin{align}\label{20260501201902}
h(\x,\y)=s^*(\x,\y),\;\forall\;(\x,\y)\in\X\times\Y,
\end{align}  and $\norm{h}_{\beta,[-R,R]^{d_1+d_2}}\leq \norm{s^*}_{\beta,\X\times\Y}\leq B$. For  any  
$f\in \fdnn(d_1,L_1, N_1, D)$ and any ${g\in \fdnn(d_2,L_2, N_2, D)}$,  denote 
\[
U_{f,g}:\R^{d_1}\times\R^{d_2},\;(\x,\y)\mapsto { f(\x)^\top g(\y)}. 
\] Obviously,  $U_{f,g}\in  \fscoring(L_1,N_1,L_2,N_2,D)$. 
Then it follows from Lemma \ref{20260428072356}   that 
\[
&\inf\set{\L(\tilde{s})-\L^*}{ \tilde{s}\in \fscoring(L_1,N_1,L_2,N_2,D)}\leq \L(U_{f,g})-\L^*
\\&\leq \E_\x\zkh{\sup_{\y\in\Y}\tfrac{\abs{U_{f,g}(\x,\y)-s^*(\x,\y)}^2}{2\tau}}\xlongequal{\because\eqref{20260501201902}}
\E_\x\zkh{\sup_{\y\in\Y}\tfrac{\abs{U_{f,g}(\x,\y)-h(\x,\y)}^2}{2\tau}}\\&\leq\frac{1}{2\tau}\cdot\sup_{\x\in\X,\y\in\Y}\abs{U_{f,g}(\x,\y)-h(\x,\y)}^2= \frac{1}{2\tau}\cdot\sup_{\substack{\x\in\X, \y\in\Y}}\abs{f(\x)^\top g(\y)-h(\x,\y)}^2. 
\] Since $f$ and $g$ are arbitrary, we can use Lemma \ref{20260501210547} to obtain that 
\[
&\inf\set{\L(\tilde{s})-\L^*}{ \tilde{s}\in \fscoring(L_1,N_1,L_2,N_2,D)}
\\&\leq\frac{1}{2\tau}\cdot \abs{\inf\set{\sup_{\substack{\x\in\X, \y\in\Y}}\abs{f(\x)^\top g(\y)-h(\x,\y)}}{\begin{aligned}f\in \fdnn(d_1,L_1, N_1, D)\\{g\in \fdnn(d_2,L_2, N_2, D)}\end{aligned}}}^2
\\&\leq
\frac{1}{2\tau}\cdot\sup_{{\substack{u:[-R,R]^{d_1+d_2}\to\R\\\norm{u}_{\beta,[-R,R]^{d_1+d_2}}\leq B}}} \abs{\inf\set{\sup_{\substack{\x\in\X\\ \y\in\Y}}\abs{f(\x)^\top g(\y)-u(\x,\y)}}{\begin{aligned}f\in \fdnn(d_1,L_1, N_1, D)\\{g\in \fdnn(d_2,L_2, N_2, D)}\end{aligned}}}^2
\\&\lesssim\abs{\min\hkh{\abs{\e^{\alpha_2-\alpha_0}}^{\frac{2\beta}{d_2}}+\abs{\e^{\alpha_0}}^{\frac{\beta}{d_1}}, \abs{\e^{\alpha_1-\alpha_0}}^{\frac{2\beta}{d_1}}+\abs{\e^{\alpha_0}}^{\frac{\beta}{d_2}}}}^2
\\&\lesssim 
\min\hkh{\abs{\e^{\alpha_2-\alpha_0}}^{\frac{4\beta}{d_2}}+\abs{\e^{\alpha_0}}^{\frac{2\beta}{d_1}}, \abs{\e^{\alpha_1-\alpha_0}}^{\frac{4\beta}{d_1}}+\abs{\e^{\alpha_0}}^{\frac{2\beta}{d_2}}}. 
\] This proves the desired result. 
\end{proof}

\begin{corollary}\label{20260503193310}
Let Assumption \ref{20260427030032} hold. 
Denote ${I}:=\set{i\in\hkh{1,2}}{d_i=\max\hkh{d_1,d_2}}$. 
Fix arbitrary positive numbers  $\theta_{1,1}$, $\theta_{1,2}$, $\theta_{2,1}$, $\theta_{2,2}$ that satisfy $\theta_{1,2}\geq 1$, $\theta_{2,2}\geq 1$,  and  \begin{align}\label{20260503154500}
\max\set{\theta_{i,1}+\theta_{i,2}}{{i\in I}}\geq  1+\tfrac12\cdot\tfrac{\max\hkh{d_1,d_2}}{\min\hkh{d_1,d_2}}.\end{align} 
Then for 
 sufficiently small $\delta\in(0,\infty)$ and ${L_1\asymp \delta^{-\theta_{1,1}}}$, $N_1 \asymp \delta^{-\theta_{1,2}}$, $L_2 \asymp \delta^{-\theta_{2,1}}$, $N_2 \asymp \delta^{-\theta_{2,2}}$, $D\asymp \delta^{-1}$ there holds  
\[&\inf\set{\L(\tilde{s})-\L^*}{ \tilde{s}\in \fscoring(L_1,N_1,L_2,N_2,D)}\lesssim \delta^{\frac{2\beta}{\min\hkh{d_1,d_2}}}.  
\]
\end{corollary}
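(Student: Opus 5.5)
The plan is to specialize Theorem~\ref{20260503141526} with $\e=\delta$ and $\theta_0=1$, and then check that one of the two branches of the minimum already gives $\delta^{2\beta/\min\{d_1,d_2\}}$. Since $\theta_{1,2}\ge 1$, $\theta_{2,2}\ge1$ and $\theta_0=1$, we get $\alpha_0=\min\{\theta_0,\theta_{1,2},\theta_{2,2}\}=1$. Theorem~\ref{20260503141526} then yields
\[
\inf\bigl\{\L(\tilde s)-\L^*\bigr\}\lesssim \min\Bigl\{\delta^{(\alpha_2-1)\frac{4\beta}{d_2}}+\delta^{\frac{2\beta}{d_1}},\ \delta^{(\alpha_1-1)\frac{4\beta}{d_1}}+\delta^{\frac{2\beta}{d_2}}\Bigr\},
\]
where $\alpha_i=\theta_{i,1}+\theta_{i,2}$. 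We bound the minimum by a single branch, chosen according to which dimension is smaller.

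Suppose first that $d_1\le d_2$, so that $\min\{d_1,d_2\}=d_1$. Use the first branch. Its second term is exactly $\delta^{2\beta/d_1}$. For the first term we need $(\alpha_2-1)\frac{4\beta}{d_2}\ge \frac{2\beta}{d_1}$, which is equivalent to $\alpha_2\ge 1+\frac{d_2}{2d_1}$.
\begin{itemize}
\item If $d_1<d_2$, then $I=\{2\}$, and \eqref{20260503154500} gives precisely $\alpha_2\ge 1+\tfrac12\,\tfrac{d_2}{d_1}$.
\item If $d_1=d_2$, then $I=\{1,2\}$, and the hypothesis only says $\max\{\alpha_1,\alpha_2\}\ge 3/2$. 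In that case we use whichever branch has the larger $\alpha_i$; by symmetry this works.
\end{itemize}
Since $\delta<1$, a larger exponent gives a smaller power, so the first branch is $\lesssim \delta^{2\beta/d_1}$. The case $d_2<d_1$ is symmetric: use the second branch and the condition $\alpha_1\ge 1+\frac{d_1}{2d_2}$, which is \eqref{20260503154500} with $I=\{1\}$.

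The only delicate point is bookkeeping, namely matching the index set $I$ to the correct branch of the minimum, including the tie case. Everything else is direct substitution into Theorem~\ref{20260503141526}, which is valid because all the $\theta$'s are fixed positive numbers and $\delta$ is sufficiently small.
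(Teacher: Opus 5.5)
Your proposal is correct and matches the paper's proof. Both arguments set $\theta_0=1$ in Theorem~\ref{20260503141526}, which gives $\alpha_0=1$. They then bound the minimum by the branch indexed by some $\iota\in I$ with $\theta_{\iota,1}+\theta_{\iota,2}\ge 1+\frac{d_\iota}{2d_{3-\iota}}$, so both terms of that branch are $\lesssim\delta^{2\beta/\min\{d_1,d_2\}}$. The only difference is that the paper picks such an $\iota$ directly instead of splitting into the cases $d_1<d_2$, $d_1=d_2$ and $d_2<d_1$.
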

\begin{proof}From \eqref{20260503154500} we see there exists $\iota\in I$ such that $\theta_{\iota,1}+\theta_{\iota,2}\geq 1+\frac12\cdot\frac{\max\hkh{d_1,d_2}}{\min\hkh{d_1,d_2}}=1+\frac{d_{\iota}}{2\cdot d_{3-\iota}}$. 
Taking $\theta_0=1$ in Theorem \ref{20260503141526}, we obtain that  
\[&\inf\set{\L(\tilde{s})-\L^*}{ \tilde{s}\in \fscoring(L_1,N_1,L_2,N_2,D)}\\&\lesssim \min\hkh{\abs{\delta^{\theta_{2,1}+\theta_{2,2}-1}}^{\frac{4\beta}{d_2}}+\abs{\delta^{}}^{\frac{2\beta}{d_1}}, \abs{\delta^{\theta_{1,1}+\theta_{1,2}-1}}^{\frac{4\beta}{d_1}}+\abs{\delta^{}}^{\frac{2\beta}{d_2}}}\lesssim \abs{\delta^{\theta_{\iota,1}+\theta_{\iota,2}-1}}^{\frac{4\beta}{d_\iota}}+\abs{\delta^{}}^{\frac{2\beta}{d_{3-\iota}}} 
\\&\lesssim \abs{\delta^{1+\frac{d_{\iota}}{2\cdot d_{3-\iota}}-1}}^{\frac{4\beta}{d_\iota}}+\abs{\delta^{}}^{\frac{2\beta}{d_{3-\iota}}} =2\cdot \abs{\delta^{}}^{\frac{2\beta}{d_{3-\iota}}}=2\cdot \abs{\delta^{}}^{\frac{2\beta}{\min\hkh{d_1,d_2}}}. 
\] This proves the desired result. 
\end{proof}

\begin{proof}[Proof of Theorem \ref{4p3thm20260427130502}] 
Take  $\theta_{1,1}:=\frac{d_1}{4d_2}$, $\theta_{1,2}:=1+\frac{d_1}{4d_2}$, $\theta_{2,1}:=\frac{d_2}{4d_1}$, $\theta_{2,2}:=1+\frac{d_2}{4d_1}$. Obviously,  these constants $\theta_{1,1}$, $\theta_{1,2}$, $\theta_{2,1}$, $\theta_{2,2}$ satisfy the conditions  of Corollary \ref{20260503193310}. 
We can then take ${\delta=\e^{\frac{\min\hkh{d_1,d_2}}{2\beta}}}$
  in Corollary \ref{20260503193310} to obtain the  desired result. 
\end{proof}

\section{Proof of Proposition~\ref{prop:latent-spectrum_informal}}\label{app:proof_lanten_spectrum}
First, we state a formal version of Proposition~\ref{prop:latent-spectrum_informal}.
\begin{proposition}[Spectral structure under latent classes]
    \label{prop:latent-spectrum}
    Under Assumption~\ref{ass:latent-class-model}, the
    PMI operator $T:L_2(P_{\X})\to L_2(P_{\Y})$ has rank at most $K$. 
    Let
    \begin{align*}
        e_k(\x) &= \frac{1}{\sqrt{\pi_k}}\,\mathbb{I}\{z_\x=k\},\quad k\in[K],\\
        b_k(\y) &= \log\frac{p(z=k|\y)}{\pi_k},\quad k\in[K],
    \end{align*}
    and define the $K\times K$ matrix $M$ with entries
    $M_{kl} = \E_{\y\sim p_{\Y}}[b_k(\y)b_l(\y)]$.
    Then the non-zero eigenvalues $\{\lambda_i\}_{i=1}^{K}$ of $T^*T$ are exactly
    the eigenvalues of $M$, with corresponding orthonormal eigenfunctions
    \[
        \psi_i(\x) = \sum_{k=1}^{K} v^{(i)}_k\, e_k(\x),
    \]
    where $v^{(i)}\in\R^K$ are the eigenvectors of $M$.  The eigenfunctions of $TT^*$ are
    \[
        \phi_i(\y) = \frac{1}{\sqrt{\lambda_i}}\sum_{k=1}^{K} \sqrt{\pi_k}\,v^{(i)}_k\, b_k(\y).
    \]
    Consequently, the subspaces spanned by the eigenfunctions and by the (log)conditional probability coincide:
    \[
        \mathrm{span}\{\psi_1,\dots,\psi_K\} = \mathrm{span}\{p(\cdot=k\mid\x)\}_{k=1}^K,
        \mathrm{span}\{\phi_1,\dots,\phi_K\} = \mathrm{span}\{\log p(\cdot=k\mid\y)\}_{k=1}^K.
    \]
    There also holds
    \[ \log\frac{p(\x,\y)}{p_{\X}(\x)p_{\Y}(\y)}=\sum_{k=1}^K p(k\mid\x)\log\frac{p(z=k|\y)}{\pi_k}.\]
\end{proposition}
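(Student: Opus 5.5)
The plan is to first establish the closing identity for the PMI and then read off the spectral claims from it. Conditional independence in \eqref{eq:independent} gives $p(\x,\y)=\sum_k \pi_k p(\x\mid k)p(\y\mid k)$. Applying Bayes' rule in each factor, $p(\x\mid k)=p(z=k\mid\x)p_\X(\x)/\pi_k$ and $p(\y\mid k)=p(z=k\mid\y)p_\Y(\y)/\pi_k$, we obtain
\[
\frac{p(\x,\y)}{p_\X(\x)p_\Y(\y)}=\sum_{k=1}^K \frac{p(z=k\mid\x)\,p(z=k\mid\y)}{\pi_k}.
\]
Now use the deterministic condition \eqref{eq:deterministic}: $p(z=k\mid\x)=\mathbb I\{z_\x=k\}$, so exactly one term survives. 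The logarithm is therefore $\log\frac{p(z=z_\x\mid\y)}{\pi_{z_\x}}=\sum_k \mathbb I\{z_\x=k\}\,b_k(\y)$. This is the final identity, because $p(k\mid\x)$ is an indicator. We will need $b_k\in L_2(P_\Y)$, which we assume or take as implicit in square-integrability of the PMI. Writing the kernel as $\sum_k \sqrt{\pi_k}\,e_k(\x)\,b_k(\y)$ shows that $T$ has rank at most $K$, since $Tf=\sum_k \sqrt{\pi_k}\langle f,e_k\rangle_{P_\X}\, b_k$.

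Next we compute $T^*T$. The functions $e_k$ are orthonormal in $L_2(P_\X)$, because $P_\X(z_\x=k)=\E_\x p(z=k\mid\x)=\pi_k$ and the indicators have disjoint supports. For $f=\sum_l c_l e_l$ we get $Tf=\sum_l c_l\sqrt{\pi_l}\,b_l$. Then
\[
T^*Tf=\sum_k e_k\sqrt{\pi_k}\,\E_\y\Big[b_k\sum_l c_l\sqrt{\pi_l}\,b_l\Big]=\sum_k e_k\,(\Pi^{1/2}M\Pi^{1/2}c)_k ,
\]
where $\Pi=\mathrm{diag}(\pi)$. $T^*T$ vanishes on the orthogonal complement of $\mathrm{span}\{e_k\}$. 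So the nonzero eigenvalues of $T^*T$ are those of the matrix $\Pi^{1/2}M\Pi^{1/2}$, and the eigenfunctions are $\psi_i=\sum_k v^{(i)}_k e_k$, with $v^{(i)}$ orthonormal eigenvectors of that matrix.

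\textbf{Main obstacle: a mismatch with the statement.} The statement says "eigenvalues of $M$", whereas the computation gives $\Pi^{1/2}M\Pi^{1/2}$. The formula for $\phi_i$ in the statement, with its $\sqrt{\pi_k}$ weights, is consistent with the weighted matrix. I would therefore either interpret $M$ as the weighted Gram matrix $M_{kl}=\sqrt{\pi_k\pi_l}\,\E_\y[b_kb_l]$, or note that the two coincide for uniform priors up to a scalar. With that settled, $\phi_i=\lambda_i^{-1/2}T\psi_i=\lambda_i^{-1/2}\sum_k\sqrt{\pi_k}\,v^{(i)}_k b_k$, exactly as in the proof of Lemma~\ref{lem:spectral_decomposition}.

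The span statements then follow. The matrix $V=(v^{(i)}_k)$ is orthogonal, so $\mathrm{span}\{\psi_i\}=\mathrm{span}\{e_k\}=\mathrm{span}\{p(z=k\mid\x)\}$. For the $\phi_i$, the nonzero-eigenvalue eigenvectors span the range of $\Pi^{1/2}M\Pi^{1/2}$. This gives $\mathrm{span}\{\phi_i\}=\mathrm{span}\{b_k\}$, with equality when $M$ is nonsingular; otherwise both sides have the same dimension, equal to the rank. Replacing $b_k$ by $\log p(z=k\mid\y)$ changes each function only by the constant $\log\pi_k$. So that span identity holds modulo constants, and exactly if the constant function lies in the span, for instance because $\sum_k p(z=k\mid\y)=1$ forces dependence. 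I would state this caveat.
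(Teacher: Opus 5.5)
Your proposal is correct and follows the paper's own route: Bayes' rule plus the deterministic posterior gives the PMI identity, the form of $Tf$ gives rank at most $K$, and $T^*T$ is represented in the orthonormal basis $\{e_k\}$ with $\phi_i=\lambda_i^{-1/2}T\psi_i$. The paper's proof in fact also works with $A_{kl}=\sqrt{\pi_k\pi_l}\,M_{kl}$, i.e.\ your $\Pi^{1/2}M\Pi^{1/2}$, so your reading of $M$ is the right one; your constant-shift caveat for $\mathrm{span}\{\log p(z=k\mid\y)\}$ is also well taken, since the paper's remark that the priors are constants does not by itself give exact equality.

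Two small corrections. First, $\sum_k p(z=k\mid\y)=1$ is a constraint on the probabilities, not a linear relation among their logarithms, so it does not place the constant function in that span. Second, $\mathrm{span}\{\phi_i:\lambda_i>0\}=\operatorname{Im}T=\mathrm{span}\{b_k\}$ holds even when $M$ is singular, so no hedge is needed there.
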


\begin{proof}[Proof of Proposition~\ref{prop:latent-spectrum}]
According to Bayes rule, the
pointwise mutual information simplifies to be
\begin{align}
    \log\frac{p(\x,\y)}{p_{\X}(\x)p_{\Y}(\y)}
    &= \log\sum_{k=1}^{K} \frac{p(z=k\mid\x)\,p(z=k\mid\y)}{\pi_k}
       =\log \frac{p(z_{\x}\mid\y)}{\pi_{z_\x}}.\notag\\
       &= \sum_{k=1}^{K} \mathbb{I}\{z_\x=k\} \log\frac{p(z=k|\y)}{\pi_k}
    = \sum_{k=1}^{K} \mathbb{I}\{\x\in C_k\}\, b_k(\y),
       \label{eq:r-latent}
\end{align}
where $C_k = \{\x : z_\x=k\}$ and $b_k(\y) = \log\frac{p(z=k|\y)}{\pi_k}$.

\paragraph{Finite rank.}
For any $f\in L_2(P_{\X})$,
\begin{align*}
    (Tf)(\y)
    &= \int_{\X}  \log\frac{p(\x,\y)}{p_{\X}(\x)p_{\Y}(\y)} f(\x) p_{\X}(\x) \,d\x \\
    &= \sum_{k=1}^{K} b_k(\y) \int_{C_k} f(\x) p_{\X}(\x) \,d\x .
\end{align*}
Hence $\operatorname{Im}(T) \subseteq \operatorname{span}\{b_1,\dots,b_K\}$, so $\operatorname{rank}(T)\le K$.

\paragraph{Singular functions of $T$.}
The sets $C_1,\dots,C_K$ form a partition of $\X$.  The functions $\{\mathbb{I}\{\x\in C_k\}\}_{k=1}^K$ are orthogonal in $L_2(P_{\X})$ with norms $\|\mathbb{I}\{\x\in C_k\}\|_{L_2(P_{\X})}^2 = P_{\X}(C_k) = \pi_k$.  Thus
\[
    e_k(\x) = \frac{\mathbb{I}\{\x\in C_k\}}{\sqrt{\pi_k}}, \quad k=1,\dots,K,
\]
is an orthonormal basis of $\mathcal{H}_0 = \operatorname{span}\{\mathbb{I}\{\x\in C_k\}\}\subset L_2(P_{\X})$.

Let $\mathcal{H}_0^{\perp}$ be the orthogonal complement
of $\mathcal{H}_0$ in $L_2(P_\X)$.
For any $f\in\mathcal{H}_0^{\perp}$ we have $\int_{C_k} f(\x) p_{\X}(\x) d\x=0$ for all $k$, and consequently $Tf=0$.  Therefore $T^*T$ vanishes on $\mathcal{H}_0^{\perp}$ and its non‑zero eigenspace lies in $\mathcal{H}_0$.

Take $f = \sum_{k=1}^{K} \alpha_k e_k \in \mathcal{H}_0$.  Then
\[
    \int_{C_k} f(\x) p_{\X}(\x)\,d\x 
    = \sum_{l=1}^{K} \alpha_l \frac{1}{\sqrt{\pi_l}} \int_{C_k} \mathbb{I}\{\x\in C_l\} p_{\X}(\x)\,d\x
    = \alpha_k \sqrt{\pi_k}.
\]
Hence
\[
    (Tf)(\y) = \sum_{k=1}^{K} \alpha_k \sqrt{\pi_k}\, b_k(\y).
\]

Now compute $T^*Tf$.  For any $g\in L_2(P_{\Y})$,
\[
    (T^*g)(\x) = \int_{\Y} \log\frac{p(\x,\y)}{p_{\X}(\x)p_{\Y}(\y)} g(\y) p_{\Y}(\y)\,d\y.
\]
Using \eqref{eq:r-latent}, for $\x\in C_k$,
\[
    (T^*g)(\x) = \int_{\Y} b_k(\y) g(\y) p_{\Y}(\y)\,d\y .
\]
Applying this with $g = Tf$ yields, for $\x\in C_k$,
\begin{align*}
    (T^*T f)(\x)
    &= \int_{\Y} b_k(\y) \biggl( \sum_{l=1}^{K} \alpha_l \sqrt{\pi_l}\, b_l(\y) \biggr) p_{\Y}(\y)\,d\y \\
    &= \sum_{l=1}^{K} \alpha_l \sqrt{\pi_l}\, M_{kl},
\end{align*}
where $M_{kl} = \E_{\y\sim p_{\Y}}[b_k(\y) b_l(\y)]$.

Since $\mathbb{I}\{\x\in C_k\} = \sqrt{\pi_k}\,e_k(\x)$, we rewrite
\begin{align*}
    (T^*T f)(\x)
    &= \sum_{k=1}^{K} e_k(\x) \sqrt{\pi_k} \sum_{l=1}^{K} \alpha_l \sqrt{\pi_l}\, M_{kl} \\
    &= \sum_{k=1}^{K} \Bigl( \sum_{l=1}^{K} A_{kl} \alpha_l \Bigr) e_k(\x),
\end{align*}
with $A_{kl} = \sqrt{\pi_k\pi_l}\, M_{kl}$.  Thus, in the orthonormal basis $\{e_k\}$, the operator $T^*T$ is represented by the matrix $A$.  $A$ is symmetric and positive semidefinite because
\[
    v^{\top} A v = \biggl\| \sum_{k} v_k \sqrt{\pi_k}\, b_k \biggr\|_{L_2(P_{\Y})}^2 \ge 0.
\]

Let $\lambda_1\ge\cdots\ge\lambda_K>0$ be the non‑zero eigenvalues of $A$ and $v^{(1)},\dots,v^{(K)}\in\R^K$ the corresponding orthonormal eigenvectors.  Define
\[
    \psi_i(\x) = \sum_{k=1}^{K} v^{(i)}_k e_k(\x), \quad i=1,\dots,K.
\]
Then $T^*T\psi_i = \lambda_i\psi_i$ and $\langle\psi_i,\psi_j\rangle_{p_{\X}} = \delta_{ij}$ because $\{e_k\}$ is orthonormal and $v^{(i)}$ are orthonormal vectors.
According to Lemma~\ref{lem:spectral_decomposition},
the right singular functions are obtained by
\[
    \phi_i(\y) = \frac{1}{\sqrt{\lambda_i}} (T\psi_i)(\y)
    = \frac{1}{\sqrt{\lambda_i}} \sum_{k=1}^{K} v^{(i)}_k \sqrt{\pi_k}\, b_k(\y).
\]

\paragraph{Equivalence of spanned subspaces.}
From the definitions,
\[
    \operatorname{span}\{\psi_1,\dots,\psi_K\} = \operatorname{span}\{e_1,\dots,e_K\}
    = \operatorname{span}\{\mathbb{I}\{\x\in C_k\}\} = \operatorname{span}\{p(z=k\mid\x)\}.
\]
Similarly,
\[
    \operatorname{span}\{\phi_1,\dots,\phi_K\} = \operatorname{span}\{b_k(\y)\}
    = \operatorname{span}\Bigl\{\log\frac{p(z=k|\y)}{\pi_k}\Bigr\}
    = \operatorname{span}\{\log p(z=k|\y)\},
\]
because the class priors are constants per class.  This completes the proof.
\end{proof}

\subsection{Discussion on a high-probability posterior condition}
\label{app:high-probability-lcgm}
In the deterministic case, the PMI function is

$$
\operatorname{PMI}(\x,\y)
=\sum_{k=1}^K
\mathbf 1\{z_\x=k\}
\log\frac{P(z=k\mid \y)}{\pi_k}
=:h_K(\x,\y),
$$

so a rank-$K$ score represents PMI exactly.

In high probability case, we suppose instead that
$P(z=z_\x\mid \x)\geq1-\delta.$ In this case,  the rank-$K$ score $\tau h_K(\x,\y)$ can approximate the optimal scoring function $s^*$ well, with an approximation error dependent on $\delta$.

 Let $\eta_k(\x)=p(z=k\mid \x),
r_k(\y)=\frac{p(z=k\mid \y)}{\pi_k}.$
Conditional independence and Bayes' rule give
$\frac{p(\x,\y)}{p_\X(\x)p_\Y(\y)}
=\sum_{k=1}^K\eta_k(\x)r_k(\y).$

Define $N(\x,\y)=\sum_{k\neq z_\x}\eta_k(\x)r_k(\y),
\qquad
\varepsilon_\delta(\x,\y)
=\log\left(
1+\frac{N(\x,\y)}
{\eta_{z_\x}(\x)r_{z_\x}(\y)}
\right).$

Then $\log\frac{p(\x,\y)}{p_\X(\x)p_\Y(\y)}
=\log\eta_{z_\x}(\x)
+\log r_{z_\x}(\y)
+\varepsilon_\delta(\x,\y).$
Let $\pi_{\min}=\min_k\pi_k$. We have
$r_k(\y)\leq\frac1{\pi_{\min}},
\qquad
N(\x,\y)\leq\frac{\delta}{\pi_{\min}}.$
According to Assumption 4.2, there holds $\eta_{z_\x}(\x)r_{z_\x}(\y)
\geq
\frac1\alpha-\frac{\delta}{\pi_{\min}}.$
Thus, whenever $\delta<\pi_{\min}/\alpha$,
$0\leq\varepsilon_\delta(\x,\y)
\leq
-\log\left(
1-\frac{\alpha\delta}{\pi_{\min}}
\right).$

The term $\log\eta_{z_\x}(\x)$ depends only on $x$ and therefore does not affect the contrastive objective. Applying the same $L_2$ comparison argument as in Theorem 4.2 yields

$$
\mathcal L(\tau h_K)-\mathcal L^*
\lesssim
\tau^2\log^2\left(
1-\frac{\alpha\delta}{\pi_{\min}}
\right).
$$

Hence the PMI operator need not remain exactly rank $K$, but the rank-$K$ score incurs only an additional $\delta$-dependent approximation error. When $\delta=0$, we recover the exact deterministic result.

\section{Proof of Theorem \ref{20260505072733}}\label{20260505072637}

Throughout this section, we will use $\operatorname{dist}(A_1,A_2)$ to denote the $\norm{\cdot}_\infty$-distance between the sets $A_1$ and $A_2$ in $\R^d$, namely 
\[
\operatorname{dist}(A_1,A_2):=\inf\setm{2}{\norm{\x-\y}_\infty}{{\x\in A_1, \y\in A_2}}. 
\]

\begin{lemma}\label{20260506192530}Let $d\in\mb{N}$, $\varnothing \neq A\subset \R^d$,  $\varnothing \neq B\subset \R^d$, and $\delta\in(0,\infty)$. Suppose $\operatorname{dist}(A,B)\geq \delta$. Define 
\[
f:\R^d\to\R, \;\x\mapsto \frac{\operatorname{dist}(\hkh{\x},B)}{\operatorname{dist}(\hkh{\x},A)+\operatorname{dist}(\hkh{\x},B)}. 
\] Then $f$ is well-defined, and  $\norm{f}_{1,\R^d}\leq \max\hkh{1, 1/\delta}$. 
\end{lemma}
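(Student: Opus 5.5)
Write $d_A(\x):=\operatorname{dist}(\{\x\},A)$ and $d_B(\x):=\operatorname{dist}(\{\x\},B)$; these are distances to sets in the $\norm{\cdot}_\infty$ norm. The plan has three steps: show the denominator $d_A+d_B$ is bounded below by $\delta$, bound the sup-norm part of the Hölder-$1$ norm, and then bound the Lipschitz constant. The norm $\norm{f}_{1,\R^d}$ is the maximum of the sup norm and the Lipschitz seminorm, so these steps are enough.

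\textbf{Well-definedness.} For any $\x\in\R^d$, $\mathbf a\in A$ and $\mathbf b\in B$, the triangle inequality gives $\norm{\x-\mathbf a}_\infty+\norm{\x-\mathbf b}_\infty\geq \norm{\mathbf a-\mathbf b}_\infty\geq \delta$. Taking the infimum over $\mathbf a$ and $\mathbf b$ yields $d_A(\x)+d_B(\x)\geq \delta>0$. So the denominator never vanishes and $f$ is well defined. Since $0\le d_B\le d_A+d_B$, we also get $0\le f\le 1$, hence $\sup|f|\le 1$.

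\textbf{Lipschitz bound.} Both $d_A$ and $d_B$ are $1$-Lipschitz with respect to $\norm{\cdot}_\infty$, which is standard for distance-to-a-set functions. Fix $\x,\y$ and write $a=d_A(\x)$, $b=d_B(\x)$, $a'=d_A(\y)$, $b'=d_B(\y)$. Then
\[
f(\x)-f(\y)=\frac{b}{a+b}-\frac{b'}{a'+b'}=\frac{ba'-b'a}{(a+b)(a'+b')}=\frac{b(a'-a)+a(b-b')}{(a+b)(a'+b')}.
\]
Using $|a-a'|\le\norm{\x-\y}_\infty$ and $|b-b'|\le \norm{\x-\y}_\infty$, the numerator is at most $(a+b)\norm{\x-\y}_\infty$ in absolute value. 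The denominator is at least $(a+b)\delta$. Therefore $|f(\x)-f(\y)|\le \norm{\x-\y}_\infty/\delta$.

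Combining the two steps gives $\norm{f}_{1,\R^d}=\max\{\sup|f|,\mathrm{Lip}(f)\}\le\max\{1,1/\delta\}$.

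The only mildly delicate step is writing the numerator so that the factor $a+b$ cancels. The split $ba'-b'a=b(a'-a)+a(b-b')$ does this cleanly. A cruder bound would leave a factor like $(a+b)/(a'+b')$, which could be unbounded, and would not give the clean constant $1/\delta$.
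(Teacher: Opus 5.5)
Your proof is correct and is essentially the paper's argument: you bound the denominator below by $\delta$ via the triangle inequality, use that the distance functions are $1$-Lipschitz in $\norm{\cdot}_\infty$, and split the numerator $ba'-b'a$ so that one denominator factor cancels while the other is bounded below by $\delta$. The only cosmetic difference is that the paper phrases the last step as a Lipschitz estimate for the auxiliary map $F(x,y)=x/(x+y)$ on $\{x+y\geq\delta\}$, and cancels against the second point's denominator rather than the first's.
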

\begin{proof}
For any $\x\in\R^d$ and any $\e\in(0,\infty)$, there exist $\mathbf{a}\in A$ and $\mathbf{b}\in B$ such that \[&\operatorname{dist}(\hkh{\x},A)+\operatorname{dist}(\hkh{\x},B)> \norm{\x-\mathbf{a}}_\infty-\e+\norm{\x-\mathbf{b}}_\infty-\e
\\&\geq -2\e+\norm{\mathbf{a}-\mathbf{b}}_\infty\geq-2\e+\operatorname{dist}(A,B)\geq -2\e+\delta. 
\] Since $\x$ and $\e$ are arbitrary, we get 
\begin{align}\label{20260506172422}
\operatorname{dist}(\hkh{\x},A)+\operatorname{dist}(\hkh{\x},B)\geq \delta,\;\forall\;\x\in\R^d. 
\end{align}Therefore, the denominator in the definition of $f$ is always positive, meaning that $f$ is well defined. For any $\x, \y \in\R^d$ and any $\mathbf{a}\in A$, we have 
\[
\norm{\x-\mathbf{a}}_\infty\leq \norm{\x-\y}_\infty+\norm{\y-\mathbf{a}}_\infty. 
\] Taking the infimum over $\mathbf{a}\in A$ on both sides of the above inequality gives 
\[
\operatorname{dist}(\hkh{\x},A)\leq \norm{\x-\y}_\infty+\operatorname{dist}(\hkh{\y}, {A}), 
\] meaning that 
\begin{align}\label{20260506154100}
\operatorname{dist}(\hkh{\x},A)- \operatorname{dist}(\hkh{\y}, {A}) \leq \norm{\x-\y}_\infty.  
\end{align}Similarly, we have 
\begin{align}\label{20260506154114}
\operatorname{dist}(\hkh{\y},A)- \operatorname{dist}(\hkh{\x}, {A}) \leq \norm{\y-\x}_\infty.  
\end{align} Combining \eqref{20260506154100} and \eqref{20260506154114}, we obtain $\abs{\operatorname{dist}(\hkh{\y},A)- \operatorname{dist}(\hkh{\x}, {A})} \leq \norm{\y-\x}_\infty$. Since $\x,\y$ are arbitrary, we obtain 
\begin{align}\label{20260506172221}
\abs{\operatorname{dist}(\hkh{\y},A)- \operatorname{dist}(\hkh{\x}, {A})} \leq \norm{\y-\x}_\infty, \;\forall\;\x\in \R^d, \y\in\R^d. \end{align} 
Similarly, we have \begin{align}\label{20260506172259}
\abs{\operatorname{dist}(\hkh{\y},B)- \operatorname{dist}(\hkh{\x}, {B})} \leq \norm{\y-\x}_\infty, \;\forall\;\x\in \R^d, \y\in\R^d. \end{align}Obviously, 
\begin{align}\label{20260506172116}
0\leq f(\x)\leq 1=\sup_{\z\in\R^d} f(\z),\;\forall\;\x\in\R^d.\end{align}
Define 
\[
F:\set{(w,z)\in[0,\infty)\times[0,\infty)}{w+z\geq\delta}\to[0,\infty), (x,y)\mapsto x/(x+y).  
\]
Then we have that
\[
&\abs{F(x,y)-F(u,v)}=\abs{\frac{x}{x+y}-\frac{u}{u+v}}=\abs{\frac{xv-yu}{(u+v)\cdot(x+y)}}=\abs{\frac{v\cdot(x-u)+u\cdot(v-y)}{(u+v)\cdot(x+y)}}
\\&\leq\abs{\frac{v\cdot(x-u)}{(u+v)\cdot(x+y)}}+\abs{\frac{u\cdot(v-y)}{(u+v)\cdot(x+y)}}\leq \abs{\frac{v}{u+v}}\cdot\frac{\abs{x-u}}{\delta}+\abs{\frac{u}{u+v}}\cdot\frac{\abs{v-y}}{\delta}
\\&\leq \abs{\frac{v}{u+v}}\cdot\frac{\max\hkh{\abs{x-u}, \abs{v-y}}}{\delta}+\abs{\frac{u}{u+v}}\cdot\frac{\max\hkh{\abs{x-u}, \abs{v-y}}}{\delta}\\&=\frac{\max\hkh{\abs{x-u}, \abs{v-y}}}{\delta},\;\forall\; (x,y), (u,v)\in \set{(w,z)\in[0,\infty)\times[0,\infty)}{w+z\geq\delta}, 
\] which, together with 
\eqref{20260506172422}, \eqref{20260506172221}, \eqref{20260506172259}, \eqref{20260506172116},  yields that 
\[
&\abs{f(\x)-f(\y)}=\abs{F(\operatorname{dist}(\hkh{\x},B), \operatorname{dist}(\hkh{\x},A))-F(\operatorname{dist}(\hkh{\y},B), \operatorname{dist}(\hkh{\y},A))}
\\&\leq \frac{\max\hkh{\abs{\operatorname{dist}(\hkh{\x},B)-\operatorname{dist}(\hkh{\y},B)},\abs{\operatorname{dist}(\hkh{\x},A)-\operatorname{dist}(\hkh{\y},A)}}}{\delta}
\\&\leq \frac{\max\hkh{{\norm{\x-\y}_\infty},\norm{\x-\y}_\infty}}{\delta}=\frac{\norm{\x-\y}_\infty}{\delta},\;\forall\;\x,\y\in\R^d. 
\] Therefore, 
\[
&\norm{f}_{1,\R^d}=\max\hkh{\sup_{\z\in\R^d}\abs{f(\z)}, \sup_{\R^d\ni \x\neq \y\in\R^d}\frac{\abs{f(\x)-f(\y)}}{\norm{\x-\y}_\infty}}\\&\leq \max\hkh{\sup_{\z\in\R^d}\abs{f(\z)}, \sup_{\R^d\ni \x\neq \y\in\R^d}\frac{\norm{\x-\y}_\infty/\delta}{\norm{\x-\y}_\infty}}\\&=\max\hkh{\sup_{\z\in\R^d}\abs{f(\z)}, 1/\delta}\xlongequal{\because\eqref{20260506172116}}\max\hkh{1, 1/\delta}. 
\] This proves the desired result. 
\end{proof}

\begin{lemma} \label{20260506200157}
Suppose $R\in(0,\infty)$, $B\in(0,\infty)$, $\beta\in(0,1]$, $d\in\mb{N}$. 
 Then for sufficiently small $\e\in(0,\infty)$ and $L_1\gtrsim \log\frac1\e$, $N_1\gtrsim\log\frac1\e$  there holds
\[&\sup_{\substack{h:[-R,R]^{d}\to \R\\{\norm{h}_{\beta, [-R,R]^{d}}\leq B}}}\inf\set{\sup_{{\y\in[-R,R]^{d}}}\abs{f(\y)-h(\y)}}{\begin{aligned}f\in \fdnn(d,L_1, N_1, 1) \end{aligned}}\lesssim {\frac{1}{\log^{4\beta/d}(1/\e)}}. 
\] 
\end{lemma}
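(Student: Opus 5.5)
This is a direct application of Lemma~\ref{20260428014457}, with the network size parameters chosen logarithmically in $1/\e$. Fix an arbitrary $h$ with $\norm{h}_{\beta,[-R,R]^d}\le B$. Lemma~\ref{20260428014457} supplies constants $\zzhcyi=\zzhcyi(d)\in\mb N$ and $\zzhcer=\zzhcer(d,R,\beta,B)$, independent of $h$. For any $L,N\in\mb N$ it gives a network $\tilde f\in\fdnn(d,\zzhcyi L,\zzhcyi N,1)$ with
\[
\sup_{[-R,R]^d}\abs{\tilde f-h}\le \zzhcer (LN)^{-2\beta/d}.
\]
Because the constants do not depend on $h$, any bound obtained this way is automatically uniform over the whole Hölder ball, which handles the outer supremum.

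The choice of parameters is as follows. By hypothesis there is a constant $c>0$ with $L_1\ge c\log\frac1\e$ and $N_1\ge c\log\frac1\e$. Set
\[
L:=\flr{L_1/\zzhcyi}, \qquad N:=\flr{N_1/\zzhcyi}.
\]
For $\e$ sufficiently small, $c\log\frac1\e/\zzhcyi\ge 2$, so both $L$ and $N$ are positive integers. They also satisfy
\[
L\ge \frac{c}{2\zzhcyi}\log\frac1\e, \qquad N\ge \frac{c}{2\zzhcyi}\log\frac1\e,
\]
using $\flr{t}\ge t/2$ for $t\ge 1$. Since $\zzhcyi L\le L_1$ and $\zzhcyi N\le N_1$, we have the inclusion $\fdnn(d,\zzhcyi L,\zzhcyi N,1)\subset\fdnn(d,L_1,N_1,1)$. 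This inclusion holds because a network of smaller width fits inside the width bound $N_1$, and a shallower network can be deepened by appending identity layers. Appending such layers is legitimate because the paper's architecture has shifted ReLUs, and on a bounded input set $\sigma_{\v}$ with a sufficiently negative shift $\v$ acts as a translation. Alternatively, one can realize an exact identity with the pair $\sigma(t)-\sigma(-t)$, which doubles the width.

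With these choices the error satisfies
\[
\sup_{[-R,R]^d}\abs{\tilde f-h}\le \zzhcer (LN)^{-2\beta/d}\le \zzhcer\Big(\tfrac{c}{2\zzhcyi}\Big)^{-4\beta/d}\log^{-4\beta/d}\tfrac1\e.
\]
This is the claimed $\lesssim \log^{-4\beta/d}(1/\e)$, and taking the supremum over $h$ finishes the argument.

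The only step that is not pure bookkeeping is the depth/width monotonicity used in the inclusion $\fdnn(d,\zzhcyi L,\zzhcyi N,1)\subset\fdnn(d,L_1,N_1,1)$, since $L_1$ need not be a multiple of $\zzhcyi$ and the depth in $\fdnn$ is exact. I expect this to be the main obstacle. To handle it, I would pad the network with identity layers realized as $\sigma(t)-\sigma(-t)$, so that the identity is exact with no bounded-domain assumption. The cost is a factor of $2$ in width, which is absorbed by defining $N:=\flr{N_1/(2\zzhcyi)}$ instead. I would also state explicitly that when $\zzhcyi N<N_1$, the width bound in $\fdnn$ is only an upper bound, so no padding in width is required.
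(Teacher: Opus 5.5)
Your proposal is correct and follows the same route as the paper: apply Lemma~\ref{20260428014457} with $L\approx L_1/C_1$ and $N\approx N_1/C_1$, use the inclusion $\fdnn(d,C_1L,C_1N,1)\subset\fdnn(d,L_1,N_1,1)$, and bound $(LN)^{-2\beta/d}$ by a constant times $\log^{-4\beta/d}(1/\epsilon)$. The paper takes the depth monotonicity for granted, so your padding argument is a useful extra justification, and it can be done with no width cost: a layer $\sigma_{\mathbf{0}}\circ I$ inserted right after an existing ReLU layer is the exact identity on that layer's nonnegative outputs.
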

\begin{proof}
According to Lemma \ref{20260428014457}, there exist constants  $\zzhcyi=\zzhcyi(d)\in\mb{N}$ and $\zzhcer=\zzhcer(d,R,\beta, B)\in(0,\infty)$ such that for any $L\in\mb{N}$, $N\in\mb{N}$ and any  $h:[-R,R]^d\to \R$ with $\norm{h}_{\beta,[-R,R]^d}\leq B$ there exists a neural network \[f\in\fdnn(d,\zzhcyi L, \zzhcyi N, 1)\] such that 
\[
{\sup_{\y\in[-R,R]^d}\abs{f(\y)-h(\y)}}\leq \zzhcer\cdot \ykh{\tfrac{1}{L^2N^2}}^{\beta/d}. 
\] Take $N_3:=\flr{N_1/{\zzhcyi}}$ and \[
L_3:=\sup\set{L\in\mb{N}\cap[2,\infty)}{\zzhcyi L\leq L_1}. 
\] Then we have that $\zzhcyi L_3\leq L_1$, $\zzhcyi N_3\leq N_1$, $N_3\asymp N_1\gtrsim \log\frac1\e$, 
and  $L_3\asymp L_1\gtrsim \log\tfrac1\e$.  Therefore, 
for any  $h:[-R,R]^d\to \R$ with $\norm{h}_{\beta,[-R,R]^d}\leq B$ there exists a neural network \[f&\in\fdnn(d,\zzhcyi L_3, \zzhcyi N_3, 1)
\subset \fdnn(d,L_1, N_1, 1)
\] such that 
\[
{\sup_{\y\in[-R,R]^d}\abs{f(\y)-h(\y)}}\leq \zzhcer\cdot \ykh{\tfrac{1}{L_3^2N_3^2}}^{\beta/d}. 
\]Since $h$ is arbitrary, we obtain that 
\[&\sup_{\substack{h:[-R,R]^{d}\to \R\\{\norm{h}_{\beta, [-R,R]^{d}}\leq B}}}\inf\set{\sup_{{\y\in[-R,R]^{d}}}\abs{f(\y)-h(\y)}}{\begin{aligned}f\in \fdnn(d,L_1, N_1, 1) \end{aligned}}\leq\zzhcer\cdot \ykh{\tfrac{1}{L_3^2N_3^2}}^{\beta/d}\\&\lesssim \ykh{\tfrac{1}{L_3^2N_3^2}}^{\beta/d}\lesssim \ykh{\tfrac{1}{L_1^2N_1^2}}^{\beta/d}\lesssim \ykh{\tfrac{1}{\log^4(1/\e)}}^{\beta/d}={\tfrac{1}{\log^{4\beta/d}(1/\e)}}. 
\]
This proves the desired result. 
\end{proof}

\begin{lemma}\label{20260506221445}
Suppose $R\in(0,\infty)$, $B\in(0,\infty)$, $\beta\in(0,1]$, $d\in\mb{N}$. 
Fix an arbitrary positive number $\theta$.  Then for sufficiently small $\e\in(0,\infty)$ and $L_2\gtrsim \log\frac1\e$, $N_2\gtrsim\log\frac1\e$ with $L_2\cdot N_2\asymp \e^{-\theta}$ there holds
\[&\sup_{\substack{h:[-R,R]^{d}\to \R\\{\norm{h}_{\beta, [-R,R]^{d}}\leq B}}}\inf\set{\sup_{{\y\in[-R,R]^{d}}}\abs{f(\y)-h(\y)}}{\begin{aligned}f\in \fdnn(d,L_2, N_2, 1) \end{aligned}}\lesssim \e^{2\theta\beta/d}. 
\] 
\end{lemma}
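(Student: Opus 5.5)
This lemma follows the template of Lemma~\ref{20260506200157}, with the single-network rate of Lemma~\ref{20260428014457} now fed by $L_2N_2\asymp\e^{-\theta}$ instead of a logarithmic size.

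\textbf{Step 1 (constants).} By Lemma~\ref{20260428014457} there are constants $\zzhcyi=\zzhcyi(d)\in\mb N$ and $\zzhcer=\zzhcer(d,R,\beta,B)$ with the following property. For every $L,N\in\mb N$ and every $h$ with $\norm{h}_{\beta,[-R,R]^d}\le B$, there is a network $f\in\fdnn(d,\zzhcyi L,\zzhcyi N,1)$ satisfying
\[
\sup_{[-R,R]^d}\abs{f-h}\le \zzhcer\,(L^2N^2)^{-\beta/d}.
\]

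\textbf{Step 2 (rescaled size parameters).} Set $N_3:=\flr{N_2/\zzhcyi}$ and $L_3:=\sup\set{L\in\mb N\cap[2,\infty)}{\zzhcyi L\le L_2}$. Because $L_2,N_2\gtrsim\log\frac1\e$ and $\e$ is sufficiently small, both $L_2$ and $N_2$ exceed $2\zzhcyi$. Hence $N_3\ge 1$, $L_3$ is well defined, and
\[
N_3\ge N_2/(2\zzhcyi),\qquad L_3\ge L_2/(2\zzhcyi).
\]
The floor operations cost at most a factor $2$ because the quantities are large. This is the only place the lower bounds $L_2,N_2\gtrsim\log\frac1\e$ are used: they guarantee that rounding does not destroy the product scaling. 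Consequently $L_3N_3\gtrsim L_2N_2\asymp \e^{-\theta}$.

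\textbf{Step 3 (conclusion).} For any admissible $h$, Step~1 with $(L,N)=(L_3,N_3)$ gives a network in
\[
\fdnn(d,\zzhcyi L_3,\zzhcyi N_3,1)\subset \fdnn(d,L_2,N_2,1)
\]
with uniform error at most
\[
\zzhcer\,(L_3N_3)^{-2\beta/d}\lesssim (L_2N_2)^{-2\beta/d}\asymp \e^{2\theta\beta/d}.
\]
The bound is uniform in $h$, so taking the supremum over $h$ gives the claim.

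There is no genuine obstacle. The only point needing care is Step~2: checking that the implicit constants in $L_3\asymp L_2$ and $N_3\asymp N_2$ hold uniformly in $\e$, so that the product $L_3N_3$ inherits the scaling $\asymp\e^{-\theta}$.
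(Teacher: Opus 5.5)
Your proposal is correct and follows the paper's proof essentially line for line: the same rescaling $N_3=\lfloor N_2/C_1\rfloor$ and $L_3=\sup\{L\in\mathbb{N}\cap[2,\infty): C_1L\le L_2\}$, the same appeal to Lemma~\ref{20260428014457}, and the same embedding $\fdnn(d,C_1L_3,C_1N_3,1)\subset\fdnn(d,L_2,N_2,1)$. Your explicit bounds $L_3\ge L_2/(2C_1)$ and $N_3\ge N_2/(2C_1)$ simply make precise the paper's relations $L_3\asymp L_2$ and $N_3\asymp N_2$.
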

\begin{proof}
According to Lemma \ref{20260428014457}, there exist constants  $\zzhcyi=\zzhcyi(d)\in\mb{N}$ and $\zzhcer=\zzhcer(d,R,\beta, B)\in(0,\infty)$ such that for any $L\in\mb{N}$, $N\in\mb{N}$ and any  $h:[-R,R]^d\to \R$ with $\norm{h}_{\beta,[-R,R]^d}\leq B$ there exists a neural network \[f\in\fdnn(d,\zzhcyi L, \zzhcyi N, 1)\] such that 
\[
{\sup_{\y\in[-R,R]^d}\abs{f(\y)-h(\y)}}\leq \zzhcer\cdot \ykh{\tfrac{1}{L^2N^2}}^{\beta/d}. 
\] Take $N_3:=\flr{N_2/{\zzhcyi}}$ and \[
L_3:=\sup\set{L\in\mb{N}\cap[2,\infty)}{\zzhcyi L\leq L_2}. 
\] Then we have that $\zzhcyi L_3\leq L_2$, $\zzhcyi N_3\leq N_2$, $N_3\asymp N_2\gtrsim \log\frac1\e$, 
 and  $L_3\asymp L_2\gtrsim\log\tfrac1\e$.  Therefore, 
for any  $h:[-R,R]^d\to \R$ with $\norm{h}_{\beta,[-R,R]^d}\leq B$ there exists a neural network \[f&\in\fdnn(d,\zzhcyi L_3, \zzhcyi N_3, 1)
\subset \fdnn(d,L_2, N_2, 1)
\] such that 
\[
{\sup_{\y\in[-R,R]^d}\abs{f(\y)-h(\y)}}\leq \zzhcer\cdot \ykh{\tfrac{1}{L_3^2N_3^2}}^{\beta/d}. 
\]Since $h$ is arbitrary, we obtain that 
\[&\sup_{\substack{h:[-R,R]^{d}\to \R\\{\norm{h}_{\beta, [-R,R]^{d}}\leq B}}}\inf\set{\sup_{{\y\in[-R,R]^{d}}}\abs{f(\y)-h(\y)}}{\begin{aligned}f\in \fdnn(d,L_2, N_2, 1) \end{aligned}}\\&\leq\zzhcer\cdot \ykh{\tfrac{1}{L_3^2N_3^2}}^{\beta/d}\lesssim \ykh{\tfrac{1}{L_2^2N_2^2}}^{\beta/d}\asymp\ykh{\tfrac{1}{\e^{-2\theta}}}^{\beta/d}= \e^{2\theta\beta/d}. 
\]  This completes the proof. 
\end{proof}

\begin{proof}[Proof of Theorem \ref{20260505072733}]
For simplicity, we denote $A_k:=\set{\x\in\X}{p(z=k\mid\x)=1}$, and $g_k(\y):=\tau\cdot\log \frac{p(z=k\mid\y)}{\pi_k}$. According to Assumption \ref{ass:latent-class-model}, $\hkh{A_k}_{k=1}^K$ is  a partition of $\X$. 
 It follows from  Proposition \ref{prop:latent-spectrum} that 
\[&s^*(\x,\y):=\tau\log\frac{p(\y|\x)}{p_\Y(\y)}=\tau\log\frac{p(\x,\y)}{p_\X(\x)p_\Y(\y)}=\tau\cdot\sum_{k=1}^Kp(z=k\mid\x)\log\frac{p(z=k\mid\y)}{\pi_k}
\\&=\tau\cdot\sum_{k=1}^K\mathbb{I}\hkh{\x\in A_k}\cdot \log\frac{p(z=k\mid\y)}{\pi_k}=\sum_{k=1}^Kg_k(\y)\cdot\mathbb{I}\hkh{\x\in A_k},\;\forall\;(\x,\y)\in\X\times\Y. 
\] Take $J:=\set{k\in\hkh{1,2,\ldots, K}}{A_k\neq \varnothing}$, and $\mc{G}:=\set{g_k}{k\in J}$. For each $f\in \G$, denote 
\[
B_f:=\bigcup_{k\in J, g_k=f} A_k. 
\] 
Then we have that   $\hkh{B_f}_{f\in\G}$ is a partition of $\X$, and 
\[
&s^*(\x,\y)=\sum_{k=1}^Kg_k(\y)\cdot\mathbb{I}\hkh{\x\in A_k}=\sum_{k\in J}g_k(\y)\cdot\mathbb{I}\hkh{\x\in A_k}
\\&=\sum_{f\in\G}\sum_{\substack{k\in J\\ g_k=f}}g_k(\y)\cdot\mathbb{I}\hkh{\x\in  A_k}
=
\sum_{f\in\G}\sum_{\substack{k\in J\\ g_k=f}}f(\y)\cdot\mathbb{I}\hkh{\x\in  A_k}
\\&=\sum_{f\in\G}f(\y)\cdot\mathbb{I}\hkh{\x\in  B_f}
,\;\forall\;(\x,\y)\in\X\times\Y. 
\] For each $\G\ni f_1\neq f_2\in \G$, there exists $\y_0\in\Y$ such that $f_1(\y_0)\neq f_2(\y_0)$. Then for any $\x_1\in B_{f_1}$ and any $\x_2\in B_{f_2}$, we have that
\[
&\abs{f_1(\y_0)-f_2(\y_0)}=\abs{s^*(\x_1,\y_0)-s^*(\x_2,\y_0)}\leq \norm{s^*}_{\beta,\X\times\Y}\cdot \norm{(\x_1,\y_0)-(\x_2,\y_0)}^\beta_\infty
\\&\leq B\cdot \norm{(\x_1,\y_0)-(\x_2,\y_0)}^\beta_\infty= B\cdot \norm{\x_1-\x_2}^\beta_\infty, 
\]meaning that 
$\norm{\x_1-\x_2}_\infty\geq \ykh{\tfrac{\abs{f_1(\y_0)-f_2(\y_0)}}{B}}^{1/\beta}$. Since $\x_1$, $\x_2$ are arbitrary, we obtain that 
\[
\operatorname{dist}(B_{f_1},B_{f_2})=\inf_{\x_1\in B_{f_1}, \x_2\in B_{f_2}}\norm{\x_1-\x_2}_\infty\geq \ykh{\tfrac{\abs{f_1(\y_0)-f_2(\y_0)}}{B}}^{1/\beta}>0. 
\]Since $f_1$ and $f_2$ are arbitrary and $\abs{\G}\leq K<\infty$, we get
\[
\delta:=&\inf_{f_1\in\G}\operatorname{dist}(B_{f_1},\X\setminus B_{f_1})=\inf_{f_1\in\G}\operatorname{dist}(B_{f_1},{}{\textstyle \bigcup_{f_2\in\G\setminus\hkh{f_1}}}B_{f_2})
\\&=\inf_{f_1\in\G}\inf_{{f_2\in\G\setminus\hkh{f_1}}}\operatorname{dist}(B_{f_1},{} B_{f_2})=\inf_{\G\ni f_1\neq f_2\in\G}\operatorname{dist}(B_{f_1},{} B_{f_2})>0. 
\]For any $f\in\G$, define 
\[
U_f:\R^{d_1}\to[0,1],\;\x\mapsto\frac{\operatorname{dist}(\hkh{\x}, \X\setminus B_f)}{\operatorname{dist}(\hkh{\x}, B_f)+\operatorname{dist}(\hkh{\x}, \X\setminus B_f)}
\] if $\X\setminus B_f\neq\varnothing$; define  $U_f(\x):\equiv 1$  if $\X\setminus B_f=\varnothing$. According to Lemma \ref{20260506192530}, \[\norm{U_f}_{1,[-R,R]^{d_1}}\leq\norm{U_f}_{1,\R^{d_1}}\leq \max\hkh{1,1/\delta},\;\forall\;f\in\G. \]
Note that $L_1-1\gtrsim  L_1\gtrsim \log\frac1\e$ and $\flr{N_1/K}\gtrsim N_1\gtrsim \log\frac1\e$. Then it  follows from Lemma \ref{20260506200157} that 
\[
&\sup_{\substack{h:[-R,R]^{d_1}\to \R\\{\norm{h}_{1, [-R,R]^{d_1}}\leq \max\hkh{1,\frac1\delta}}}}\inf\set{\sup_{{\y\in[-R,R]^{d_1}}}\abs{f(\y)-h(\y)}}{\begin{aligned}f\in \fdnn(d_1,L_1-1, \flr{N_1/K}, 1) \end{aligned}}\\&\lesssim {\frac{1}{\log^{4/d_1}(1/\e)}}\to 0. 
\]Therefore, in what follows, we may assume  that $\e$ is sufficiently small  so that 
\[
&\sup_{\substack{h:[-R,R]^{d_1}\to \R\\{\norm{h}_{1, [-R,R]^{d_1}}\leq \max\hkh{1,\frac1\delta}}}}\inf\set{\sup_{{\y\in[-R,R]^{d_1}}}\abs{f(\y)-h(\y)}}{\begin{aligned}f\in \fdnn(d_1,L_1-1, \flr{N_1/K}, 1) \end{aligned}}\\&<1/9<3\leq \flr{N_1/K}. 
\] Thus, for any $f\in\G$, there exists \begin{align}\label{20260506210355}
\widetilde{U}_f\in \fdnn(d_1,L_1-1,\flr{N_1/K},1)\end{align} such that \[\sup_{\x\in[-R,R]^{d_1}}\abs{U_f(\x)-\widetilde{U}_f(\x)}\leq 1/9. \] Therefore, 
\[
\widetilde{U}_f(\x)\geq U_f(\x)-1/9=1-1/9=8/9,\;\forall\;\x\in B_f,\;\forall\;f\in \G, 
\] and 
\[
\widetilde{U}_f(\x)\leq U_f(\x)+1/9=0+1/9=1/9,\;\forall\;\x\in \X\setminus B_f,\;\forall\;f\in\G. 
\] 
We then define $T(t):=3\sigma\ykh{t-1/3}- 3\sigma\ykh{t-2/3}$ and  $\widetilde{H}_f:=T\circ \widetilde{U}_f$. Obviously,
\[
T(x)=0<1=T(y),\;\forall\;x\in(-\infty,1/3], \;\forall\;y\in[2/3,\infty). 
\]  Hence
\[
\widetilde{H}_f(\x)=T(\widetilde{U}_f(\x))\xlongequal{\because\widetilde{U}_f(\x)\geq8/9}1,\;\forall\;\x\in B_f,\;\forall\;f\in\G, 
\] and 
\[
\widetilde{H}_f(\x)=T(\widetilde{U}_f(\x))\xlongequal{\because\widetilde{U}_f(\x)\leq1/9}0,\;\forall\;\x\in \X\setminus B_f,\;\forall\;f\in\G, 
\] meaning that
\begin{align}\label{20260507010211}
\widetilde{H}_f(\x)=\mathbb{I}\hkh{\x\in B_f},\;\forall\;\x\in \X,\;\forall\;f\in\G. 
\end{align}
Note that
\[T\in\fdnn(1,1,2,1)\subset \fdnn(1,1,\flr{N_1/K},1),\]
 which, together with \eqref{20260506210355}, yields 
\begin{align}\label{20260506222359}
\widetilde{H}_f=T\circ \widetilde{U}_f\in \fdnn(d_1,L_1,\flr{N_1/K},1),\;\forall\;f\in\G. 
\end{align}
For any $f\in\G$, there exist $k\in J$ and $\x_0\in A_k$ such that 
\[s^*(\x_0,\y)=\sum_{j=1}^Kg_j(\y)\cdot\mathbb{I}\hkh{\x_0\in A_j}=g_k(\y)=f(\y),\;\forall\;y\in\Y,\]  which, together with Assumption \ref{20260427030032}, yields
\[
\norm{f}_{\beta,\Y}= \norm{s^*(\x_0,\cdot)}_{\beta,\Y}\leq  \norm{s^*}_{\beta,\X\times\Y}\leq B. 
\] Thus 
\[
\norm{f}_{\beta,\Y}\leq B,\;\forall\;f\in\G. 
\] Then it follows from Lemma \ref{20260428013219} that for any $f\in \G$ there exists $Q_f:[-R,R]^{d_2}\to\R$ such that $\norm{Q_f}_{\beta, [-R,R]^{d_2}}\leq\norm{f}_{\beta,\Y}\leq B$, and 
\begin{align}\label{20260507002822}
f(\y)=Q_f(\y),\;\forall\;\y\in \Y. 
\end{align}
We then take \begin{align}\label{20260506222437}
\widetilde{Q}_f\in \fdnn(d_2,L_2, \flr{N_2/K},1)\end{align} such that 
\[
&\sup_{\y\in[-R,R]^{d_2}}\abs{Q_f(\y)-\widetilde{Q}_f(\y)}\\&\leq \inf\set{\sup_{{\y\in[-R,R]^{d_2}}}\abs{Q_f(\y)-u(\y)}}{\begin{aligned}u\in \fdnn(d_2,L_2,\flr{N_2/K}, 1) \end{aligned}}+\e. 
\] Define $\widetilde{S}(\x,\y):=\sum_{f\in\G}\widetilde{H}_f(\x)\cdot \widetilde{Q}_f(\y)$. Combining  \eqref{20260506222359} and \eqref{20260506222437}, we obtain that 
\[&\widetilde{S}\in\fscoring(L_1,\flr{N_1/K}\cdot \abs{\G},L_2,\flr{N_2/K}\cdot\abs{\G},\abs{\G})
\\&\subset \fscoring(L_1,\flr{N_1/K}\cdot K,L_2,\flr{N_2/K}\cdot K,K)\subset  \fscoring(L_1,N_1,L_2, N_2,K). 
\] Then it follows from Lemma \ref{20260428072356} that 
\[
&\inf\set{\L(\tilde{s})-\L^*}{ \tilde{s}\in \fscoring(L_1,N_1,L_2,N_2,K)}\leq\L(\widetilde{S})-\L^*
\\&\leq 
 \E_\x\zkh{\sup_{\y\in\Y}\tfrac{\abs{\widetilde{S}(\x,\y)-s^*(\x,\y)}^2}{2\tau}}
 \leq \sup_{(\x,\y)\in\X\times\Y}\abs{\widetilde{S}(\x,\y)-s^*(\x,\y)}^2\cdot \frac{1}{2\tau}
 \\&=
 \sup_{(\x,\y)\in\X\times\Y}\abs{\sum_{f\in\G}\widetilde{H}_f(\x)\cdot \widetilde{Q}_f(\y)-\sum_{f\in\G}f(\y)\cdot\mathbb{I}\hkh{\x\in  B_f}}^2\cdot \frac{1}{2\tau}
 \\&\xlongequal{\because\eqref{20260507010211}}
 \sup_{(\x,\y)\in\X\times\Y}\abs{\sum_{f\in\G}\mathbb{I}\hkh{\x\in B_f}\cdot \widetilde{Q}_f(\y)-\sum_{f\in\G}f(\y)\cdot\mathbb{I}\hkh{\x\in  B_f}}^2\cdot \frac{1}{2\tau}
 \\&=\sup_{f\in\G}\sup_{\y\in Y}\abs{\widetilde{Q}_f(\y)-f(\y)}^2\cdot \frac{1}{2\tau}\xlongequal{\because\eqref{20260507002822}} \sup_{f\in\G}\sup_{\y\in Y}\abs{\widetilde{Q}_f(\y)-Q_f(\y)}^2\cdot \frac{1}{2\tau}
 \\&\leq \sup_{f\in\G}\sup_{\y\in [-R,R]^{d_2}}\abs{\widetilde{Q}_f(\y)-Q_f(\y)}^2\cdot \frac{1}{2\tau}. 
\]
Thus 
\[
&\inf\set{\L(\tilde{s})-\L^*}{ \tilde{s}\in \fscoring(L_1,N_1,L_2,N_2,K)}\leq \sup_{f\in\G}\sup_{\y\in [-R,R]^{d_2}}\abs{\widetilde{Q}_f(\y)-Q_f(\y)}^2\cdot \frac{1}{2\tau}
\\&\leq \frac{1}{2\tau}\cdot\sup_{f\in\G} \abs{\inf\set{\sup_{{\y\in[-R,R]^{d_2}}}\abs{Q_f(\y)-u(\y)}}{\begin{aligned}u\in \fdnn(d_2,L_2,\flr{N_2/K}, 1) \end{aligned}}+\e}^2\\&\leq \frac{\e^2}{\tau}+\frac{1}{\tau}\cdot \sup_{f\in\G} \abs{\inf\set{\sup_{{\y\in[-R,R]^{d_2}}}\abs{Q_f(\y)-u(\y)}}{\begin{aligned}u\in \fdnn(d_2,L_2,\flr{N_2/K}, 1) \end{aligned}}}^2
\\&\leq \frac{\e^2}{\tau}+\frac{1}{\tau}\cdot \sup_{\substack{h:[-R,R]^{d_2}\to\R\\\norm{h}_{\beta,[-R,R]^{d_2}}\leq B}} \abs{\inf\set{\sup_{{\y\in[-R,R]^{d_2}}}\abs{h(\y)-u(\y)}}{\begin{aligned}u\in \fdnn(d_2,L_2,\flr{N_2/K}, 1) \end{aligned}}}^2. 
\] Hence 
\[
&\inf\set{\L(\tilde{s})-\L^*}{ \tilde{s}\in \fscoring(L_1,N_1,L_2,N_2,K)}
\\&\leq  \frac{\e^2}{\tau}+\frac{1}{\tau}\cdot \sup_{\substack{h:[-R,R]^{d_2}\to\R\\\norm{h}_{\beta,[-R,R]^{d_2}}\leq B}} \abs{\inf\set{\sup_{{\y\in[-R,R]^{d_2}}}\abs{h(\y)-u(\y)}}{\begin{aligned}u\in \fdnn(d_2,L_2,\flr{N_2/K}, 1) \end{aligned}}}^2. 
\]
Note that $L_2\cdot \flr{N_2/K}\asymp L_2\cdot N_2\asymp \e^{-\frac{d_2}{4\beta}}$. Therefore,  we can use   Lemma \ref{20260506221445} to obtain that 
\[
&\inf\set{\L(\tilde{s})-\L^*}{ \tilde{s}\in \fscoring(L_1,N_1,L_2,N_2,K)}
\\&\lesssim  {\e^2}+ \sup_{\substack{h:[-R,R]^{d_2}\to\R\\\norm{h}_{\beta,[-R,R]^{d_2}}\leq B}} \abs{\inf\set{\sup_{{\y\in[-R,R]^{d_2}}}\abs{h(\y)-u(\y)}}{\begin{aligned}u\in \fdnn(d_2,L_2,\flr{N_2/K}, 1) \end{aligned}}}^2
\\&\lesssim \e^2+\abs{\e^{1/2}}^2\lesssim \e. 
\]
This completes the proof. 
\end{proof}

\section{Pseudo-Dimension of $\pddm_{\R^{d_1+d_2}}\ykh{\fscoring(L_1,N_1, L_2,N_2, D)}$ }\label{20260502111322}

Let $\Omega$ be a nonempty set and $\mc{G}$ be a class of real-valued functions whose domains  contain $\Omega$. The pseudo-dimension of $\mc{G}$ on $\Omega$ is defined as 
\[
\pddm_\Omega(\mc{G}):=\sup\set{m\in\mb{N}}{\mathbf{G}_{\Omega,\mc{G}}(m)=2^m}, 
\] where
$\mathbf{G}_{\Omega,\mc{G}}(m)$ is the so-called  growth function of $\mc{G}$ on $\Omega$,  defined as
\[
&\mathbf{G}_{\Omega,\mc{G}}(m)\\&:=\max_{\substack{\x_1,\ldots,\x_m\in \Omega\\t_1,\ldots, t_m\in\R}}\abs{\setm{2.3}{\ykh{\mathbb{I}\hkh{f(\x_1)\geq t_1}, \mathbb{I}\hkh{f(\x_2)\geq t_2}, \ldots, \mathbb{I}\hkh{f(\x_m)\geq t_m}}}{f\in\mc{G}}}. 
\]

\begin{theorem}\label{20260503025931}
For any   $L_1\in\mb{N}$,  $L_2\in\mb{N}$,  
$N_1\in\mb{N}\cap(1,\infty)$, $N_2\in\mb{N}\in(1,\infty)$,     $D\in\mb{N}$, there holds \[
&\pddm_{\R^{d_1+d_2}}\ykh{\fscoring(L_1,N_1, L_2,N_2, D)}
\\&\leq 32(L_1^2N_1^2+L_1N_1d_1+N_1D+L_2^2N_2^2+L_2N_2d_2+N_2D)\log(L_1L_2+L_1N_1+L_2N_2). 
\]
\end{theorem}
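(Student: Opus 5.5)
Bounding the pseudo-dimension amounts to bounding a VC dimension. By definition, $\pddm_{\R^{d_1+d_2}}(\fscoring)$ is the VC dimension of the class of binary functions $(\x,\y,t)\mapsto \mathbb{I}\{f(\x)^\top g(\y)-t\ge 0\}$. I would bound it with the parameter-counting argument for piecewise-polynomial networks, in the style of Bartlett--Harvey--Liaw--Mehrabian (Theorem 7 of their 2019 JMLR paper), adapted to the two-tower architecture. The computation of $s(\x,\y)-t$ is viewed as a single computation graph. The $\X$-tower has $L_1$ hidden ReLU layers of width at most $N_1$ and a linear output of dimension $D$. The $\Y$-tower is the same with $L_2,N_2$. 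A final node multiplies and sums the two $D$-dimensional outputs, giving a polynomial of degree $2$ in the output-layer parameters.

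The parameter count is as follows. The $\X$-tower has $W_1\le d_1N_1+L_1(N_1^2+N_1)+N_1D$ parameters, bounded by a constant multiple of $L_1N_1^2+L_1N_1d_1+N_1D$. The $\Y$-tower has an analogous count $W_2$. Set $W=W_1+W_2$; it is dominated by the bracket in the statement.

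Fix $m$ input points $(\x_i,\y_i,t_i)$. I would bound the number of sign patterns realized as the parameters vary by iteratively partitioning parameter space $\R^W$. The partition proceeds layer by layer, processing the two towers in parallel: at step $\ell$, refine the current cells so that on each cell every ReLU pre-activation at layer $\ell$ of both towers has a fixed sign on all $m$ points. Within a cell, each pre-activation of layer $\ell$ is a polynomial in the parameters of degree at most $\ell+1$. There are at most $m(N_1+N_2)$ such polynomials per layer. By Warren's bound (the Goldberg--Jerrum / Anthony--Bartlett lemma), $k$ polynomials of degree $\le \Delta$ in $n$ variables have at most $2(2ek\Delta/n)^n$ sign patterns. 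Each refinement therefore multiplies the cell count by at most $2\bigl(2em(N_1+N_2)(\ell+1)/W\bigr)^{W}$.

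After $\max\{L_1,L_2\}$ steps, the final score on each cell is a polynomial of degree at most $\max\{L_1,L_2\}+3$, and the last thresholding contributes one more Warren factor. The resulting bound is $\mathbf{G}(m)\le \prod_{\ell}2\bigl(2emN(\ell+1)/W\bigr)^{W}$. Demanding $2^m\le \mathbf{G}(m)$ and using the standard lemma that $2^m\le 2^{k}(mr/w)^{w}$ implies $m=O(w\log r)$ would give
\[\pddm\lesssim (L_1+L_2)\,W\,\log\bigl(L_1L_2+L_1N_1+L_2N_2\bigr).\]

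I would sharpen this to the stated form with the refined count, which uses at step $\ell$ only the $W_{\le\ell}$ parameters that the layer actually depends on. This yields $\sum_\ell W_{\le \ell}$ in place of $LW$. That sum is $\lesssim L_1^2N_1^2+L_1N_1d_1+N_1D$ plus the $\Y$ analogue; the $N_1D$ and $N_2D$ output parameters enter only at the final step. The $\log$ argument collects the degrees ($\lesssim L_1+L_2$) and the number of polynomials ($\lesssim m(N_1+N_2)$), with $m$ replaced by the final bound. Then $\log(m(N_1+N_2)(L_1+L_2))$ is absorbed into a constant multiple of $\log(L_1L_2+L_1N_1+L_2N_2)$, since $N_i\ge 2$.

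The main obstacle is the constant bookkeeping needed to reach exactly $32$ and this particular logarithm. The natural route is the explicit inequality from Bartlett et al. with $U$ computational units, so I would mimic their Theorem 7 proof verbatim. The only changes are to treat the depth as $\max(L_1,L_2)+1$ and to note that the bilinear output layer is a single degree-raising step, which is why $D$ appears only linearly via $N_iD$.
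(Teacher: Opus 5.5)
Your proposal is correct and follows essentially the same route as the paper: the Bartlett--Harvey--Liaw--Mehrabian layerwise partition argument with Warren's bound, per-layer parameter counts (the paper's $k_{j,l}$, with the $N_jD$ output weights entering only at the final step), and their Lemma~18 to solve for $m$. The only cosmetic difference is that the paper partitions $\mathbb{R}^{K_1}$ and $\mathbb{R}^{K_2}$ separately, works on product cells $U_1\times U_2$, and merges the factors by weighted AM--GM; note one small slip: on a cell the score $f(\mathbf{x})^\top g(\mathbf{y})$ is a product of polynomials in disjoint parameter blocks, so its degree is $(L_1+1)+(L_2+1)$ (the paper uses the cruder $(1+L_1)(1+L_2)$), not $\max\{L_1,L_2\}+3$, but this only changes the argument of the logarithm and is harmless.
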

\begin{proof}
For $j=1,2$, we write 
\[
\mathbf{f}_j(\x, \w_j)&=\ykh{{f}_{j,1}(\x, \w_j), {f}_{j,2}(\x, \w_j), \ldots, {f}_{j,D}(\x, \w_j)}^\top\\&=\bm W_{L_j}\sigma_{\v_{L_j}}\bm W_{L_j-1}\sigma_{\v_{L_j-1}}\cdots\bm W_2\sigma_{\v_2}\bm W_1\sigma_{\v_1}\bm W_0\x, 
\]
where  $\v_i\in\mbR^{N_j}$ ($1\leq i\leq L_j$),  $\bm{W}_i\in\mbR^{N_j\times N_j}$ ($1\leq i\leq L_j-1$), $\bm{W}_0\in\mbR^{N_j\times d_j}$, $\bm{W}_{L_j}\in\mbR^{D\times N_j}$, and $\w_j$ is the vector whose components consist of all the components of $\bm{W}_0,\ldots, \bm{W}_{L_j}, \v_1,\ldots \v_{L_j}$. 
For $\iota\in\hkh{1,2,\ldots, L_j}$, we use $k_{j,\iota}$ to denote the total number of entries in $\bm{W}_0,\ldots, \bm{W}_{\iota-1}, \v_1,\ldots \v_{\iota}$. Then let  $k_{j, L_{j}+1}$ denote the total number of entries in $\bm{W}_0,\ldots, \bm{W}_{L_j}, \v_1,\ldots \v_{\iota}$.  Thus $k_{j, L_j+1}=(L_j+d_j+D)\cdot N_j+(L_j-1)\cdot N_j^2$,  and  $\w_j\in\R^{k_{j, L_j+1}}$. For simplicity, we denote $K_j:=k_{j, L_j+1}$, $\Omega:=\R^{d_1+d_2}$, and
\[
S(\x,\y,\w_1,\w_2):=\mathbf{f}_1(\x, \w_1)^\top \mathbf{f}_2(\y, \w_2)=\sum_{i=1}^D f_{1,i}(\x,\w_1)f_{2,i}(\y,\w_2). 
\]
Obviously, 
\[
\mc{G}&:=\set{(\x,\y)\mapsto S(\x,\y,\w_1,\w_2)}{\w_1\in\R^{K_1}, \w_2\in \R^{K_2}}
={\fscoring(L_1,N_1, L_2,N_2, D)}, 
\] meaning that  \begin{align}\label{20260429093848}
\pddm_{\Omega}\ykh{\fscoring(L_1,N_1, L_2,N_2, D)}= \pddm_{\Omega}\ykh{\mc{G}}=\sup\set{m\in\mb{N}}{\mathbf{G}_{\Omega,\mc{G}}(m)=2^m}. 
\end{align}We next establish upper bounds for the growth function $\mathbf{G}_{\Omega,\mc{G}}(m)$.

Let $m\in\mb{N}$, $\x_{1,1},\ldots, \x_{1,m}\in\R^{d_1}$, $\x_{2,1},\ldots, \x_{2,m}\in\R^{d_2}$, and $t_1,\ldots, t_m\in\R$ be arbitrary. By the arguments used in the proof of Theorem 7 in \cite{bartlett2019nearly}, there exist a partition  $\mc{U}_1$ of $\R^{K_1}$ and a partition $\mc{U}_2$ of $\mc{R}^{K_2}$, such that \begin{align*}
&\abs{\mc{U}_j}\leq\prod_{l=1}^{L_j}2\cdot\ykh{\frac{2\mathrm{e}m N_j\cdot 1\cdot(1+(l-1)\cdot 1^{l-1})}{k_{j,l}}}^{k_{j,l}}
\\&\leq
\prod_{l=1}^{L_j}\ykh{\frac{12m N_j\cdot L_j}{k_{j,l}}}^{k_{j,l}}
,\;\forall\;j\in\hkh{1,2}, 
\end{align*} and that 
 for any $j\in\hkh{1,2}$, $U\in\mc{U}_j$,  $r\in\hkh{1,2,\ldots, m}$, $i\in\hkh{1,2,\ldots, D}$, the function
 \[
 U\ni \w\mapsto  f_{j,i}(\x_{j,r}, \w)\in\R
 \] coincides with a polynomial with total degree no more than $(1+L_j)$. Therefore, for any   $U_1\in\mc{U}_1$, $U_2\in\mc{U}_2$,  $r\in\hkh{1,2,\ldots, m}$, the function 
 \[
 U_1\times U_2\ni(\w_1,\w_2)\mapsto  S(\x_{1,r},
 \x_{2,r},\w_1,\w_2)\in\R
 \]  coincides with a polynomial, which we denote by $Q_{r,U_1,U_2}$, with total degree no more than ${(1+L_1)(1+L_2)}$. We can then apply Lemma 17 of \cite{bartlett2019nearly} to obtain that 
 \[
 &\abs{\setm{2.3}{\ykh{\mathbb{I}\hkh{H(\x_{1,1},\x_{2,1})\geq t_1}, \ldots, \mathbb{I}\hkh{H(\x_{1,m},\x_{2,m})\geq t_m}}}{H\in\mc{G}}}
 \\&=\abs{\bigcup_{\substack{\w_1\in\R^{K_1}\\ \w_2\in \R^{K_2}}}\hkh{\ykh{\mathbb{I}\hkh{S(\x_{1,1},\x_{2,1},\w_1,\w_2)\geq t_1}, \ldots, \mathbb{I}\hkh{S(\x_{1,m},\x_{2,m},\w_1,\w_2)\geq t_m}}}{}}
 \\&\leq \sum_{\substack{U_1\in \mc{U}_1\\U_2\in\mc{U}_2}}\abs{\bigcup_{\substack{\w_1\in U_1\\ \w_2\in U_2}}\hkh{\ykh{\mathbb{I}\hkh{S(\x_{1,1},\x_{2,1},\w_1,\w_2)\geq t_1}, \ldots, \mathbb{I}\hkh{S(\x_{1,m},\x_{2,m},\w_1,\w_2)\geq t_m}}}{}}
 \\&=\sum_{\substack{U_1\in \mc{U}_1\\U_2\in\mc{U}_2}}\abs{\bigcup_{\substack{\w_1\in U_1\\ \w_2\in U_2}}\hkh{\ykh{\mathbb{I}\hkh{t_1-Q_{1,U_1,U_2}(\w_1,\w_2)\leq 0}, \ldots, \mathbb{I}\hkh{t_m-Q_{m,U_1,V_1}(\w_1,\w_2)\leq 0}}}{}}
 \\&=\sum_{\substack{U_1\in \mc{U}_1\\U_2\in\mc{U}_2}}\abs{\bigcup_{\substack{\w_1\in U_1\\ \w_2\in U_2}}\hkh{\ykh{\mathbb{I}\hkh{t_1-Q_{1,U_1,U_2}(\w_1,\w_2)>0}, \ldots, \mathbb{I}\hkh{t_m-Q_{m,U_1,V_1}(\w_1,\w_2)> 0}}}{}}
\\& \leq \sum_{\substack{U_1\in \mc{U}_1\\U_2\in\mc{U}_2}} 2\cdot (2\mathrm{e}m (1+L_1)(1+L_2)/(K_1+K_2))^{K_1+K_2}. 
 \] Thus 
 \[
& \abs{\setm{2.3}{\ykh{\mathbb{I}\hkh{H(\x_{1,1},\x_{2,1})\geq t_1}, \ldots, \mathbb{I}\hkh{H(\x_{1,m},\x_{2,m})\geq t_m}}}{H\in\mc{G}}}
 \\&\leq  \sum_{\substack{U_1\in \mc{U}_1\\U_2\in\mc{U}_2}}2\cdot (2\mathrm{e}m (1+L_1)(1+L_2)/(K_1+K_2))^{K_1+K_2}
 \\&={{\abs{ \mc{U}_1}\abs{\mc{U}_2}}}2\cdot (2\mathrm{e}m (1+L_1)(1+L_2)/(K_1+K_2))^{K_1+K_2}
 \\&\leq \ykh{\frac{48mL_1L_2}{K_1+K_2}}^{K_1+K_2}\prod_{j=1}^2\prod_{l=1}^{L_j}\ykh{\frac{12m N_j\cdot L_j}{k_{j,l}}}^{k_{j,l}}. 
 \] Since $\x_{1,1},\ldots, \x_{1,m}\in\R^{d_1}$, $\x_{2,1},\ldots, \x_{2,m}\in\R^{d_2}$, and $t_1,\ldots, t_m\in\R$ are arbitrary, we get 
 \[
 &\mathbf{G}_{\Omega,\mc{G}}(m)\\&=\max_{\substack{\x_{1,1},\ldots, \x_{1,m}\in\R^{d_1}\\\x_{2,1},\ldots, \x_{2,m}\in\R^{d_2}\\t_1,\ldots, t_m\in\R}}\abs{\setm{2.3}{\ykh{\mathbb{I}\hkh{H(\x_{1,1},\x_{2,1})\geq t_1}, \ldots, \mathbb{I}\hkh{H(\x_{1,m},\x_{2,m})\geq t_m}}}{H\in\mc{G}}}\\&\leq \ykh{\frac{48mL_1L_2}{K_1+K_2}}^{K_1+K_2}\prod_{j=1}^2\prod_{l=1}^{L_j}\ykh{\frac{12m N_j\cdot L_j}{k_{j,l}}}^{k_{j,l}}. 
 \] Using the  weighted AM-GM inequality, we obtain 
 \[
 &\mathbf{G}_{\Omega,\mc{G}}(m)\leq \ykh{\frac{48mL_1L_2}{K_1+K_2}}^{K_1+K_2}\prod_{j=1}^2\prod_{l=1}^{L_j}\ykh{\frac{12m N_j\cdot L_j}{k_{j,l}}}^{k_{j,l}}
 \\&\leq\ykh{\frac{48mL_1L_2+12mN_1L_1^2+12mN_2L_2^2}{K_1+K_2+\sum_{j=1}^2\sum_{l=1}^{L_j} k_{j,l}}}^{K_1+K_2+\sum_{j=1}^2\sum_{l=1}^{L_j} k_{j,l}}. 
 \] Denote $A={48L_1L_2+12N_1L_1^2+12N_2L_2^2}$, $W=K_1+K_2+\sum_{j=1}^2\sum_{l=1}^{L_j} k_{j,l}$. The we have that 
 \[
 \mathbf{G}_{\Omega,\mc{G}}(m)\leq \ykh{\frac{Am}{W}}^{W}. 
 \] Since $m$ is arbitrary, we obtain that $\pddm_{\Omega}\ykh{\mc{G}}=\sup\set{m\in\mb{N}}{\mathbf{G}_{\Omega,\mc{G}}(m)=2^m}<\infty$, and 
 \[
 2^{\pddm_{\Omega}\ykh{\mc{G}}}=\mathbf{G}_{\Omega,\mc{G}}({\pddm_{\Omega}\ykh{\mc{G}}})\leq \ykh{\frac{A\cdot {\pddm_{\Omega}\ykh{\mc{G}}}}{W}}^W. 
 \] We can then apply Lemma 18 of \cite{bartlett2019nearly} to derive from the above inequality that 
 \[
 &{\pddm_{\Omega}\ykh{\mc{G}}}\leq W\log_2(2A\log_2A)\leq 4W\log A
 \leq (K_1+K_2+L_1k_{1,L_1}+L_2k_{2,L_2})\cdot 4\log A
 \\&\leq  (K_1+K_2+L_1k_{1,L_1}+L_2k_{2,L_2})\cdot 16\log(L_1L_2+L_1N_1+L_2N_2)
 \\&=\abs{\sum_{j=1}^2((L_j+1)\ykh{(L_j-1)N_j^2+L_jN_j+N_jd_j}+N_jD)}\cdot 16\log(L_1L_2+L_1N_1+L_2N_2)
 \\&\leq \abs{\sum_{j=1}^2(L_j^2N_j^2+L_jN_jd_j+N_jD)}\cdot 32\log(L_1L_2+L_1N_1+L_2N_2)
 \\&= 32(L_1^2N_1^2+L_1N_1d_1+N_1D+L_2^2N_2^2+L_2N_2d_2+N_2D)\log(L_1L_2+L_1N_1+L_2N_2),
 \] which, together with \eqref{20260429093848}, proves the desired inequality. \end{proof}

\section{Proof of Theorem~\ref{thm:fast_rate_tsy_crl}}\label{sec:fast_rate_tsy_crl}

Throughout this section, we write
$r(\x,\y):=p(\y\mid\x)/p_\Y(\y)$ for simplicity.
We first recall the following technical lemma from
\citet[Theorem E.14]{li2026statisticalconsistencygeneralizationcontrastive}. It controls the excess risk of ranking through the misalignment of the order induced by scoring function and density ratio as well as the gap of density ratio.

\begin{lemma}\label{lem:ranking-regret-upper}
Let Assumption~\ref{ass:pi_+} hold. For any measurable
scoring function $s:\X\times\Y\to\R$, define
\begin{align*}
A_s&:=\bigl\{(\x,\y,\y')\in\X\times\Y\times\Y:
\bigl(r(\x,\y)-r(\x,\y')\bigr)
\bigl(s(\x,\y)-s(\x,\y')\bigr)\le0\bigr\}.
\end{align*}
Then
\begin{equation}\label{eq:ranking-regret-upper}
\mathcal E^*-\mathcal E(s)
\le\frac12\E_{\x,\y,\y'}\left[
\I\{(\x,\y,\y')\in A_s\}|r(\x,\y)-r(\x,\y')|\right].
\end{equation}
\end{lemma}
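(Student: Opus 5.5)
The plan is a direct computation: rewrite $\mathcal E(s)$ as an integral against the product measure $P_\X\otimes P_\Y\otimes P_\Y$, symmetrize in $(\y,\y')$, identify $\mathcal E^*$ explicitly, and compare the two pointwise. Assumption~\ref{ass:pi_+} ($0\le r\le\alpha$) is used only so that every expectation below is finite and Fubini applies freely.

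\textbf{Step 1 (change of measure).} Since $(\x,\y)\sim p$ means $\x\sim p_\X$ and $\y\sim p(\cdot\mid\x)=r(\x,\cdot)\,p_\Y$, we have
\[
\mathcal E(s)=\E_{\x}\E_{\y,\y'\sim p_\Y}\bigl[r(\x,\y)\,w_s(\x,\y,\y')\bigr],
\]
with $w_s(\x,\y,\y'):=\I\{s(\x,\y)>s(\x,\y')\}+\tfrac12\I\{s(\x,\y)=s(\x,\y')\}$. The key property is $w_s(\x,\y,\y')+w_s(\x,\y',\y)=1$.

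\textbf{Step 2 (symmetrization).} Because $\y,\y'$ are i.i.d.\ under $p_\Y$, we may average the integrand with its $\y\leftrightarrow\y'$ swap. Writing $r=r(\x,\y)$ and $r'=r(\x,\y')$, this gives
\[
\mathcal E(s)=\tfrac12\E\bigl[r'+(r-r')\,w_s(\x,\y,\y')\bigr].
\]
The first term does not depend on $s$. For fixed $(\x,\y,\y')$, the second term is maximized over $w\in[0,1]$ by $(r-r')_+$, which is attained by the choice $s=r$. Hence
\[
\mathcal E^*=\tfrac12\E\bigl[r'+(r-r')_+\bigr],
\]
and this maximum is achieved by a measurable score. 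Subtracting,
\[
\mathcal E^*-\mathcal E(s)=\tfrac12\E\bigl[(r-r')_+-(r-r')\,w_s\bigr],
\]
and the integrand is nonnegative.

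\textbf{Step 3 (pointwise case analysis).} It remains to bound the integrand by $\I\{A_s\}\,|r-r'|$.
\begin{itemize}
\item Off $A_s$, we have $(r-r')(s-s')>0$, so the ordering by $s$ agrees with the ordering by $r$. Then $w_s=1$ when $r>r'$ and $w_s=0$ when $r<r'$, and in both cases the integrand is $0$.
\item On $A_s$ with $r>r'$, we have $w_s\in\{0,\tfrac12\}$, so the integrand is at most $r-r'=|r-r'|$.
\item On $A_s$ with $r<r'$, the integrand equals $(r'-r)\,w_s\le|r-r'|$.
\item On $A_s$ with $r=r'$, the integrand is $0$.
\end{itemize}
Integrating these pointwise bounds yields \eqref{eq:ranking-regret-upper}.

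The only delicate point is Step 2, namely justifying that $\mathcal E^*$ equals the pointwise supremum. This holds because the class of measurable $s$ is rich enough to realize the pointwise-optimal weight $w=\I\{r>r'\}+\tfrac12\I\{r=r'\}$, which is exactly the weight produced by $s=r$. Everything else is routine.
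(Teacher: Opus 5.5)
Your proof is correct. The change of measure $p(\y\mid\x)=r(\x,\y)\,p_\Y(\y)$, the $\y\leftrightarrow\y'$ symmetrization via $w_s(\x,\y,\y')+w_s(\x,\y',\y)=1$, the identification $\mathcal E^*=\mathcal E(r)=\tfrac12\E\bigl[r'+(r-r')_+\bigr]$, and the four-case pointwise bound are all sound. Your expectation is over $P_\X\otimes P_\Y\otimes P_\Y$, which is how the paper uses the bound later in the proof of Theorem~\ref{thm:fast_rate_tsy_crl}.

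The paper does not prove this lemma; it imports it from Theorem E.14 of \citet{li2026statisticalconsistencygeneralizationcontrastive}. Your argument is therefore a self-contained derivation of that cited step, and it is the standard bipartite-ranking regret computation.

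One minor point: Assumption~\ref{ass:pi_+} is not actually needed here. Since $r\ge 0$ and $\E_{\y\sim p_\Y}r(\x,\y)=1$, every integrand (including $(r-r')_+\le r$) is already integrable, so Fubini applies without the bound $r\le\alpha$.
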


We next state the Hellinger--KL inequality used to control the difference
between the square roots of two densities by their KL divergence.

\begin{lemma}[Hellinger--KL inequality]\label{lem:hellinger-kl}
Let $P$ and $Q$ be probability measures on the same measurable space,
with densities $p$ and $q$ with respect to a common  measure
$\nu$. Suppose that
$P\ll Q$. Define the squared Hellinger distance by
\[
H^2(P,Q):=\int(\sqrt p-\sqrt q)^2\,d\nu.
\]
 Then
\begin{equation}\label{eq:hellinger-kl}
H^2(P,Q)\le\operatorname{KL}(P\,\|\,Q),
\end{equation}
where $\operatorname{KL}(P\,\|\,Q)=\int p\log(p/q)\,d\nu$ . 
\end{lemma}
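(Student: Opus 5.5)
The plan is to reduce the inequality to the elementary bound $\log t\le t-1$ for $t>0$, applied to $t=\sqrt{q/p}$. First I would rewrite the squared Hellinger distance. Since $p$ and $q$ are probability densities with respect to $\nu$, we have $\int p\,d\nu=\int q\,d\nu=1$. Expanding the square then gives
\[
H^2(P,Q)=\int p\,d\nu+\int q\,d\nu-2\int\sqrt{pq}\,d\nu=2\Bigl(1-\int\sqrt{pq}\,d\nu\Bigr).
\]
So it suffices to show $\operatorname{KL}(P\,\|\,Q)\ge 2\bigl(1-\int\sqrt{pq}\,d\nu\bigr)$.

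Next I would restrict attention to the set $S:=\{p>0\}$. On $S^c$ the integrand $p\log(p/q)$ is zero by the usual convention $0\log 0=0$, and $\sqrt{pq}$ also vanishes there. Because $P\ll Q$, we have $q>0$ $\nu$-a.e.\ on $S$, so the ratio $q/p$ is a.e.\ well defined and positive on $S$. On $S$ I write
\[
p\log\frac pq=-2p\log\sqrt{q/p}\;\ge\;-2p\Bigl(\sqrt{q/p}-1\Bigr)=2p-2\sqrt{pq}.
\]
Integrating over $S$ yields
\[
\operatorname{KL}(P\,\|\,Q)\ge 2\int_S p\,d\nu-2\int_S\sqrt{pq}\,d\nu=2\Bigl(1-\int\sqrt{pq}\,d\nu\Bigr)=H^2(P,Q),
\]
which is the claim.

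The only step needing care is justifying this integration, since $p\log(p/q)$ need not be absolutely integrable. I would handle it as follows. The pointwise bound shows the integrand is bounded below by $2p-2\sqrt{pq}$, which is integrable because $\sqrt{pq}\le(p+q)/2$. Hence the negative part of $p\log(p/q)$ is integrable and $\operatorname{KL}(P\,\|\,Q)$ is well defined in $(-\infty,+\infty]$. Monotonicity of the integral for such quasi-integrable functions then gives the inequality. If $\operatorname{KL}(P\,\|\,Q)=+\infty$ the bound is trivial, so no further case analysis is needed.
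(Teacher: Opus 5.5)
Your proof is correct and follows essentially the same route as the paper: apply $\log u\le u-1$ with $u=\sqrt{q/p}$ on $\{p>0\}$, integrate, and use $\int p\,d\nu=\int q\,d\nu=1$ to identify $2-2\int\sqrt{pq}\,d\nu$ with $H^2(P,Q)$. Your extra observation is a small gain in rigor over the paper, which simply assumes the divergence is finite. You note that the negative part of $p\log(p/q)$ is dominated by the integrable function $(2p-2\sqrt{pq})^-$, so $\operatorname{KL}(P\,\|\,Q)$ is well defined in $(-\infty,+\infty]$.
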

\begin{proof}
Suppose $\operatorname{KL}(P\,\|\,Q)<
+\infty$, then $q>0$ holds $P$-almost everywhere.
We apply the elementary inequality $\log u\le u-1$ for $u>0$ to
$u=\sqrt{q/p}$ on $\{p>0\}$, gives
\[
p\log\frac pq=-2p\log\sqrt{\frac qp}
\ge2p-2\sqrt{pq}.
\]
Integrating over $\{p>0\}$ and using $\int p\,d\nu=\int q\,d\nu=1$,
we obtain
\[
\operatorname{KL}(P\,\|\,Q)
\ge2-2\int\sqrt{pq}\,d\nu
=\int(\sqrt p-\sqrt q)^2\,d\nu
=H^2(P,Q).
\]
This completes the proof.
\end{proof}

\begin{proof}[Proof of Theorem~\ref{thm:fast_rate_tsy_crl}]
Define $Z_s(\x):=\E_{\y\sim P_\Y}\exp(s(\x,\y)/\tau)$ and
\[
a_s(\x,\y):=\frac{\exp(s(\x,\y)/\tau)}{Z_s(\x)},
\qquad dQ_{s,\x}(\y):=a_s(\x,\y)dP_\Y(\y).
\]
Then $Q_{s,\x}$ is a probability measure on $\Y$.
We further have
\begin{equation}\label{eq:excess-risk-kl-calibration}
\mathcal L(s)-\mathcal L^*=\tau\E_{\x\sim P_\X}\left[
\operatorname{KL}\bigl(P_{\Y\mid\x}\,\|\,Q_{s,\x}\bigr)\right],
\end{equation}
where $P_{\Y\mid\x}$ denotes the conditional distribution on $\Y$
given $\x$. Define
$h_s(\x,\y):=\sqrt{r(\x,\y)}-\sqrt{a_s(\x,\y)}$.
Applying Lemma~\ref{lem:hellinger-kl} to $P_{\Y\mid\x}$ and
$Q_{s,\x}$, whose densities with respect to $P_\Y$ are $r(\x,\cdot)$
and $a_s(\x,\cdot)$, and then taking expectation over $\x$, we obtain
\begin{equation}\label{eq:hellinger-controlled-by-risk}
\E_{\x,\y}
\left[h_s(\x,\y)^2\right]=\E_\x\int \ykh{\sqrt{\frac{dP_{\Y\mid\x}(\y)}{dP_\Y(\y)}}-\sqrt{\frac{dQ_{s,\x}(\y)}{dP_\Y(\y)}}}^2dP_\Y(\y)\le\frac{\mathcal L(s)-\mathcal L^*}{\tau}.
\end{equation}

Fix $(\x,\y,\y')\in A_s$.
Without loss of generality, we assume that
$r(\x,\y)\ge r(\x,\y')$. If $r(\x,\y)> r(\x,\y')$, the definition of $A_s$ gives
$s(\x,\y)\le s(\x,\y')$, and hence
$a_s(\x,\y)\le a_s(\x,\y')$. It follows that
\[
0\le\sqrt{r(\x,\y)}-\sqrt{r(\x,\y')}
\le |h_s(\x,\y)|+|h_s(\x,\y')|.
\]
Using $r(\x,\y),r(\x,\y')\le\alpha$, we obtain
\begin{align*}
|r(\x,\y)-r(\x,\y')|^2
&=\bigl(\sqrt{r(\x,\y)}+\sqrt{r(\x,\y')}\bigr)^2
\bigl(\sqrt{r(\x,\y)}-\sqrt{r(\x,\y')}\bigr)^2\\
&\le4\alpha\bigl(|h_s(\x,\y)|+|h_s(\x,\y')|\bigr)^2\\
&\le8\alpha\bigl(h_s(\x,\y)^2+h_s(\x,\y')^2\bigr).
\end{align*}
The above inequality naturally holds for $r(\x,\y)= r(\x,\y')$.
Taking expectations
and applying \eqref{eq:hellinger-controlled-by-risk}, we have
\begin{align}
&\E_{\x,\y,\y'}\left[
\I\{(\x,\y,\y')\in A_s\}|r(\x,\y)-r(\x,\y')|^2\right]
\le  \E_{\x,\y,\y'}\zkh{8\alpha\bigl(h_s(\x,\y)^2+h_s(\x,\y')^2\bigr)}\notag\\
=&16\alpha\E_{\x,\y}
\left[h_s(\x,\y)^2\right]
\le\frac{16\alpha}{\tau}\ykh{\mathcal L(s)-\mathcal L^*}.\label{eq:misranking-square-bound}
\end{align}

For any $t>0$, Lemma~\ref{lem:ranking-regret-upper},
\eqref{eq:misranking-square-bound}, and Assumption~\ref{ass:tsy_crl} imply
\begin{align*}
\mathcal E^*-\mathcal E(s)
&\le\frac12\E_{\x,\y,\y'}[\I\{(\x,\y,\y')\in A_s\}|r(\x,\y)-r(\x,\y')|\I\{|r(\x,\y)-r(\x,\y')|>t\}]\\
 &+\frac12\E_{\x,\y,\y'}[\I\{(\x,\y,\y')\in A_s\}|r(\x,\y)-r(\x,\y')|\I\{|r(\x,\y)-r(\x,\y')|\le t\}]\\
&\le\frac{1}{2t}\E_{\x,\y,\y'}[\I\{(\x,\y,\y')\in A_s\}|r(\x,\y)-r(\x,\y')|^2]
 +\frac{t}{2}\P(|r(\x,\y)-r(\x,\y')|\le t)\\
&\le\frac{8\alpha (\mathcal L(s)-\mathcal L^*)}{\tau t}+\frac{ct^{\mu+1}}{2}.
\end{align*}
Taking
$t=(16\alpha (\mathcal L(s)-\mathcal L^*)/(c\tau ))^{1/(\mu+2)}$ gives
\[
\mathcal E^*-\mathcal E(s)
\le c^{\frac1{\mu+2}}
\left(\frac{16\alpha (\mathcal L(s)-\mathcal L^*)}{\tau}\right)^{\frac{\mu+1}{\mu+2}}.
\]
This completes the proof.
\end{proof}

\section{Numerical Experiments}
\label{sec:experiments}
In this section, we 
We conduct experiments on Gaussian mixture models to examine three aspects of
our theory. First, a balanced Gaussian-mixture model with infinite rank illustrates the
prediction in Section~\ref{sec:rkhs_approximation} that approximation error
decreases as the representation dimension increases. Second, 
we train neural networks on imbalanced Gaussian mixture models to examine the representation structure and
dimension dependence described in Section~\ref{sec:latent-class-model}.
Finally, we verify the
density-ratio upper bound in Assumption~\ref{ass:pi_+} and Tsybakov noise condition in Assumption~\ref{ass:tsy_crl}.

\subsection{Approximation error in a Gaussian mixture model}\label{app:gmm_experiments}
\paragraph{Setup.} We first study spectral decay and approximation error as the representation
dimension $D$ increases in a balanced binary Gaussian mixture model with an infinite-rank PMI
operator. Let $z$ be uniform on $\{-1,+1\}$, with conditionally independent views
$\x\mid z\sim\mathcal N(z\mu_x,\sigma_x^2)$ and
$\y\mid z\sim\mathcal N(z\mu_y,\sigma_y^2)$.
Bayes' rule gives
\[
\mathrm{PMI}(\x,\y)=\log[1+u(\x)v(\y)],\qquad
u(\x)=\tanh(\mu_x\x/\sigma_x^2),\quad v(\y)=\tanh(\mu_y\y/\sigma_y^2).
\]
For nonzero means, the expansion
$\mathrm{PMI}(\x,\y)=\sum_{j\ge1}(-1)^{j+1}u(\x)^jv(\y)^j/j$
has infinite rank. 

We use $\mu_x=\mu_y=1.2$ and $\sigma_x=\sigma_y=1$.
Gauss--Hermite quadrature under the mixture marginals gives the following
approximate eigenvalues of $T^*T$:
\begin{center}
\begin{tabular}{crrrrrr}
\toprule
$j$&1&2&4&8&12&20\\
\midrule
$\lambda_j$&1.291&$4.763\!\times\!10^{-1}$&$4.056\!\times\!10^{-3}$&$1.592\!\times\!10^{-5}$&$2.257\!\times\!10^{-7}$&$1.973\!\times\!10^{-10}$\\
\bottomrule
\end{tabular}
\end{center}
The observed spectrum decays rapidly despite infinite rank  (of order $10^{-i/2}$ approximately).. 

\paragraph{Results.}
we evaluate (i) the rank-$D$ spectral truncation for Theorem~\ref{thm:appro_rkhs} and (ii) two  ReLU encoders 
trained with the contrastive objective. We report both $L_2$ norm and 
approximation error for contrastive loss.
\begin{center}
\small
\begin{tabular}{rrrrr}
\toprule
$D$ & Spectral tail & \shortstack[r]{spectral truncation\\approximation error} & MLP $L_2$ distance & \shortstack[r]{MLP approximation\\error}\\
\midrule
1 & $4.902\times10^{-1}$ & $3.808\times10^{-2}$ & $3.763\times10^{-1}$ & $9.941\times10^{-3}$\\
2 & $1.397\times10^{-2}$ & $9.984\times10^{-4}$ & $4.607\times10^{-2}$ & $4.616\times10^{-4}$\\
4 & $6.822\times10^{-4}$ & $5.638\times10^{-5}$ & $3.952\times10^{-3}$ & $1.877\times10^{-5}$\\
8 & $6.165\times10^{-6}$ & $5.587\times10^{-7}$ & $1.529\times10^{-3}$ & $9.188\times10^{-6}$\\
\bottomrule
\end{tabular}
\captionof{table}{Approximation errors for infinite-rank Gaussian mixture model.}
\label{tab:gmm_errors}
\end{center}
As we can see, the approximation error gets smaller as $D$ grows for both
spectral truncation and MLP.

\subsection{Neural-network experiments under LCGM}
\label{app:lcgm_experiments}
We next examine the structured-data predictions in
Section~\ref{sec:latent-class-model} using an imbalanced Gaussian
mixture with a cleaner anchor view. We evaluate whether the two encoders
recover the posterior features $p(z=k\mid\x)$ and log-posterior-ratio
features $\log[p(z=k\mid\y)/\pi_k]$, and whether representation quality
saturates around the latent-class dimension $K$.

\paragraph{Setup.}
A latent class
$z\in[K]$ is sampled from a prior distribution $\pi$, where
$\pi_k\propto 1/(k+1)$. Conditional on $z=k$, the two views are generated
independently as
\[
\x\mid z=k \sim \mathcal N(\mu_k^1,\sigma_1^2 I),
\qquad
\y\mid z=k \sim \mathcal N(\mu_k^2,\sigma_2^2 I).
\]
The class centers $\{\mu_k^1\}_{k=1}^K$ and $\{\mu_k^2\}_{k=1}^K$ are sampled
independently. We choose
$\sigma_1=0.35<\sigma_2=1$, so that the $\X$-view is substantially cleaner than
the $\Y$-view. As a result, $p(z|\x)$ is close to a one-hot latent identity
feature, approximating the deterministic setting in \eqref{eq:deterministic}. We use $K=8$ latent classes and
$d_1=d_2=16$. We train three-layer MLP encoders with hidden dimension $128$
using the inner-product score with $\tau=0.5$. The model is optimized by AdamW for $20$ epochs
with batch size $512$.  The experiments are performed on Google colab with T4 GPU. Our code is available at \url{https://github.com/YuanfanLi2233/approximation_CRL_LCGM_NIPS2026}.

\begin{figure}[htbp]
    \centering
    \begin{subfigure}[t]{0.42\linewidth}
        \centering
        \includegraphics[width=\linewidth]{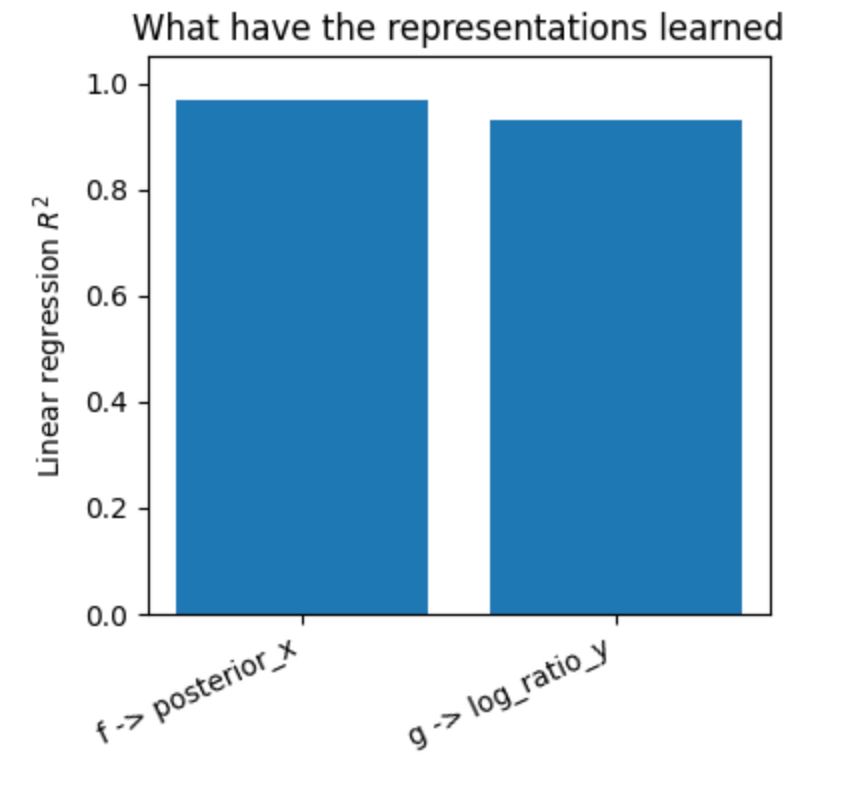}
        \caption{Recovered singular functions.}
        \label{fig:representation-factors}
    \end{subfigure}
    \hfill
    \begin{subfigure}[t]{0.50\linewidth}
        \centering
        \includegraphics[width=\linewidth]{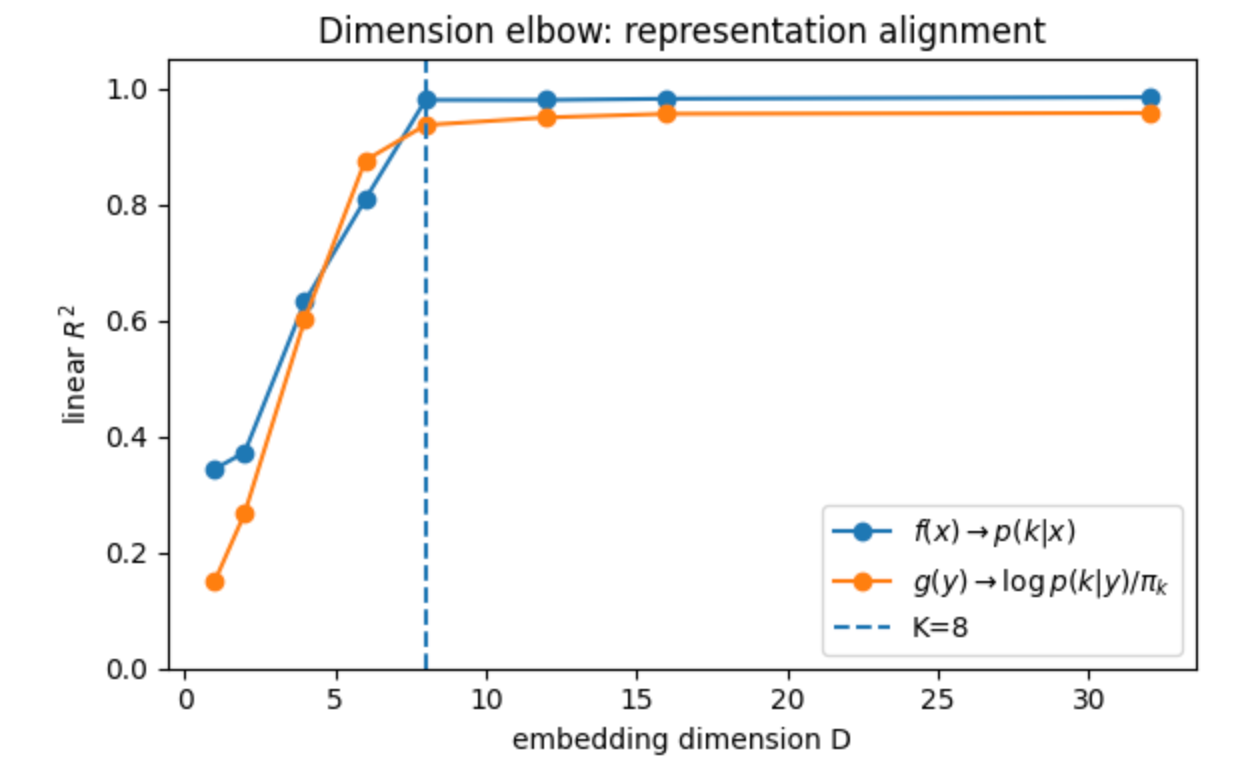}
        \caption{Dimension elbow.}
        \label{fig:dimension-elbow}
    \end{subfigure}
    \caption{
    \textbf{Synthetic validation of the representation learning in LCGM.}
    (a) The two encoders recover the probabilistic features predicted by
    Proposition~\ref{prop:latent-spectrum_informal}.
    (b) Representation alignment improves as the embedding dimension increases
    and saturates around the latent rank $K=8$.
    }
    \label{fig:synthetic-experiments}
    \vspace*{-4mm}
\end{figure}

\paragraph{Evaluation.}
According to Proposition~\ref{prop:latent-spectrum_informal}, the encoder
$f$ should recover the posterior feature $p(z=k|\x)$, while the encoder $g$
should recover the log-posterior-ratio feature
$\log(p(z=k|\y)/\pi_k)$. In the Gaussian mixture model, both quantities can be
computed in closed form. 
We therefore evaluate the learned representations by linear probing: we report
the linear regression $R^2$ from $f(\x)$ to $p(z=k|\x)$ and from $g(\y)$ to
$\log(p(z=k|\y)/\pi_k)$. We use $R^2$ because
Proposition~\ref{prop:latent-spectrum_informal} identifies the relevant
singular-function spaces only up to an invertible linear transformation; hence
a high linear-regression $R^2$ indicates that the learned representation spans
the predicted feature space.
We also vary the embedding dimension $D$.

\paragraph{Representations recover singular functions.}
Figure~\ref{fig:representation-factors} shows what the two representations learn. The
clean-side encoder $f$ linearly predicts the posterior feature $p(z=k|\x)$ with
high accuracy, achieving $R^2\approx0.97$. The noisy-side encoder $g$ linearly
predicts the log-posterior-ratio feature $\log(p(z=k|\y)/\pi_k)$ with
$R^2\approx0.93$. This supports the spectral interpretation in 
Proposition~\ref{prop:latent-spectrum_informal}: 
the learned representations recover singular functions of the 
PMI operator.
\paragraph{Representation quality for different dimensions.}
Figure~\ref{fig:dimension-elbow} further examines the role of the embedding
dimension. Under LCGM, the PMI operator has rank at most $K$, so a
$K$-dimensional inner-product representation is sufficient at the population
level. Consistent with this prediction, the representation quality improves as
$D$ increases and saturates around $D=K$. When $D<K$, the representation is
rank-limited and cannot recover all latent semantic directions.

\subsection{Verification of bounded density-ratio and Tsybakov noise condition}\label{app:gmm_diagnostics}
In this section, we verify the bounded density-ratio condition in
Assumption~\ref{ass:pi_+} and empirically assess the Tsybakov noise condition in Assumption~\ref{ass:tsy_crl} 
using the Gaussian mixture model in Section~\ref{app:gmm_experiments}.
\paragraph{Bounded density ratio.}
In the above model, the density ratio
\[
\frac{p(\y\mid\x)}{p_\Y(\y)}
=1+\tanh(1.2\x)\tanh(1.2\y)
\]
lies in $(0,2)$. Thus Assumption~\ref{ass:pi_+}
holds with $\alpha=2$.
\paragraph{Tsybakov noise condition.}
For independent $(\x,\y,\y')\sim P_\X\otimes P_\Y\otimes P_\Y$,
we estimate the margin probabilities ${\P}\!\left(\left|
\frac{p(\y\mid\x)}{p_\Y(\y)}
-\frac{p(\y'\mid\x)}{p_\Y(\y')}
\right|\le t\right)$ for different values of $t$ in Assumption~\ref{ass:tsy_crl} through Monte Carlo sampling.
The estimated probabilities are shown in the following table.
\begin{center}
\begin{tabular}{cc}
\toprule
$t$ & $\widehat{\P}\!\left(\left|
\frac{p(\y\mid\x)}{p_\Y(\y)}
-\frac{p(\y'\mid\x)}{p_\Y(\y')}
\right|\le t\right)$\\
\midrule
$10^{-5}$ & $1.7360\times10^{-4}$\\
$3\times10^{-5}$ & $4.5680\times10^{-4}$\\
$10^{-4}$ & $1.3220\times10^{-3}$\\
$3\times10^{-4}$ & $3.4742\times10^{-3}$\\
$10^{-3}$ & $9.7374\times10^{-3}$\\
\bottomrule
\end{tabular}
\end{center}
A log--log fit gives the noise exponent
$\widehat\mu=0.878890$. 

\newpage
\section*{NeurIPS Paper Checklist}


\begin{enumerate}

\item {\bf Claims}
    \item[] Question: Do the main claims made in the abstract and introduction accurately reflect the paper's contributions and scope?
    \item[] Answer: \answerYes{} 
    \item[] Justification: We clarify our contributions and the scope of the paper in the abstract and
introduction.
    \item[] Guidelines: 
    \begin{itemize}
        \item The answer \answerNA{} means that the abstract and introduction do not include the claims made in the paper.
        \item The abstract and/or introduction should clearly state the claims made, including the contributions made in the paper and important assumptions and limitations. A \answerNo{} or \answerNA{} answer to this question will not be perceived well by the reviewers. 
        \item The claims made should match theoretical and experimental results, and reflect how much the results can be expected to generalize to other settings. 
        \item It is fine to include aspirational goals as motivation as long as it is clear that these goals are not attained by the paper. 
    \end{itemize}

\item {\bf Limitations}
    \item[] Question: Does the paper discuss the limitations of the work performed by the authors?
    \item[] Answer: \answerYes{} 
    \item[] Justification: 
    We have discusses the limitations of the work. For example,
    in the conclusion, we mention that our analysis can not cover the cosine similarity scoring function.
    \item[] Guidelines:
    \begin{itemize}
        \item The answer \answerNA{} means that the paper has no limitation while the answer \answerNo{} means that the paper has limitations, but those are not discussed in the paper. 
        \item The authors are encouraged to create a separate ``Limitations'' section in their paper.
        \item The paper should point out any strong assumptions and how robust the results are to violations of these assumptions (e.g., independence assumptions, noiseless settings, model well-specification, asymptotic approximations only holding locally). The authors should reflect on how these assumptions might be violated in practice and what the implications would be.
        \item The authors should reflect on the scope of the claims made, e.g., if the approach was only tested on a few datasets or with a few runs. In general, empirical results often depend on implicit assumptions, which should be articulated.
        \item The authors should reflect on the factors that influence the performance of the approach. For example, a facial recognition algorithm may perform poorly when image resolution is low or images are taken in low lighting. Or a speech-to-text system might not be used reliably to provide closed captions for online lectures because it fails to handle technical jargon.
        \item The authors should discuss the computational efficiency of the proposed algorithms and how they scale with dataset size.
        \item If applicable, the authors should discuss possible limitations of their approach to address problems of privacy and fairness.
        \item While the authors might fear that complete honesty about limitations might be used by reviewers as grounds for rejection, a worse outcome might be that reviewers discover limitations that aren't acknowledged in the paper. The authors should use their best judgment and recognize that individual actions in favor of transparency play an important role in developing norms that preserve the integrity of the community. Reviewers will be specifically instructed to not penalize honesty concerning limitations.
    \end{itemize}

\item {\bf Theory assumptions and proofs}
    \item[] Question: For each theoretical result, does the paper provide the full set of assumptions and a complete (and correct) proof?
    \item[] Answer: \answerYes{} 
    \item[] Justification: We present all the necessary assumptions and include the proof of all theoretical
results.
    \item[] Guidelines:
    \begin{itemize}
        \item The answer \answerNA{} means that the paper does not include theoretical results. 
        \item All the theorems, formulas, and proofs in the paper should be numbered and cross-referenced.
        \item All assumptions should be clearly stated or referenced in the statement of any theorems.
        \item The proofs can either appear in the main paper or the supplemental material, but if they appear in the supplemental material, the authors are encouraged to provide a short proof sketch to provide intuition. 
        \item Inversely, any informal proof provided in the core of the paper should be complemented by formal proofs provided in appendix or supplemental material.
        \item Theorems and Lemmas that the proof relies upon should be properly referenced. 
    \end{itemize}

    \item {\bf Experimental result reproducibility}
    \item[] Question: Does the paper fully disclose all the information needed to reproduce the main experimental results of the paper to the extent that it affects the main claims and/or conclusions of the paper (regardless of whether the code and data are provided or not)?
    \item[] Answer: \answerYes{} 
    \item[] Justification: As a theoretical paper, our main contributions are theoretical analyses and
proofs. The numerical experiments are designed solely to provide basic validation of our
theoretical findings. We have fully disclosed all experimental details in Section~\ref{sec:experiments} of the
paper.
    \item[] Guidelines:
    \begin{itemize}
        \item The answer \answerNA{} means that the paper does not include experiments.
        \item If the paper includes experiments, a \answerNo{} answer to this question will not be perceived well by the reviewers: Making the paper reproducible is important, regardless of whether the code and data are provided or not.
        \item If the contribution is a dataset and\slash or model, the authors should describe the steps taken to make their results reproducible or verifiable. 
        \item Depending on the contribution, reproducibility can be accomplished in various ways. For example, if the contribution is a novel architecture, describing the architecture fully might suffice, or if the contribution is a specific model and empirical evaluation, it may be necessary to either make it possible for others to replicate the model with the same dataset, or provide access to the model. In general. releasing code and data is often one good way to accomplish this, but reproducibility can also be provided via detailed instructions for how to replicate the results, access to a hosted model (e.g., in the case of a large language model), releasing of a model checkpoint, or other means that are appropriate to the research performed.
        \item While NeurIPS does not require releasing code, the conference does require all submissions to provide some reasonable avenue for reproducibility, which may depend on the nature of the contribution. For example
        \begin{enumerate}
            \item If the contribution is primarily a new algorithm, the paper should make it clear how to reproduce that algorithm.
            \item If the contribution is primarily a new model architecture, the paper should describe the architecture clearly and fully.
            \item If the contribution is a new model (e.g., a large language model), then there should either be a way to access this model for reproducing the results or a way to reproduce the model (e.g., with an open-source dataset or instructions for how to construct the dataset).
            \item We recognize that reproducibility may be tricky in some cases, in which case authors are welcome to describe the particular way they provide for reproducibility. In the case of closed-source models, it may be that access to the model is limited in some way (e.g., to registered users), but it should be possible for other researchers to have some path to reproducing or verifying the results.
        \end{enumerate}
    \end{itemize}

\item {\bf Open access to data and code}
    \item[] Question: Does the paper provide open access to the data and code, with sufficient instructions to faithfully reproduce the main experimental results, as described in supplemental material?
    \item[] Answer: \answerYes{} 
    \item[] The details and code of experiments can be found in Appendix~\ref{sec:experiments}.
    \item[] Guidelines:
    \begin{itemize}
        \item The answer \answerNA{} means that paper does not include experiments requiring code.
        \item Please see the NeurIPS code and data submission guidelines (\url{https://neurips.cc/public/guides/CodeSubmissionPolicy}) for more details.
        \item While we encourage the release of code and data, we understand that this might not be possible, so \answerNo{} is an acceptable answer. Papers cannot be rejected simply for not including code, unless this is central to the contribution (e.g., for a new open-source benchmark).
        \item The instructions should contain the exact command and environment needed to run to reproduce the results. See the NeurIPS code and data submission guidelines (\url{https://neurips.cc/public/guides/CodeSubmissionPolicy}) for more details.
        \item The authors should provide instructions on data access and preparation, including how to access the raw data, preprocessed data, intermediate data, and generated data, etc.
        \item The authors should provide scripts to reproduce all experimental results for the new proposed method and baselines. If only a subset of experiments are reproducible, they should state which ones are omitted from the script and why.
        \item At submission time, to preserve anonymity, the authors should release anonymized versions (if applicable).
        \item Providing as much information as possible in supplemental material (appended to the paper) is recommended, but including URLs to data and code is permitted.
    \end{itemize}

\item {\bf Experimental setting/details}
    \item[] Question: Does the paper specify all the training and test details (e.g., data splits, hyperparameters, how they were chosen, type of optimizer) necessary to understand the results?
    \item[] Answer: \answerYes{} 
    \item[] Justification: The details of experiments can be found in Section~\ref{sec:experiments}.
    \item[] Guidelines:
    \begin{itemize}
        \item The answer \answerNA{} means that the paper does not include experiments.
        \item The experimental setting should be presented in the core of the paper to a level of detail that is necessary to appreciate the results and make sense of them.
        \item The full details can be provided either with the code, in appendix, or as supplemental material.
    \end{itemize}

\item {\bf Experiment statistical significance}
    \item[] Question: Does the paper report error bars suitably and correctly defined or other appropriate information about the statistical significance of the experiments?
    \item[] Answer: \answerNA{} 
    \item[] Justification: Our paper is primarily theoretical in nature. The simple numerical experiments
are designed as deterministic verifications of our theoretical results rather than empirical
evaluations requiring statistical analysis.
    \item[] Guidelines:
    \begin{itemize}
        \item The answer \answerNA{} means that the paper does not include experiments.
        \item The authors should answer \answerYes{} if the results are accompanied by error bars, confidence intervals, or statistical significance tests, at least for the experiments that support the main claims of the paper.
        \item The factors of variability that the error bars are capturing should be clearly stated (for example, train/test split, initialization, random drawing of some parameter, or overall run with given experimental conditions).
        \item The method for calculating the error bars should be explained (closed form formula, call to a library function, bootstrap, etc.)
        \item The assumptions made should be given (e.g., Normally distributed errors).
        \item It should be clear whether the error bar is the standard deviation or the standard error of the mean.
        \item It is OK to report 1-sigma error bars, but one should state it. The authors should preferably report a 2-sigma error bar than state that they have a 96\% CI, if the hypothesis of Normality of errors is not verified.
        \item For asymmetric distributions, the authors should be careful not to show in tables or figures symmetric error bars that would yield results that are out of range (e.g., negative error rates).
        \item If error bars are reported in tables or plots, the authors should explain in the text how they were calculated and reference the corresponding figures or tables in the text.
    \end{itemize}

\item {\bf Experiments compute resources}
    \item[] Question: For each experiment, does the paper provide sufficient information on the computer resources (type of compute workers, memory, time of execution) needed to reproduce the experiments?
    \item[] Answer: \answerYes{} 
    \item[] Justification: Given the simplicity of our numerical experiments (which are minimal and for
verification only), we conducted our experiments on Google Colab with T4 GPU.
    \item[] Guidelines:
    \begin{itemize}
        \item The answer \answerNA{} means that the paper does not include experiments.
        \item The paper should indicate the type of compute workers CPU or GPU, internal cluster, or cloud provider, including relevant memory and storage.
        \item The paper should provide the amount of compute required for each of the individual experimental runs as well as estimate the total compute. 
        \item The paper should disclose whether the full research project required more compute than the experiments reported in the paper (e.g., preliminary or failed experiments that didn't make it into the paper). 
    \end{itemize}
    
\item {\bf Code of ethics}
    \item[] Question: Does the research conducted in the paper conform, in every respect, with the NeurIPS Code of Ethics \url{https://neurips.cc/public/EthicsGuidelines}?
    \item[] Answer: \answerYes{} 
    \item[] Justification: The research conducted in the paper confirm with the NeurIPS Code of Ethics.
    \item[] Guidelines:
    \begin{itemize}
        \item The answer \answerNA{} means that the authors have not reviewed the NeurIPS Code of Ethics.
        \item If the authors answer \answerNo, they should explain the special circumstances that require a deviation from the Code of Ethics.
        \item The authors should make sure to preserve anonymity (e.g., if there is a special consideration due to laws or regulations in their jurisdiction).
    \end{itemize}

\item {\bf Broader impacts}
    \item[] Question: Does the paper discuss both potential positive societal impacts and negative societal impacts of the work performed?
    \item[] Answer: \answerNA{} 
    \item[] Justification: This is a theoretical paper, and there is no societal impact of the work performed.
    \item[] Guidelines:
    \begin{itemize}
        \item The answer \answerNA{} means that there is no societal impact of the work performed.
        \item If the authors answer \answerNA{} or \answerNo, they should explain why their work has no societal impact or why the paper does not address societal impact.
        \item Examples of negative societal impacts include potential malicious or unintended uses (e.g., disinformation, generating fake profiles, surveillance), fairness considerations (e.g., deployment of technologies that could make decisions that unfairly impact specific groups), privacy considerations, and security considerations.
        \item The conference expects that many papers will be foundational research and not tied to particular applications, let alone deployments. However, if there is a direct path to any negative applications, the authors should point it out. For example, it is legitimate to point out that an improvement in the quality of generative models could be used to generate Deepfakes for disinformation. On the other hand, it is not needed to point out that a generic algorithm for optimizing neural networks could enable people to train models that generate Deepfakes faster.
        \item The authors should consider possible harms that could arise when the technology is being used as intended and functioning correctly, harms that could arise when the technology is being used as intended but gives incorrect results, and harms following from (intentional or unintentional) misuse of the technology.
        \item If there are negative societal impacts, the authors could also discuss possible mitigation strategies (e.g., gated release of models, providing defenses in addition to attacks, mechanisms for monitoring misuse, mechanisms to monitor how a system learns from feedback over time, improving the efficiency and accessibility of ML).
    \end{itemize}
    
\item {\bf Safeguards}
    \item[] Question: Does the paper describe safeguards that have been put in place for responsible release of data or models that have a high risk for misuse (e.g., pre-trained language models, image generators, or scraped datasets)?
    \item[] Answer: \answerNA{} 
    \item[] Justification: This is a theoretical paper, which poses no such risks.
    \item[] Guidelines:
    \begin{itemize}
        \item The answer \answerNA{} means that the paper poses no such risks.
        \item Released models that have a high risk for misuse or dual-use should be released with necessary safeguards to allow for controlled use of the model, for example by requiring that users adhere to usage guidelines or restrictions to access the model or implementing safety filters. 
        \item Datasets that have been scraped from the Internet could pose safety risks. The authors should describe how they avoided releasing unsafe images.
        \item We recognize that providing effective safeguards is challenging, and many papers do not require this, but we encourage authors to take this into account and make a best faith effort.
    \end{itemize}

\item {\bf Licenses for existing assets}
    \item[] Question: Are the creators or original owners of assets (e.g., code, data, models), used in the paper, properly credited and are the license and terms of use explicitly mentioned and properly respected?
    \item[] Answer: \answerNA{} 
    \item[] Justification: This is a theoretical paper, and we do not use any existing assets.
    \item[] Guidelines:
    \begin{itemize}
        \item The answer \answerNA{} means that the paper does not use existing assets.
        \item The authors should cite the original paper that produced the code package or dataset.
        \item The authors should state which version of the asset is used and, if possible, include a URL.
        \item The name of the license (e.g., CC-BY 4.0) should be included for each asset.
        \item For scraped data from a particular source (e.g., website), the copyright and terms of service of that source should be provided.
        \item If assets are released, the license, copyright information, and terms of use in the package should be provided. For popular datasets, \url{paperswithcode.com/datasets} has curated licenses for some datasets. Their licensing guide can help determine the license of a dataset.
        \item For existing datasets that are re-packaged, both the original license and the license of the derived asset (if it has changed) should be provided.
        \item If this information is not available online, the authors are encouraged to reach out to the asset's creators.
    \end{itemize}

\item {\bf New assets}
    \item[] Question: Are new assets introduced in the paper well documented and is the documentation provided alongside the assets?
    \item[] Answer: \answerNA{} 
    \item[] Justification:  We do not release any new assets.
    \item[] Guidelines:
    \begin{itemize}
        \item The answer \answerNA{} means that the paper does not release new assets.
        \item Researchers should communicate the details of the dataset\slash code\slash model as part of their submissions via structured templates. This includes details about training, license, limitations, etc. 
        \item The paper should discuss whether and how consent was obtained from people whose asset is used.
        \item At submission time, remember to anonymize your assets (if applicable). You can either create an anonymized URL or include an anonymized zip file.
    \end{itemize}

\item {\bf Crowdsourcing and research with human subjects}
    \item[] Question: For crowdsourcing experiments and research with human subjects, does the paper include the full text of instructions given to participants and screenshots, if applicable, as well as details about compensation (if any)? 
    \item[] Answer: \answerNA{} 
    \item[] Justification: This is a theoretical paper, which does not involve crowdsourcing nor research
with human subjects.
    \item[] Guidelines:
    \begin{itemize}
        \item The answer \answerNA{} means that the paper does not involve crowdsourcing nor research with human subjects.
        \item Including this information in the supplemental material is fine, but if the main contribution of the paper involves human subjects, then as much detail as possible should be included in the main paper. 
        \item According to the NeurIPS Code of Ethics, workers involved in data collection, curation, or other labor should be paid at least the minimum wage in the country of the data collector. 
    \end{itemize}

\item {\bf Institutional review board (IRB) approvals or equivalent for research with human subjects}
    \item[] Question: Does the paper describe potential risks incurred by study participants, whether such risks were disclosed to the subjects, and whether Institutional Review Board (IRB) approvals (or an equivalent approval/review based on the requirements of your country or institution) were obtained?
    \item[] Answer: \answerNA{} 
    \item[] Justification: This is a theoretical paper, which does not involve crowdsourcing nor research
with human subjects.
    \item[] Guidelines:
    \begin{itemize}
        \item The answer \answerNA{} means that the paper does not involve crowdsourcing nor research with human subjects.
        \item Depending on the country in which research is conducted, IRB approval (or equivalent) may be required for any human subjects research. If you obtained IRB approval, you should clearly state this in the paper. 
        \item We recognize that the procedures for this may vary significantly between institutions and locations, and we expect authors to adhere to the NeurIPS Code of Ethics and the guidelines for their institution. 
        \item For initial submissions, do not include any information that would break anonymity (if applicable), such as the institution conducting the review.
    \end{itemize}

\item {\bf Declaration of LLM usage}
    \item[] Question: Does the paper describe the usage of LLMs if it is an important, original, or non-standard component of the core methods in this research? Note that if the LLM is used only for writing, editing, or formatting purposes and does \emph{not} impact the core methodology, scientific rigor, or originality of the research, declaration is not required.
    \item[] Answer: \answerNA{} 
    \item[] Justification: The core method development in this research does not involve LLMs as any important, original, or non-standard components.
    \item[] Guidelines: 
    \begin{itemize}
        \item The answer \answerNA{} means that the core method development in this research does not involve LLMs as any important, original, or non-standard components.
        \item Please refer to our LLM policy in the NeurIPS handbook for what should or should not be described.
    \end{itemize}

\end{enumerate}

\end{document}